\documentclass{article} % For LaTeX2e
\usepackage{iclr2027_conference,times}
\iclrfinalcopy
\usepackage{multirow}
\usepackage[T1]{fontenc}
\usepackage[draft]{hyperref}
\usepackage{url}
\usepackage{booktabs}
\usepackage{amsfonts}
\usepackage{bbm}
\usepackage{nicefrac}
\usepackage{microtype}
\usepackage[table]{xcolor}
\usepackage{adjustbox}
\usepackage{adjustbox}
\usepackage{amsmath}
\usepackage{amssymb}
\usepackage{amsthm}
\usepackage{graphicx}
\usepackage{subcaption}
\usepackage{arydshln}
\usepackage[ruled,vlined]{algorithm2e}
\newtheorem{lemma}{Lemma}

\usepackage{wrapfig}
\usepackage{bbm}
\theoremstyle{definition}
\newtheorem{definition}{Definition}
\theoremstyle{plain}
\newtheorem{proposition}{Proposition}

\newcommand{\namelong}{SmoothOperator} 
\newcommand{\name}{SmoothOP} 
\newcommand{\quantity}{prominence} 
\newcommand{\Quantity}{Prominence} 

\newcommand{\supconmix}{ConOSR}
\newcommand{\supcon}{SupCON}
\newcommand{\sphormix}{SpHOR}
\newcommand{\sphor}{SpHOR-Lite}

\definecolor{gainred}{HTML}{0070C0}
\newcommand{\gaincolor}{blue}

\newcommand{\meanfield}{Batch mean}
\newcommand{\reverse}{Reversed Assignment}

\newcommand{\observerclassifier}{Observer classifier}

\title{\namelong: Enhancing Representations for Fine-grained Open-set Recognition via Modulated Label Smoothing}

\iclrfinalcopy

\author{Thiru Thillai Nadarasar Bahavan, Yu Xia, Sachith Seneviratne \& Saman Halgamuge \\
The University of Melbourne \\
Parkville, VIC, Australia \\
\texttt{bahavant@student.unimelb.edu.au},  \\ 
\texttt{\{yu.xia8, sachith.seneviratne,saman.halgamuge\}@unimelb.edu.au}
}

\iclrfinalcopy % Uncomment for camera-ready version, but NOT for submission.

\begin{document}
\pagenumbering{gobble}
\maketitle

Open Set Recognition (OSR) aims to enable models to accurately classify known classes while rejecting samples from unseen classes. A key challenge in OSR lies in the inability to model the unbounded
distribution of unknown classes during training, often leading to the misclassification of samples from these classes. Rather than modeling unknowns, recent work shapes the feature space so that known classes are compact and well separated, and spherical representation learning methods have achieved strong results this way. Label smoothing has been identified as one of the key drivers of this success, yet it applies the same coefficient to every training sample, regardless of how well each sample is already embedded. We show that the spherical representation learning objectives used in OSR share a single alignment--uniformity structure in which labels enter only through the alignment term. Label smoothing therefore acts as an alignment dial, and a fixed coefficient sets this dial to the same value for every sample. We propose a plug-in, SmoothOperator (SmoothOP), which sets the smoothing coefficient of each sample from its \textbf{prominence}, an embedding-space signal measuring how clearly the sample's own class stands out against its strongest competing class. Our method integrates into four existing spherical representation learning methods at minimal training overhead. SmoothOP assigns strong smoothing to samples with high prominence, which reduces their alignment and relaxes their pull. Conversely, samples with low prominence receive reduced smoothing, which translates to increased alignment, pulling them to the class center. Empirically, SmoothOP yields more compact class clusters and, despite modifying only the alignment term, better dispersed ones, which together improve OSR performance. On the Semantic Shift Benchmark, SmoothOP-augmented variants generally outperform their base objectives across datasets, degrees of semantic shift, and OSR post-processors, with gains of up to 4.7\% in AUROC, OSCR, and closed-set accuracy.

\section{Introduction}

Traditional machine learning classifiers typically operate under a closed-set assumption, where the model assumes all test samples belong to one of the classes encountered during training. In real-world scenarios, however, this assumption often fails as models frequently encounter ``unknown'' classes not present in the initial training set. For instance, a diagnostic tool trained to recognize five specific skin pathologies might erroneously force a sixth, unseen condition into one of its existing categories. Open Set Recognition (OSR) mitigates this risk by enabling systems to correctly identify and label samples from novel classes as ``unknown'' while preserving high classification performance on the original known data \citep{scheirer2013open}.

The main challenge in OSR is that, by definition, unknown classes are not available during training; yet, the model must still reject them during inference. To tackle this problem, earlier OSR literature has followed two main approaches. One approach is to explicitly represent or synthesize the unknown~\citep{bendale2016towards, Zhou2021LearningPF, chen2020learning, chen2021adversarial, Neal_2018_ECCV, kong2022opengan}. Another approach models the known classes with reconstruction-based or generative models~\citep{oza2019c2ae, sun2020open, huang2023class} (Sec.~\ref{sec:relatedwork}). However, \citet{vaze2022openset} showed that standard closed-set classifiers can match or exceed dedicated OSR methods when their closed-set performance is optimized. These results stem from the training procedure rather than from any OSR-specific component, suggesting that the learned feature representation plays a central role in open-set performance~\citep{Dietterich2022FamiliarityHypothesis, sun2022out, sphor}. In terms of \citet{scheirer2013open}'s open-set risk framework, this translates to feature geometry: known classes should be well separated and occupy a compact volume in the feature space, thereby reserving the remaining regions for unknowns.

Building on this, recent OSR methods explicitly shape the feature geometry through representation learning. A model is first trained with a spherical representation learning objective, either tailored to OSR~\citep{conosr, sphor} or general-purpose~\citep{khosla2020supervised}, and a classifier is then fine-tuned on top of the learned representation for classification and unknown rejection. Label smoothing has been identified as one of the performance drivers during the representation learning stage~\citep{sphor,conosr}, yet it applies a single fixed coefficient to every training sample. To understand how smoothing acts on these objectives, we decompose them into two terms~\citep{wang2020understanding}: an \emph{alignment} term, which pulls each sample toward its own class and makes classes compact, and a \emph{uniformity} term, which spreads samples across the hypersphere and keeps classes apart. We find that these objectives share a common structure: the uniformity term does not depend on the ground-truth labels, and the labels only determine how strongly each sample is pulled toward each class within the alignment term (Prop.~\ref{prop:structure}). Since label smoothing modifies only the labels, it acts purely on alignment. Increasing the smoothing coefficient shifts part of this alignment pull from its own class to the other classes, weakening its alignment while leaving the uniformity term unchanged (Prop.~\ref{prop:dial}). Label smoothing is thus an alignment dial, and a fixed coefficient sets this dial to the same value for every sample.

This motivates \name{}, a plug-in that sets the smoothing coefficient for each sample individually. The coefficient is controlled by \quantity{}, an embedding-space signal measuring how clearly a sample's own class stands out against its strongest competing class. Samples with high \quantity{} are already clearly separated from competing classes, so they receive strong smoothing, which reduces their alignment and helps prevent overfitting. Samples with low \quantity{} lie close to a competing class and receive weak smoothing, which strengthens their alignment and pulls them toward their class center. \name{} integrates into existing spherical representation learning methods with minimal training cost and yields class clusters that are more compact and better dispersed.

In this work, we make the following contributions:
\begin{itemize}
\item We show that the contrastive and spherical objectives used in current OSR methods share a single alignment–uniformity decomposition in which labels enter only through the alignment term, and that label smoothing modifies that term alone.
\item Based on those formal arguments, we propose \name{}, a plug-in that sets the smoothing coefficient for each sample from from \quantity{}, which measures how clearly the sample's own class stands out against its strongest competing class. \name{} integrates into four existing spherical representation learning objectives at minimal training cost. 
\item We empirically validate that our plug-in yields class clusters that are more compact and better dispersed. 
\item On the Semantic Shift Benchmark~\citep{vaze2022openset}, our \name{}-augmented variants generally outperform their base objectives across different datasets, degrees of semantic shift, and OSR post-processors, with gains of up to 4.7\% in AUROC, OSCR, and accuracy.
\end{itemize}

\section{Related Work}
\label{sec:relatedwork}

The literature on Open-Set Recognition (OSR) broadly distinguishes between two families of methods: discriminative and generative~\citep{geng2020recent}. Discriminative methods aim to enhance classifiers to correctly identify known classes while simultaneously flagging inputs from unknown classes; representative examples include OpenMax~\citep{bendale2016towards} and PROSER~\citep{Zhou2021LearningPF}. A related line of discriminative work uses class prototypes as anchors for decision boundaries~\citep{Yang2020Convolutional}, while others define reciprocal points exclusion zones in feature space that bound known class regions to tighten the open-space boundary~\citep{chen2020learning, chen2021adversarial}. A shared weakness of these approaches is their difficulty characterizing the full extent of open space, compounded by their reliance on Euclidean geometry~\citep{Wang2024ExploringDR} (discussed further in Sec~\ref{prem}). Generative approaches instead learn the data distribution or manifold structure and detect unknowns either through synthetic sample generation~\citep{Neal_2018_ECCV, kong2022opengan} or via reconstruction error~\citep{oza2019c2ae, huang2023class, sun2020open}. Some methods blend both paradigms~\citep{perera2020generative}. Despite their effectiveness, generative and hybrid methods carry a high computational cost and can be unreliable when out-of-distribution samples bear close resemblance to known classes~\citep{vaze2022openset, huang2023class}. More recently, representation learning has emerged as a promising direction for OSR~\citep{conosr}. Vaze et al.~\citep{vaze2022openset} demonstrated that well-calibrated logit-based classifiers can rival dedicated OSR systems, attributing OSR gains primarily to stronger closed-set architectures and training procedures rather than OSR-specific design choices. 

\textbf{Semantic Shift vs Covariate Shift} Covariate shift changes the input distribution while keeping label semantics fixed. Semantic shift, the setting of OSR, changes the label space through novel classes while the input domain stays largely unchanged (formal definitions in Appendix~\ref{app:dataset_Details}). To isolate the effects of semantic shift from low-level covariate noise, the Semantic Shift Benchmark (SSB) \citep{vaze2022openset} was proposed\citep{vaze2022openset}. In these settings, unknown classes share significant visual attributes with known classes, requiring the model to learn highly precise semantic boundaries.

\textbf{Post-processors} In addition to training time-based methods, there are new OSR methods which can be integrated after the model is trained, which mainly affect changes in the score functions\citep{postmax,nnguide,sun2022knnood}. This is elaborated in Appendix D.3. While these methods are suitable for improving the model's ability to separate known from unknown data, they don't improve the model's accuracy or classification performance.

\textbf{Fine-grained OSR} Most OSR benchmarks involve coarse-grained category shifts, leaving the fine-grained regime comparatively understudied~\citep{finegrainedosr, vaze2022openset}. Lang et al.~\citep{finegrainedosr} address this gap by introducing a hierarchical adversarial training objective for fine-grained OSR; however, their method requires access to structured label hierarchies such as taxonomic annotations, restricting its applicability to domains where such metadata is readily available.

\section{Problem Formulation and Evaluation Metrics} \label{prem}
To begin, we define a supervised training set for a classification task as 
$\mathcal{D}_{\text{train}} = \{(\mathbf{x}_i, y_i)\}_{i=1}^{N} \subset \mathcal{X} \times \mathcal{C}$, where $\mathcal{X}$ is the image space and $\mathcal{C} =  \{1,\dots,C\}$, with $C :=\lvert \mathcal{C}  \rvert $, is the set of `known' classes. Each discrete class label $y_i \in \mathcal{C}$ admits a vector representation via a one-hot encoding function, yielding:
$\mathbf{y}_i = [y_{i1}, y_{i2}, \dots, y_{iC}]^\top \in \{0,1\}^{C}$, where the element corresponding to the ground-truth class $c = y_i$ is $y_{ic} = 1$ and $y_{ik} = 0$ for $k \neq c$. Both the one-hot encoded label $\mathbf{y}_i$ and any subsequent soft targets lie in the probability simplex $\Delta^{C-1} := \{\mathbf{g}\in\mathbb{R}^C_{\geq 0} : \sum_k g_k = 1\}$.

In the closed-set scenario, the model is evaluated on a test set where the labels are strictly drawn from the training categories:
$\mathcal{D}_{K} = \{(\mathbf{x}_j, y_j)\}_{j=1}^{M_K} \subset \mathcal{X} \times \mathcal{C}$. In this setting, the model returns a distribution over known classes as $p(y \mid \mathbf{x}, y \in \mathcal{C})$. Conversely, in Open-Set Recognition (OSR), test images may also originate from unseen classes $\mathcal{U}$, where $\mathcal{U}$ denotes the set of unknown classes and $\mathcal{C} \cap \mathcal{U} = \emptyset$.  The OSR-evaluation test set is a union of known and unknown samples: $\mathcal{D}_{\text{Open}} = \mathcal{D}_{K} \cup \mathcal{D}_{U}$. 
The unknown component is defined as:
$\mathcal{D}_{U} = \{(\mathbf{x}_j, y_j)\}_{j=1}^{M_U} \subset \mathcal{X} \times \mathcal{U}$.

\noindent \textbf{Scoring Function.}  In the open-set setting, the model provides both a class distribution $p(y \mid \mathbf{x}, y \in \mathcal{C})$ and a scoring function $\mathcal{S}(\mathbf{x})$ to indicate whether or not the test sample belongs to any of the known classes. The final prediction $\hat{y}$ is determined by an operational threshold $\theta$:

\begin{equation}
\label{eq:osr-pred}
\hat{y} = 
\begin{cases} 
\text{Known class given via,} \operatorname*{argmax}_{y \in \mathcal{C}} \, p(y \mid \mathbf{x}, y \in \mathcal{C}) & \text{if } \mathcal{S}(\mathbf{x}) \geq \theta, \\ 
\text{Unknown class} & \text{if } \mathcal{S}(\mathbf{x}) < \theta .
\end{cases}
\end{equation}

\section{Methodology}
\label{sec:method}

\citet{scheirer2013open} point out that the key to optimizing OSR performance is to simultaneously reduce the empirical classification risk on known-class data and the open-space risk, the risk of assigning a known class to regions in the representation space that lie far from any regions associated with known-class training data. A representation space with compact, well-separated class clusters satisfies this property and is therefore the desired geometry. Representation-based OSR methods enforce this geometry through a first-stage objective $\mathcal{L}_F$ that shapes the latent space (Table~\ref{tab:align_unif}).

\begin{wrapfigure}{r}{0.4\linewidth}
    \vspace{-\intextsep}
    \centering
    \includegraphics[width=\linewidth]{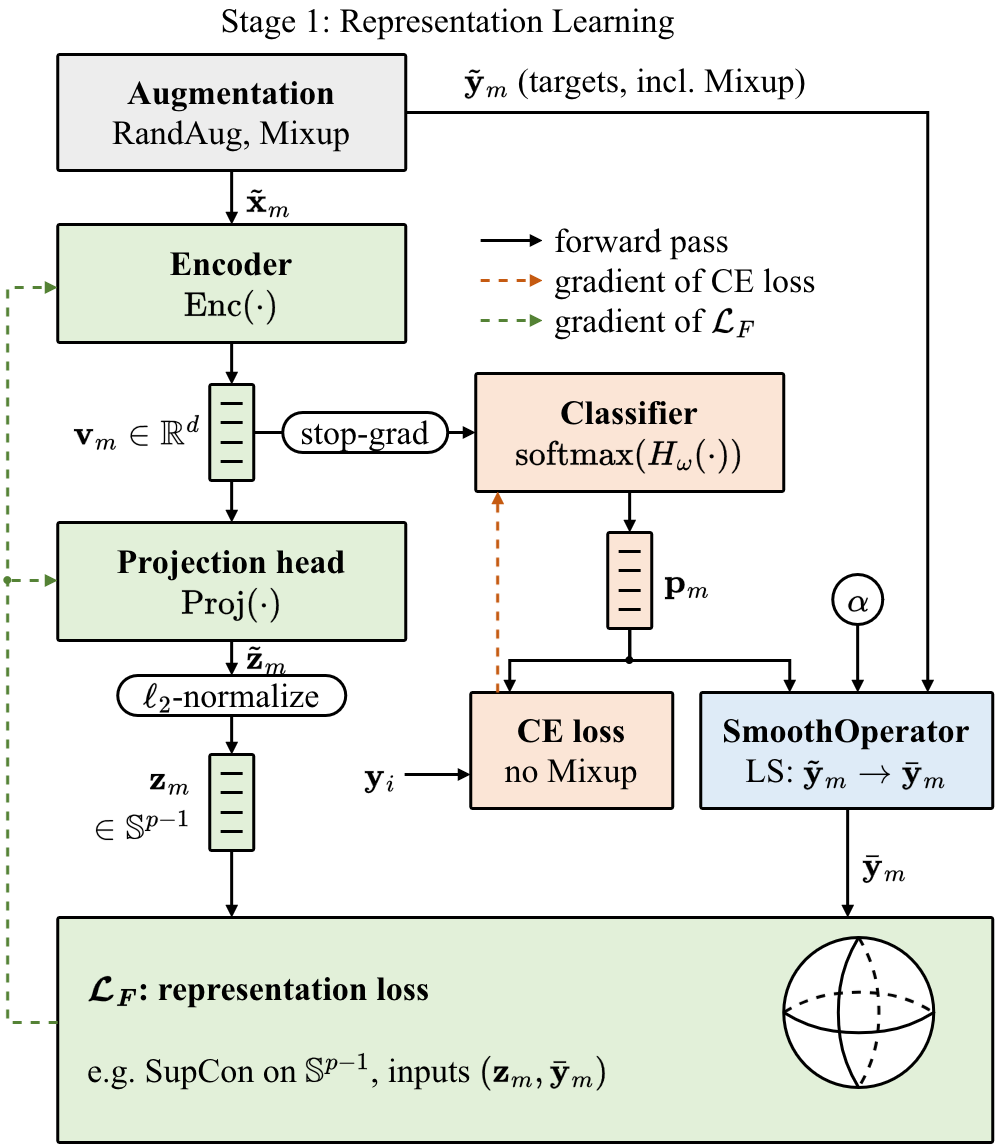}
    \caption{\textbf{Overview of the Proposed Framework.} A three-stage process in which an input sample passes through an augmentation pipeline before reaching the feature encoder. An Observer Classifier estimates \quantity{} from gradient-detached features to modulate label smoothing in the representation learning objective.}
    \label{fig:mainview}
\end{wrapfigure}  %{sec/images/main_method.png}

Concretely, OSR methods based on representation learning follow a three-stage pipeline. The first two stages are inherited from, and theoretically grounded in, self-supervised learning\citep{grill2020bootstrap, contra}. 
In the first stage, an encoder is trained to produce geometrically structured representations, guided by $\mathcal{L}_F$. During the second stage, a lightweight classifier is trained atop the frozen encoder to map representations to a class distribution $p(y \mid \mathbf{x}, y \in \mathcal{C})$, a procedure referred to as linear probing.  In the third stage, for the specific objective of OSR, which is different from closed-set classification, the inference phase comprises two steps: (a) computing a scoring function Eq.~\eqref{eq:osr-pred} $S(x)$ to detect unknown samples, typically via post-processors, and (b) classifying samples identified as known with the closed-set posterior. We intervene in the first stage with a plug-in, a form of sample-adaptive label smoothing algorithm, that plugs into the loss function and upgrades existing representation
objectives into stronger variants. Section~\ref{stage1} will detail the general learning framework, Section~\ref{sec:objectives} will give the theoretical motivation behind our method, and Section~\ref{sec: engineering} will detail the engineering behind the plug-in.

\subsection{General Representation Learning Framework}
\label{stage1}

Across OSR representation learning, the main differences are in how the loss function is constructed. So we first introduce a general framework and elaborate on the loss function design. This section will describe it from the point of view of data flow. 

\textbf{Stage 1:} \textit{RandAugment.} Every input batch $\{\mathbf{x}_i, \mathbf{y}_i\}_{i=1}^{B}$ is pre-processed using the RandAugment protocol \citep{cubuk2020randaugment} to produce $\{\mathbf{x'}_i, \mathbf{y}_i\}_{i=1}^{B}$.  Two label-altering operations then produce the soft targets central to our framework.

\textit{Label Smoothing.} Consistent with established baselines \citep{vaze2022openset}, we replace the one-hot target $\mathbf{y}_i$ with a smoothed target via the map $\mathrm{LS}_{\varepsilon} : \Delta^{C-1} \to \Delta^{C-1}$,
\begin{equation}
\mathbf{y}_i' := \mathrm{LS}_{\varepsilon}(\mathbf{y}_i) = (1 - \varepsilon)\,\mathbf{y}_i + \frac{\varepsilon}{C} \cdot \mathbf{1}, \qquad \varepsilon \in [0,1],
\label{eq:ls3}
\end{equation} 
where $\varepsilon$ is the label smoothing coefficient, and $\mathbf{1}$ is an all-ones vector.

\textit{Mixup.} For objectives that employ it (e.g., ConOSR, SpHOR), Mixup regularization \citep{zhang2018mixup} generates virtual samples.
\begin{equation}
\hat{\mathbf{x}}_{i} = \lambda \mathbf{x}'_{i} + (1 - \lambda) \mathbf{x}'_{\pi(i)}, \quad 
\hat{\mathbf{y}}_{i} = \lambda \mathbf{y}'_{i} + (1 - \lambda) \mathbf{y}'_{\pi(i)},
\end{equation}
where $\lambda \sim \mathrm{Beta}(2,2)$ is drawn per batch, $\pi(i)$ is a a random permutation of the batch. 

Both these operations map the simplex to itself, so the target remains a valid distribution. The final augmented batch, denoted as $\{(\tilde{\mathbf{x}}_m, \tilde{\mathbf{y}}_m)\}_{m=1}^{2B} :=
\{(\mathbf{x}'_i, \mathbf{y}'_i)\}_{i=1}^{B} \cup
\{(\hat{\mathbf{x}}_i, \hat{\mathbf{y}}_i)\}_{i=1}^{B}$, provides a unified set of original and synthetic samples on which the model is trained.  

\textit{Training, Encoding and Projection.} An encoder network $Enc(\cdot)$ is employed to map each sample into a $d$-dimensional representation $\mathbf{v}_m = \mathrm{Enc}(\tilde{\mathbf{x}}_m)$. The representations are mapped through a projection head $Proj(\cdot)$ to yield $p$-dimensional projections $\tilde{\mathbf{z}}_m = Proj(\mathbf{v}_m)$. The $\tilde{\mathbf{z}}_m$ are $L_2$ normalized to produce $\mathbf{z}_m$.  For brevity, we denote the composite mapping by $f$, i.e., \
$f(\mathbf{x}) \triangleq \mathrm{Proj}(\mathrm{Enc}(\mathbf{x})) \,/\,
\lVert \mathrm{Proj}(\mathrm{Enc}(\mathbf{x})) \rVert_2$, such that
$\mathbf{z}_m = f(\tilde{\mathbf{x}}_m) \in \mathbb{S}^{p-1}$. The encoder and projection head are trained jointly to minimize
the representation objective $\mathcal{L}_F$, computed on the normalized
projections. This reflects the standard contrastive learning protocol\citep{khosla2020supervised}.
We use four representation objectives: \textbf{ConOSR, SpHOR, Supervised Contrastive Learning with Label Smoothing (\supcon{}) and SpHOR without Mixup (\sphor{})}~\citep{khosla2020supervised,conosr,sphor}. The algorithm details and how \name{} is fused with it are detailed in the Appendix~\ref{app:repl_integration}.

\textbf{Stage 2: Linear Probing.}\label{stage2}
Once Stage 1 is complete, the projection head is discarded and the encoder $\text{Enc}(\cdot)$ is frozen. To enable class-level predictions for the known set $\mathcal{C}$, we append a linear classification head $H_{\omega}(\cdot)$ to the fixed $d$-dimensional embeddings $\mathbf{f}$. During this stage, we optimize the weights $\omega$ using a standard cross-entropy loss over the known classes: $\mathcal{L}_{\mathrm{CE}}(\omega)
= -\frac{1}{N}\sum_{i=1}^{N}\sum_{k\in\mathcal{C}}
y_{ik}\,\log\big[\mathrm{Softmax}\big(H_{\omega}(\mathbf{v}_i)\big)\big]_{k}$. No label smoothing in this stage.

\textbf{Stage 3: Open-Set Inference.}\label{stage3}
With $(\mathrm{Enc}, H_{\omega})$ frozen, the posterior in
Eq.~\eqref{eq:osr-pred} is computed as
$p(y \mid \mathbf{x}, y \in \mathcal{C}) =
\mathrm{Softmax}(H_{\omega}(\mathbf{v}))$,
$\mathbf{v}=\mathrm{Enc}(\mathbf{x})$, and the score
$\mathcal{S}(\mathbf{x})$ as a training-free post-processor on
$\mathbf{v},~p(y \mid \mathbf{x}, y \in \mathcal{C})$;
prediction then follows Eq.~\eqref{eq:osr-pred}.

\subsection{Representation Objectives: Alignment and Uniformity}
\label{sec:objectives}
Contrastive representation losses share a single structure. In this framework, a reference sample (the \emph{anchor}) is attracted to semantically similar samples (\emph{positives}) and repelled from semantically distinct ones (\emph{negatives}). The objectives differ in their choice of positives and negatives, the magnitude of attraction or repulsion and the inclusion of auxiliary regularization terms. While self-supervised methods focus on instance discrimination, supervised and
OSR-based methods extend this machinery to enforce class-level discrimination. The strong empirical performance of contrastive learning approaches is explained by two properties of projections: \textit{alignment}
and \textit{uniformity} \citep{wang2020understanding}.

Alignment ensures that for any reference sample acting as an \textit{anchor}, its semantic \textit{positives} are mapped to close proximity in the projection space, making the projections robust to noise. 

\begin{definition}[Alignment]
\label{def:align}
Formally, given a distribution of positive pairs $p_{\text{pos}}$, alignment is defined via expected distance:
\begin{equation}
    \mathcal{L}_{\text{align}}(f; \gamma) \triangleq \mathbb{E}_{(x,x^{+}) \sim p_{\text{pos}}} \left[ \| f(x) - f(x^{+}) \|_2^\gamma \right] = 2 + 2 \mathbb{E}_{(x,x^{+}) \sim p_{\text{pos}}} \left[ -f(x)^\top f(x^{+}) \right], \quad \gamma = 2.
\end{equation}
\end{definition}

Uniformity, on the other hand, dictates that the projections of samples drawn from the data distribution should be distributed roughly uniformly across the unit hypersphere $\mathbb{S}^{p-1}$, preventing representation collapse and retaining maximal information. 
\begin{definition}[Uniformity]
\label{def:unif}
For samples drawn i.i.d.\ from the data distribution $p_{\text{data}}$,
\begin{equation}
    \mathcal{L}_{\text{uniform}}(f; t) \triangleq \log \mathbb{E}_{x,x' \stackrel{\text{i.i.d.}}{\sim} p_{\text{data}}} \left[ e^{-t \| f(x) - f(x') \|_2^2} \right] = -2t + \log \mathbb{E}_{x,x' \stackrel{\text{i.i.d.}}{\sim} p_{\text{data}}} \left[ \exp\left(2t \, f(x)^\top f(x')\right) \right]
\end{equation}
\end{definition}

\begin{table}[t]
\centering
\small
\setlength{\tabcolsep}{5pt}
\renewcommand{\arraystretch}{2.1}
\begin{tabular}{@{}ccc c@{}}
\toprule
\textbf{Method} & \textbf{Target $S_{i\cdot}$} &
\textbf{Alignment $\mathcal{L}^{(i)}_{\mathrm{align}}$} &
\textbf{Uniformity $\mathcal{L}^{(i)}_{\mathrm{unif}}$} \\
\midrule
\supcon
& $S_{ij} = \dfrac{s(\tilde{\mathbf{y}}_i, \tilde{\mathbf{y}}_j)}{\sum_{n \neq i} s(\tilde{\mathbf{y}}_i, \tilde{\mathbf{y}}_n)}$
& $\dfrac{1}{2\tau} \sum\nolimits_{j \neq i} S_{ij}\, \lVert \mathbf{z}_i - \mathbf{z}_j \rVert_2^{2}$
& $\log \sum\nolimits_{n \neq i}\exp\!\left(-\dfrac{\lVert \mathbf{z}_i - \mathbf{z}_n \rVert_2^{2}}{2\tau}\right)$ \\
\supconmix
& $S_{ij} = \dfrac{s(\tilde{\mathbf{y}}_i, \tilde{\mathbf{y}}_j)}{\sum_{n \neq i} s(\tilde{\mathbf{y}}_i, \tilde{\mathbf{y}}_n)}$
& $\dfrac{1}{2\tau} \sum\nolimits_{j \neq i} S_{ij}\, \lVert \mathbf{z}_i - \mathbf{z}_j \rVert_2^{2}$
& $\log \sum\nolimits_{n \neq i}\exp\!\left(-\dfrac{\lVert \mathbf{z}_i - \mathbf{z}_n \rVert_2^{2}}{2\tau}\right)$ \\
\sphor
& $S_{ik} = \dfrac{\tilde{y}_{ik}}{\sum_{j=1}^{C} \tilde{y}_{ij}}$
& $\dfrac{1}{2\tau} \sum\nolimits_{k=1}^{C} S_{ik}\, \lVert \mathbf{z}_i - \boldsymbol{\mu}_k \rVert_2^{2}$
& $\log \sum\nolimits_{j=1}^{C}\exp\!\left(-\dfrac{\lVert \mathbf{z}_i - \boldsymbol{\mu}_j \rVert_2^{2}}{2\tau}\right)$ \\
\sphormix
& $S_{ik} = \dfrac{\tilde{y}_{ik}}{\sum_{j=1}^{C} \tilde{y}_{ij}}$
& $\dfrac{1}{2\tau} \sum\nolimits_{k=1}^{C} S_{ik}\, \lVert \mathbf{z}_i - \boldsymbol{\mu}_k \rVert_2^{2}$
& $\log \sum\nolimits_{j=1}^{C}\exp\!\left(-\dfrac{\lVert \mathbf{z}_i - \boldsymbol{\mu}_j \rVert_2^{2}}{2\tau}\right)$ \\
\bottomrule
\end{tabular}
\caption{Alignment--uniformity decomposition of the target-dependent loss of
each representation learning objective.  The target distribution $S_{ab}$, the sole carrier of the
label smoothing degree, appears only in the alignment term, while the
uniformity term is independent of the targets. This is detailed in Appendix~\ref{supp:au_decomposition}.    Here $s(\cdot,\cdot)$ is cosine similarity and $\tilde{\mathbf{y}}_i$ is the label-smoothed target
}.
\label{tab:align_unif}
\end{table}

We first decompose all four objectives into alignment and uniformity, as shown in Table~\ref{tab:align_unif}. This yields the following proposition. 

\begin{proposition}[Shared structure]
\label{prop:structure}
Let $\mathcal{L}_i$ be the target-dependent loss of any objective in
Table~\ref{tab:align_unif}, with attraction weights $S_{i\cdot}$
satisfying $\sum_{a} S_{ia} = 1$ over the method's contrast set. Then
$\mathcal{L}_i$ decomposes exactly as
$\mathcal{L}_i = \mathcal{L}^{(i)}_{\mathrm{align}} +
\mathcal{L}^{(i)}_{\mathrm{unif}}$, where
$\mathcal{L}^{(i)}_{\mathrm{unif}}$ is independent of the targets
$\tilde{\mathbf{y}}$, and $\mathcal{L}^{(i)}_{\mathrm{align}}$ depends on
the targets only through $S_{i\cdot}$.
\end{proposition}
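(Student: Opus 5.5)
The plan is to write each objective's target-dependent loss in its native cross-entropy form and expand it algebraically. For the two contrastive objectives (\supcon{}, \supconmix{}) the per-anchor loss is
\begin{equation*}
\mathcal{L}_i = -\sum_{j\neq i} S_{ij}\,\log\frac{\exp(\mathbf{z}_i^\top\mathbf{z}_j/\tau)}{\sum_{n\neq i}\exp(\mathbf{z}_i^\top\mathbf{z}_n/\tau)}.
\end{equation*}
For the prototype objectives (\sphor{}, \sphormix{}) it is the analogous expression with the batch partners replaced by class centres $\boldsymbol{\mu}_k$, $k=1,\dots,C$. The common template is a soft cross-entropy between a target distribution $S_{i\cdot}$ over a contrast set $A_i$ and a softmax of scaled cosine similarities over the same set. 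I would prove the proposition once for this template and then instantiate it: $A_i=\{j\neq i\}$ with $\mathbf{w}_a=\mathbf{z}_j$, or $A_i=\{1,\dots,C\}$ with $\mathbf{w}_a=\boldsymbol{\mu}_k$.

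The key step is to split the log-softmax, using $\sum_{a} S_{ia}=1$ to pull the log-partition term out of the sum:
\begin{equation*}
\mathcal{L}_i = -\frac{1}{\tau}\sum_{a\in A_i} S_{ia}\,\mathbf{z}_i^\top\mathbf{w}_a + \log\sum_{a\in A_i}\exp(\mathbf{z}_i^\top\mathbf{w}_a/\tau).
\end{equation*}
On the unit sphere, $\mathbf{z}_i^\top\mathbf{w}_a = 1-\tfrac12\lVert\mathbf{z}_i-\mathbf{w}_a\rVert_2^2$. Substituting this identity into both terms gives the following.
\begin{itemize}
\item The first term becomes $-\tfrac1\tau+\tfrac{1}{2\tau}\sum_a S_{ia}\lVert\mathbf{z}_i-\mathbf{w}_a\rVert^2$, again using normalisation of $S$.
\item The second term becomes $\tfrac1\tau+\log\sum_a\exp(-\lVert\mathbf{z}_i-\mathbf{w}_a\rVert^2/(2\tau))$.
\end{itemize}
The constants $\pm 1/\tau$ cancel exactly. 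What remains is precisely $\mathcal{L}^{(i)}_{\mathrm{align}}+\mathcal{L}^{(i)}_{\mathrm{unif}}$ as listed in Table~\ref{tab:align_unif}. This is also consistent with Definitions~\ref{def:align} and~\ref{def:unif}, since the same cosine-to-distance identity is what appears there.

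Target-dependence is then read off directly. The uniformity term involves only $\mathbf{z}_i$, the $\mathbf{w}_a$ and $\tau$. The contrast set $A_i$ is fixed by the batch or the class set, not by labels, so $\mathcal{L}^{(i)}_{\mathrm{unif}}$ is independent of $\tilde{\mathbf{y}}$. In the alignment term, $\tilde{\mathbf{y}}$ enters only through the coefficients $S_{ia}$, which are the cosine-normalised label similarities for the contrastive methods and the normalised soft target for SpHOR.

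The main obstacle is justifying the modelling conventions rather than the algebra. I expect three points to need care.
\begin{itemize}
\item The normalisation $\sum_a S_{ia}=1$ must hold. This needs $\sum_{n\neq i}s(\tilde{\mathbf{y}}_i,\tilde{\mathbf{y}}_n)>0$, which holds because smoothed targets are strictly positive vectors.
\item The prototypes $\boldsymbol{\mu}_k$ must lie on the sphere and must not be functions of the labels within the loss. If the method estimates them from labelled features, I would treat them as fixed (stop-gradient) parameters, as the statement's ``target-dependent loss'' suggests.
\item Auxiliary regulariser terms in the original methods must be excluded. These are not part of the target-dependent loss and can be set aside by the phrase ``target-dependent''.
\end{itemize}
I would state these conventions explicitly and defer the per-method bookkeeping to Appendix~\ref{supp:au_decomposition}.
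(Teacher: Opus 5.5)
Your proposal is correct and follows the paper's proof essentially step for step. Like the paper, you expand the soft cross-entropy, use $\sum_a S_{ia}=1$ to split off the log-partition term, and substitute $\mathbf{z}_i^\top\mathbf{w}_a = 1-\tfrac12\lVert\mathbf{z}_i-\mathbf{w}_a\rVert_2^2$ so that the $\pm 1/\tau$ constants cancel; you then read off target dependence, with regularisers such as $\mathcal{R}_{\mathrm{Ortho}}$ set aside as target-free. The only difference is presentational: you run the computation once on a generic contrast set $A_i$ and instantiate it, whereas the paper writes out the \sphor{} and \supcon{} derivations separately.
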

We leverage this decoupling to our advantage. Because label smoothing acts
solely on the targets $\tilde{\mathbf{y}}_i$, it exclusively modulates
alignment. The table and the proposition are detailed in
Appendix~\ref{supp:au_decomposition}.

\begin{proposition}[Label smoothing as an alignment dial]
\label{prop:dial}
Under the conditions of Prop.~\ref{prop:structure}, applying \name{} to the
anchor targets leaves $\mathcal{L}^{(i)}_{\mathrm{unif}}$ unchanged and
modifies $\mathcal{L}^{(i)}_{\mathrm{align}}$ only through $S_{i\cdot}$.
For the \sphor{}/\sphormix{}, increasing $\varepsilon_i$ transfers
attraction mass $\varepsilon_i(1-1/C)$ from the ground-truth label embedding
uniformly onto the others; for the \supcon{}/\supconmix{} with
one-hot contrast labels, each positive's weight decreases and each
negative's weight increases strictly in $\varepsilon_i$.
\end{proposition}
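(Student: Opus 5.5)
The first claim follows directly from Prop.~\ref{prop:structure}. \name{} changes only the smoothing coefficient of anchor $i$, from the global $\varepsilon$ to a per-sample $\varepsilon_i$. The map $\mathrm{LS}_{\varepsilon_i}$ in Eq.~\eqref{eq:ls3} acts only on the target $\tilde{\mathbf{y}}_i$. It leaves untouched the projections $\mathbf{z}$, the class embeddings $\boldsymbol{\mu}_k$, and the temperature $\tau$. By Prop.~\ref{prop:structure}, $\mathcal{L}^{(i)}_{\mathrm{unif}}$ is a function of $\mathbf{z}$ and $\boldsymbol{\mu}$ alone, so it is invariant. The same proposition says $\mathcal{L}^{(i)}_{\mathrm{align}}$ depends on targets only through $S_{i\cdot}$, so any change propagates only through $S_{i\cdot}$.

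For \sphor{}/\sphormix{} I would compute directly. Since $\mathrm{LS}_{\varepsilon_i}$ maps the simplex to itself, $\sum_j \tilde y_{ij}=1$ and hence $S_{ik}=\tilde y_{ik}$. For \sphor{}, the one-hot case gives $S_{ic}=1-\varepsilon_i+\varepsilon_i/C$ for the ground-truth class $c$, and $S_{ik}=\varepsilon_i/C$ for $k\neq c$. The loss of mass at $c$ relative to $\varepsilon_i=0$ is therefore $\varepsilon_i(1-1/C)$. This mass is split equally among the $C-1$ other embeddings, each receiving $\varepsilon_i/C=\varepsilon_i(1-1/C)/(C-1)$, so the transfer is uniform as claimed. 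For \sphormix{}, I would take the transfer to be measured relative to the unsmoothed mixed target: $\mathrm{LS}$ is affine and commutes with convex combinations, so the same mass-transfer identity holds with $\mathbf{y}_i$ replaced by the mixed one-hot combination. One caveat is that \name{} gives the two mixed partners different $\varepsilon$'s. I would handle this by stating the result per anchor target before mixing, which is what "applying to anchor targets" refers to.

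For \supcon{}/\supconmix{} the steps are these. Contrast labels $\tilde{\mathbf{y}}_j$ are one-hot, and the anchor is $\tilde{\mathbf{y}}_i=(1-\varepsilon_i)\mathbf{e}_c+\frac{\varepsilon_i}{C}\mathbf{1}$. Its norm is $\|\tilde{\mathbf{y}}_i\|=\sqrt{(1-\varepsilon_i)^2+2(1-\varepsilon_i)\varepsilon_i/C+\varepsilon_i^2/C}$. For a positive $j$ (label $c$) the raw similarity is $s_+=(1-\varepsilon_i+\varepsilon_i/C)/\|\tilde{\mathbf{y}}_i\|$. For a negative $j$ it is $s_-=(\varepsilon_i/C)/\|\tilde{\mathbf{y}}_i\|$. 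The norm cancels in $S_{ij}$. Letting $P$ and $Q$ be the numbers of positives and negatives in the contrast set, and $a=1-\varepsilon_i+\varepsilon_i/C$, $b=\varepsilon_i/C$, we get $S_{ij}=a/(Pa+Qb)$ for positives and $b/(Pa+Qb)$ for negatives. Writing $r=b/a$, these become $1/(P+Qr)$ and $r/(P+Qr)$. Now $r=\varepsilon_i/(C-(C-1)\varepsilon_i)$ is strictly increasing on $[0,1]$. Hence the positive weight is strictly decreasing and the negative weight $r/(P+Qr)=1/(P/r+Q)$ is strictly increasing, provided $P\ge1$ and $Q\ge1$.

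The main obstacle is the \supconmix{} case and the degenerate batches. With Mixup, contrast targets are mixed soft labels rather than one-hot, so the clean ratio argument above needs the statement's restriction to one-hot contrast labels. I would invoke that hypothesis explicitly and remark that for mixed contrast labels the monotonicity may fail. I also need to flag that strictness requires the batch to contain at least one positive and at least one negative, which I would add as a standing assumption.
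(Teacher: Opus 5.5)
Your proposal is correct and follows essentially the paper's route. The invariance claim is read off Prop.~\ref{prop:structure}, and the \sphor{} transfer is the direct one-hot computation with $S_{ik}=\tilde y_{ik}$. Your \supcon{} argument (cancel the anchor norm, then use that $r=b/a$ is strictly increasing) is Lemma~\ref{lem:norm} plus Props.~\ref{prop:supcon}--\ref{prop:monotone}: the paper's ratio $1+C(1-\varepsilon_i)/\varepsilon_i$ is your $1/r$, and its condition $1\le n_i^{+}<K$ is your $P,Q\ge 1$.

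On the Mixup caveat: two partners receiving different coefficients does not happen in the paper. Its margin blending applies a single $\varepsilon^{\lambda}_{i,j}$ to the mixed target, so $\mathrm{LS}$ does commute with the convex combination there. In any case, the $\varepsilon_i(1-1/C)$ transfer from the ground-truth embedding is, in both your argument and the paper's, a statement about one-hot anchors.
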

Proof in Appendix~\ref{app:repl_integration}
(Props.~\ref{prop:supcon},~\ref{prop:monotone}).

\subsection{\Quantity{} and Adaptive Label Smoothing}\label{sec: engineering}
Our approach consists of two main components: a per-sample score that quantifies how a sample's own class stands out against its strongest competing class, and a bounded mapping function that translates this score into an individual smoothing coefficient $\varepsilon_i$.

To compose a signal related to the structure of the embedding geometry, we simultaneously train an \textbf{\observerclassifier{}} $H_{\omega}(\cdot)$ (A linear head, which is reused during the Second Stage) using cross-entropy on the gradient-detached embeddings $\mathbf{v_m}$. (so it doesn't interfere with the training dynamics and prevents circularity). This supplies the evolving predicted class probabilities $\mathbf{p}^{(i)} = [p_{1}^{(i)}, \dots, p_{C}^{(i)}]^\top$.

\begin{definition}[\Quantity{}]
\label{def:align2}
Let $\mathbf{x}_i$ be a sample with ground-truth class $c$, let $p_{c}^{(i)}$ be the observer's predicted probability for $c$, and let $\hat{p}^{(i)} = \max_{k \in \mathcal{C} \setminus \{c\}} p_{k}^{(i)}$ be the highest probability among the competing classes. We define the probability margin $u_{i} \in [-1,1]$ as
\begin{equation}
u_{i} := \hat{p}^{(i)} - p_{c}^{(i)}.
\end{equation}
A sample is \emph{prominent} when its own class clearly stands out against its strongest competitor, i.e., when $u_i$ is low; its \quantity{} decreases as $u_i$ increases. Conversely, a sample with $u_i$ near or above zero is ambiguous: it is similar to its own class but also to at least one competing class, and can easily be misclassified.
\end{definition}

Note that this margin has two modes of behavior depending on whether the classifier classifies the sample correctly or not:
\begin{itemize}
    \item \textit{Correct Prediction:} the predicted class is $j$, so $\hat{p}^{(i)} \leq p_{j}^{(i)}$
and $u_{i}$ are negative. As the model grows more confident, it assigns more weight to the true class, and $u_{i}$ decreases towards $-1$. Effectively, the
sample has high \quantity{}.
\item \textit{Incorrect Prediction:} the maximum probability belongs to a class other than $j$,
so $p_{j}^{(i)} < \hat{p}^{(i)}$ and $u_{i}$ is positive. A margin near zero means the model
cannot separate the true class from a competing one, which is genuine ambiguity. A large
positive margin is a confidently wrong prediction. Effectively, as $u_{i}$ \emph{increases},
the sample has low \quantity{}.
\end{itemize}

If compact, well-separated clusters are the desired geometry, samples
that could be confused with another class are where it breaks down:
they sit at the edge of their cluster, close to a competing one, and
are the most likely to be misclassified. These are exactly the samples
with low \quantity{}. Samples with high \quantity{} already fit this
geometry, so it is those with low \quantity{} that need a stronger
pull toward their class. This motivates our algorithm design, which is presented below. 

To make label smoothing adaptive, we convert the smoothing coefficient into a per-sample basis:

\begin{equation}
\mathbf{y_i}' := \mathrm{\name{}}(\mathbf{y}_i) = (1 - \varepsilon_i)\,\mathbf{y}_i + \frac{\varepsilon_i}{C} \cdot \mathbf{1}, \qquad \varepsilon \in [0,1],
\label{eq:ls1}
\end{equation} 
We modulate this per-sample smoothing coefficient based on the sample's corresponding \quantity{} as follows:
\begin{equation}
\varepsilon_i \;=\; \varepsilon \cdot 2\operatorname{sigmoid}(-|\alpha| u_i)
\label{eq:lssubeq}
\end{equation}

Here, $\varepsilon$ is the original default label smoothing coefficient, except that it is modulated by the \quantity{}. $\alpha$ is the modulating strength.

In this setup, the degree of smoothing reduces with \quantity{}. For incorrectly classified, difficult samples that have high $u$, the degree of smoothing is less. However, for correctly classified, easy samples, the degree of smoothing is high. Thus, the strength of the correction is based on the $\alpha$ value.

\begin{proposition}[Properties of the sigmoid modulator]
\label{prop:sigmoid}
Let $\varepsilon_i = 2\varepsilon\,\sigma(-|\alpha| u_i)$ with
$\sigma(t)=(1+e^{-t})^{-1}$, $\varepsilon\in(0,1)$, $\alpha\ge 0$. Then:
(i)~\emph{Boundedness:} $\varepsilon_i\in(0,2\varepsilon)$ for all
$u_i\in\mathbb{R}$; in particular $\alpha=0$ yields
$\varepsilon_i\equiv\varepsilon$, recovering fixed smoothing
(Eq.~\ref{eq:ls1}).
(ii)~\emph{Monotonicity:} for $\alpha>0$, $\varepsilon_i$ is strictly
decreasing in $u_i$.
\end{proposition}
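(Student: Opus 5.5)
The whole argument reduces to elementary properties of the logistic function $\sigma(t)=(1+e^{-t})^{-1}$, applied to the composition $u_i\mapsto -|\alpha|u_i\mapsto 2\varepsilon\,\sigma(-|\alpha|u_i)$. The plan is to first record three facts about $\sigma$ on $\mathbb{R}$:
\begin{itemize}
\item $0<\sigma(t)<1$, since $e^{-t}\in(0,\infty)$ gives $1+e^{-t}\in(1,\infty)$;
\item $\sigma(0)=1/2$;
\item $\sigma'(t)=\sigma(t)\bigl(1-\sigma(t)\bigr)>0$, so $\sigma$ is strictly increasing.
\end{itemize}
Both parts of Prop.~\ref{prop:sigmoid} then follow by composing these facts with the affine inner map.

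For \emph{boundedness}, I note that for any $u_i\in\mathbb{R}$ and $\alpha\ge 0$ the argument $-|\alpha|u_i$ is a real number. Hence $\sigma(-|\alpha|u_i)\in(0,1)$, and multiplying by $2\varepsilon>0$ gives $\varepsilon_i\in(0,2\varepsilon)$. For the special case $\alpha=0$, the argument is identically $0$, so $\varepsilon_i=2\varepsilon\cdot\tfrac12=\varepsilon$ for every sample. Substituting into Eq.~\ref{eq:ls1} recovers Eq.~\ref{eq:ls3}, i.e.\ fixed smoothing.

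I would also remark on the range of the smoothed target. Since the definition of \quantity{} gives $u_i\in[-1,1]$, we in fact have $\varepsilon_i\in[2\varepsilon\sigma(-|\alpha|),\,2\varepsilon\sigma(|\alpha|)]$, strictly inside $(0,2\varepsilon)$. This gives $\varepsilon_i\in[0,1]$, so the smoothed target stays in the simplex, whenever $\varepsilon\le 1/2$. That is only a side remark, not part of the claim.

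For \emph{monotonicity}, I differentiate with respect to $u_i$ by the chain rule:
\[
\frac{\partial \varepsilon_i}{\partial u_i} = -2\varepsilon|\alpha|\,\sigma(-|\alpha|u_i)\bigl(1-\sigma(-|\alpha|u_i)\bigr).
\]
For $\alpha>0$ and $\varepsilon>0$, each factor is strictly positive by the facts above, so the derivative is strictly negative everywhere. A function with strictly negative derivative on $\mathbb{R}$ is strictly decreasing by the mean value theorem.

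Alternatively, avoiding calculus, I can argue directly from order. If $u<u'$, then $-|\alpha|u>-|\alpha|u'$ because $|\alpha|>0$, and strict monotonicity of $\sigma$ gives $\varepsilon_i(u)>\varepsilon_i(u')$.

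There is no genuine obstacle. The only points needing care are to use the \emph{strict} positivity of $|\alpha|$ and $\varepsilon$ so that the monotonicity is strict, and to check that $\alpha=0$ is handled by the boundedness clause rather than by (ii), where the map is constant.
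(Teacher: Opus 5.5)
Your proposal is correct and matches what the paper intends: the paper gives no separate written proof of Prop.~\ref{prop:sigmoid} and treats it as immediate from $0<\sigma<1$, $\sigma(0)=\tfrac12$ and the strict monotonicity of $\sigma$, which is exactly what you verify, including the strict positivity of $|\alpha|$ and $\varepsilon$ needed for strictness. Your side remark is also correct: $\varepsilon\le\tfrac12$ suffices for $\varepsilon_i\le 1$, so the smoothed target stays in $\Delta^{C-1}$ as Eq.~\ref{eq:ls1} requires.
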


\label{section:epsblender}
\label{mixupstage} While \quantity{} calculation is straightforward for individual samples, characterizing the \quantity{} of a synthetic sample generated via Mixup \citep{zhang2018mixup} presents a unique challenge, because it has two target classes. We hypothesize that the \quantity{} of a composite sample is a function of the constituent samples'  \quantity{}, propagated through the mixing manifold. Consistent with the linear interpolation principle of Mixup, we define the composite  \quantity{} $U(\mathbf{\hat{x}}_{k})$ as a weighted combination of the individual \quantity{} (We retrieve this via the indexes of the pre-mixed samples):
The constituent margins are read from the \emph{unmixed} forward pass and retrieved through
the mixing indices. 
\begin{equation}
\text{u}^{\lambda}_{i,j} = \lambda \cdot \text{u}_{i} + (1 - \lambda) \cdot \text{u}_{j} \qquad  \varepsilon^{\lambda}_{i,j} \;=\; 2\varepsilon \cdot
   \operatorname{sigmoid}\!\big(-|\alpha|\,\text{u}^{\lambda}_{i,j}\big) \in (0,\, 2\varepsilon),
\end{equation}
This composite margin induces a single smoothing coefficient, applied once to the mixed
target:
\begin{equation}
\hat{\mathbf{y}}_{k}^{'} =
  (1 - \varepsilon^{\lambda}_{i,j})\big(\lambda \mathbf{y}_{i} + (1-\lambda)\mathbf{y}_{j}\big)
  + \frac{\varepsilon^{\lambda}_{i,j}}{C} \cdot \mathbf{1},
\label{eq:lsmix1}
\end{equation}
where $\mathbf{y}_{i}, \mathbf{y}_{j}$ are the \emph{one-hot} targets of the constituent
samples. This is empirically ablated in Appendix~\ref{sec:epsblend}.

\section{Experimental Setup} \label{experimentsssection}

\textbf{Datasets.} We evaluate our approach on the three fine-grained datasets in Semantic Shift Benchmark~\citep{vaze2022openset}, which comprises three fine-grained classification datasets: CUB \citep{Wah2011CaltechUCSD}, FGVC-Aircraft \citep{Maji2013FineGrained}, and Stanford Cars \citep{Krause20133D}. Each dataset has an associated unknown class test sets (‘Easy’ and ‘Hard’) based on their semantic similarity to the training classes. Further details on the datasets are in Appendix~\ref{app:dataset_Details}.

\textbf{Hyperparameter Tuning.} Base smoothing $\lambda$ is 0.3 on CUB and 0.2 on Aircraft and Cars. This was selected directly from \cite{vaze2022openset} and we adopt these values without further tuning. We find the optimal $\alpha$ parameter for each dataset--method combination using a three-fold cross-validation on a training split, with images from different fine-grained datasets as validation unknowns. We take the $\alpha$ that maximizes mean OSCR under MSP. We defer the complete sweep grid to Appendix~\ref{app:Hyperparameterdetails}.

\paragraph{Training.} We use a ResNet-50 pretrained on the Places-365 dataset (via Supervised Training) for our experiments.  We optimize with SGD (momentum $0.9$, learning rate $0.05$,
weight decay $10^{-4}$). We train with distributed data parallelism on 4 GPUs (2 nodes $\times$ 2 GPUs), with 128 images per GPU, for 200 epochs. The learning rate follows
a cosine schedule with warm restarts: a 10-epoch linear warmup, then three cosine
cycles of 66 epochs each (two restarts), annealed to $10^{-3}\times$ the base rate.  The contrastive temperature is $0.1$. Feature dimension is 2048. The projector is a $\mathrm{Linear}(2048{\to}2048)$ layer. We augment four baseline representation methods, i.e., SpHOR\citep{sphor}, SpHOR without Mixup (\sphor), ConOSR \citep{conosr}, and Supervised Contrastive Learning (\supcon) \citep{khosla2020supervised}. All methods were trained over three runs using fixed seeds (0, 1, 2), with results reported as the mean across these trials for methodological rigor. The backbone's classifier head is retrained (with the backbone frozen) for 20 epochs in stage 2 before evaluations.  
Full hyperparameter settings (including Stage 2) are listed in Appendix~\ref{app:Hyperparameterdetails}.

 % The models were trained using Nvidia A100 GPUs with PyTorch.

\textbf{Evaluation and Metrics.} We evaluated all other methods using state-of-the-art post-processors/scoring functions, including MLS\citep{vaze2022openset} , nnGuide ($k=5$) \citep{nnguide}, PostMAX \citep{postmax}, and Maximum Softmax Probability (MSP). OSR performance is evaluated on two axes: how well unknowns can be rejected and how well knowns can be classified. The former is quantified using \textit{AUROC}. The latter is quantified by \textit{Closed-set accuracy}. Since strength on one axis can mask weakness on the other, neither suffices alone: Both axes are jointly evaluated by \textit{Open-Set Classification Rate (OSCR)}~\citep{OSCR}.  A detailed explanation/justification of the metrics is in Appendix~\ref{app:metrics} and post-processors is in Appendix~\ref{appendix:scoringfunction}.

\begin{table*}[!t]
\centering
\scriptsize
\caption{\textbf{Fine-grained OSR performance on the Semantic Shift Benchmark (SCARS, FGVC-Aircraft, and CUB): OSCR and AUROC} Each dataset has an "Easy" and "Hard" unknown class split\citep{vaze2022openset}. We designate OSCR as the primary metric. For each representation learning method (ConOSR\citep{conosr}, SpHOR\citep{sphor}, Supervised Contrastive Learning\citep{khosla2020supervised}) and its augmented variant (+), we evaluate performance across four SOTA scoring functions/postprocessors. Our augmented variants consistently outperform baselines across various metrics and datasets. All methods utilize a ResNet-50 backbone pre-trained on the Places dataset. Gains are in blue. The results are given as the mean $\pm$ sd over three training runs with fixed seeds.} 
\label{tab:osr_main}
\resizebox{\textwidth}{!}{
% generated by paper_build_xval/tables_xval.py -- do not edit
\begin{tabular}{l l rr rr rr rr rr rr}
\toprule
 &  & \multicolumn{6}{c}{OSCR} & \multicolumn{6}{c}{AUROC} \\
\cmidrule(lr){3-8}\cmidrule(lr){9-14}
 &  & \multicolumn{2}{c}{CUB} & \multicolumn{2}{c}{Aircraft} & \multicolumn{2}{c}{Cars} & \multicolumn{2}{c}{CUB} & \multicolumn{2}{c}{Aircraft} & \multicolumn{2}{c}{Cars} \\
\cmidrule(lr){3-4}\cmidrule(lr){5-6}\cmidrule(lr){7-8}\cmidrule(lr){9-10}\cmidrule(lr){11-12}\cmidrule(lr){13-14}
 &  & E & H & E & H & E & H & E & H & E & H & E & H \\

% \midrule
% \multicolumn{14}{l}{\textbf{Reference arms at $\lambda{=}0$ (no smoothing)}\quad\emph{pure SupCon exists on CUB only}} \\
% MSP & CE $\lambda{=}0$ & $59.54$ & $54.49$ & --- & --- & --- & --- & $74.85$ & $66.51$ & --- & --- & --- & --- \\
%  & ARPL $\lambda{=}0$ & $60.13$ & $55.99$ & $70.20$ & $69.39$ & $83.04$ & $75.12$ & $75.04$ & $68.23$ & $78.18$ & $76.54$ & $87.57$ & $78.27$ \\
 
%  & SupCon $\lambda{=}0$ & $59.54$ & $54.49$ & --- & --- & --- & --- & $74.85$ & $66.51$ & --- & --- & --- & --- \\
%  & DEF $\lambda{=}0$ & $60.13$ & $55.99$ & $70.20$ & $69.39$ & $83.04$ & $75.12$ & $75.04$ & $68.23$ & $78.18$ & $76.54$ & $87.57$ & $78.27$ \\
%  & DEF $+$ LS (Baseline) $^{\ddagger}$ & $55.76$ & $53.54$ & --- & --- & --- & --- & $71.07$ & $67.08$ & --- & --- & --- & --- \\
% \addlinespace[2pt]

\midrule

MSP & CE & $58.64\,{\pm}\,0.71$ & $55.75\,{\pm}\,0.44$ & $69.63\,{\pm}\,0.72$ & $67.97\,{\pm}\,0.45$ & $83.28\,{\pm}\,0.36$ & $76.48\,{\pm}\,0.25$ & $73.20\,{\pm}\,1.32$ & $68.42\,{\pm}\,0.24$ & $76.94\,{\pm}\,0.55$ & $75.13\,{\pm}\,1.05$ & $87.54\,{\pm}\,0.33$ & $79.53\,{\pm}\,0.38$ \\
 & \supcon (w/o LS) & $59.54\,{\pm}\,1.30$ & $54.49\,{\pm}\,1.17$ & $69.16\,{\pm}\,1.02$ & $68.46\,{\pm}\,0.82$ & $82.13\,{\pm}\,0.51$ & $75.61\,{\pm}\,0.79$ & $74.85\,{\pm}\,0.87$ & $66.51\,{\pm}\,1.18$ & $77.14\,{\pm}\,1.06$ & $75.53\,{\pm}\,0.72$ & $86.37\,{\pm}\,0.80$ & $78.75\,{\pm}\,0.82$ \\
% & DEF $\lambda{=}0$ & $60.13\,{\pm}\,1.69$ & $55.99\,{\pm}\,1.45$ & $70.20\,{\pm}\,1.12$ & $69.39\,{\pm}\,1.19$ & $83.04\,{\pm}\,1.16$ & $75.12\,{\pm}\,2.56$ & $75.04\,{\pm}\,1.32$ & $68.23\,{\pm}\,1.23$ & $78.18\,{\pm}\,1.56$ & $76.54\,{\pm}\,1.13$ & $87.57\,{\pm}\,0.67$ & $78.27\,{\pm}\,2.31$ \\
\hline
\multicolumn{14}{l}{\textbf{\supcon{}} \quad +\name{}: CUB~$-0.5$, Aircraft~$-4$, Cars~$-3$} \\
MSP & Baseline & $61.42\,{\pm}\,0.87$ & $56.64\,{\pm}\,0.84$ & $70.87\,{\pm}\,1.75$ & $66.93\,{\pm}\,2.34$ & $83.61\,{\pm}\,0.74$ & $75.36\,{\pm}\,1.46$ & $76.65\,{\pm}\,0.45$ & $68.76\,{\pm}\,0.29$ & $80.28\,{\pm}\,0.60$ & $74.82\,{\pm}\,1.34$ & $88.49\,{\pm}\,0.13$ & $78.70\,{\pm}\,1.07$ \\

 & +\name{} & $62.54\,{\pm}\,0.49$ & $57.15\,{\pm}\,0.85$ & $72.08\,{\pm}\,1.12$ & $69.26\,{\pm}\,0.97$ & $83.69\,{\pm}\,0.29$ & $75.60\,{\pm}\,0.91$ & $78.18\,{\pm}\,0.62$ & $69.34\,{\pm}\,0.98$ & $80.38\,{\pm}\,0.75$ & $76.63\,{\pm}\,1.58$ & $88.14\,{\pm}\,0.59$ & $78.69\,{\pm}\,0.90$ \\

 & gain & \textcolor{gainred}{$\mathbf{+1.82\%}^{*}$} & \textcolor{gainred}{$\mathbf{+0.91\%}$} & \textcolor{gainred}{$\mathbf{+1.70\%}$} & \textcolor{gainred}{$\mathbf{+3.49\%}$} & \textcolor{gainred}{$\mathbf{+0.09\%}$} & \textcolor{gainred}{$\mathbf{+0.32\%}$} & \textcolor{gainred}{$\mathbf{+2.00\%}^{*}$} & \textcolor{gainred}{$\mathbf{+0.85\%}$} & \textcolor{gainred}{$\mathbf{+0.12\%}$} & \textcolor{gainred}{$\mathbf{+2.43\%}$} & $\mathbf{-0.40\%}$ & $\mathbf{-0.01\%}$ \\
\addlinespace[2pt]

MLS & Baseline & $62.55\,{\pm}\,1.09$ & $56.16\,{\pm}\,0.71$ & $72.06\,{\pm}\,1.95$ & $67.09\,{\pm}\,2.39$ & $84.07\,{\pm}\,0.77$ & $75.58\,{\pm}\,1.41$ & $79.23\,{\pm}\,0.50$ & $68.65\,{\pm}\,0.09$ & $82.18\,{\pm}\,0.83$ & $75.40\,{\pm}\,1.41$ & $89.30\,{\pm}\,0.27$ & $79.09\,{\pm}\,0.99$ \\
 & +\name{} & $63.43\,{\pm}\,0.55$ & $56.67\,{\pm}\,0.83$ & $73.22\,{\pm}\,1.02$ & $69.21\,{\pm}\,1.13$ & $84.22\,{\pm}\,0.15$ & $75.72\,{\pm}\,0.74$ & $80.20\,{\pm}\,0.61$ & $69.17\,{\pm}\,1.00$ & $82.08\,{\pm}\,0.67$ & $76.88\,{\pm}\,1.77$ & $89.06\,{\pm}\,0.39$ & $78.99\,{\pm}\,0.74$ \\
 & gain & \textcolor{gainred}{$\mathbf{+1.41\%}$} & \textcolor{gainred}{$\mathbf{+0.91\%}$} & \textcolor{gainred}{$\mathbf{+1.60\%}$} & \textcolor{gainred}{$\mathbf{+3.15\%}$} & \textcolor{gainred}{$\mathbf{+0.19\%}$} & \textcolor{gainred}{$\mathbf{+0.19\%}$} & \textcolor{gainred}{$\mathbf{+1.22\%}^{*}$} & \textcolor{gainred}{$\mathbf{+0.76\%}$} & $\mathbf{-0.13\%}$ & \textcolor{gainred}{$\mathbf{+1.97\%}$} & $\mathbf{-0.27\%}$ & $\mathbf{-0.13\%}$ \\
\addlinespace[2pt]
NNGuide & Baseline & $52.05\,{\pm}\,1.53$ & $49.22\,{\pm}\,0.88$ & $68.93\,{\pm}\,2.52$ & $66.19\,{\pm}\,3.04$ & $77.99\,{\pm}\,2.75$ & $73.67\,{\pm}\,1.83$ & $67.08\,{\pm}\,1.30$ & $62.80\,{\pm}\,1.83$ & $77.72\,{\pm}\,1.78$ & $74.24\,{\pm}\,1.93$ & $82.80\,{\pm}\,2.94$ & $78.04\,{\pm}\,1.42$ \\
 & +\name{} & $54.19\,{\pm}\,0.65$ & $50.22\,{\pm}\,2.75$ & $70.75\,{\pm}\,1.97$ & $68.56\,{\pm}\,1.10$ & $79.51\,{\pm}\,1.45$ & $74.05\,{\pm}\,0.47$ & $69.43\,{\pm}\,0.70$ & $63.48\,{\pm}\,3.60$ & $78.99\,{\pm}\,2.13$ & $76.26\,{\pm}\,1.71$ & $84.29\,{\pm}\,1.57$ & $78.14\,{\pm}\,0.36$ \\
 & gain & \textcolor{gainred}{$\mathbf{+4.12\%}$} & \textcolor{gainred}{$\mathbf{+2.03\%}$} & \textcolor{gainred}{$\mathbf{+2.65\%}$} & \textcolor{gainred}{$\mathbf{+3.58\%}$} & \textcolor{gainred}{$\mathbf{+1.96\%}$} & \textcolor{gainred}{$\mathbf{+0.52\%}$} & \textcolor{gainred}{$\mathbf{+3.51\%}$} & \textcolor{gainred}{$\mathbf{+1.07\%}$} & \textcolor{gainred}{$\mathbf{+1.63\%}$} & \textcolor{gainred}{$\mathbf{+2.72\%}$} & \textcolor{gainred}{$\mathbf{+1.81\%}$} & \textcolor{gainred}{$\mathbf{+0.13\%}$} \\
\addlinespace[2pt]
PostMax & Baseline & $63.18\,{\pm}\,1.28$ & $55.85\,{\pm}\,0.78$ & $71.69\,{\pm}\,2.47$ & $64.47\,{\pm}\,0.83$ & $84.01\,{\pm}\,0.76$ & $76.08\,{\pm}\,0.71$ & $80.04\,{\pm}\,0.62$ & $68.36\,{\pm}\,0.20$ & $81.92\,{\pm}\,1.62$ & $72.63\,{\pm}\,0.68$ & $89.18\,{\pm}\,0.35$ & $79.56\,{\pm}\,0.59$ \\

 & +\name{} & $63.75\,{\pm}\,0.68$ & $56.53\,{\pm}\,0.36$ & $72.62\,{\pm}\,1.22$ & $67.51\,{\pm}\,0.79$ & $84.36\,{\pm}\,0.49$ & $75.31\,{\pm}\,0.99$ & $80.62\,{\pm}\,0.36$ & $69.03\,{\pm}\,0.73$ & $81.54\,{\pm}\,1.04$ & $75.08\,{\pm}\,1.19$ & $89.18\,{\pm}\,0.57$ & $78.57\,{\pm}\,1.02$ \\
 & gain & \textcolor{gainred}{$\mathbf{+0.90\%}$} & \textcolor{gainred}{$\mathbf{+1.22\%}$} & \textcolor{gainred}{$\mathbf{+1.30\%}$} & \textcolor{gainred}{$\mathbf{+4.70\%}^{*}$} & \textcolor{gainred}{$\mathbf{+0.42\%}$} & $\mathbf{-1.00\%}$ & \textcolor{gainred}{$\mathbf{+0.72\%}^{\dagger}$} & \textcolor{gainred}{$\mathbf{+0.98\%}$} & $\mathbf{-0.47\%}$ & \textcolor{gainred}{$\mathbf{+3.37\%}$} & $\mathbf{-0.00\%}$ & $\mathbf{-1.25\%}^{\dagger}$ \\
\addlinespace[2pt]
\midrule
\multicolumn{14}{l}{\textbf{\supconmix}\quad+\name{} : CUB~$-3$, Aircraft~$-1.5$, Cars~$-4$} \\
MSP & Baseline & $65.60\,{\pm}\,1.43$ & $59.80\,{\pm}\,1.19$ & $75.87\,{\pm}\,0.61$ & $71.62\,{\pm}\,1.01$ & $85.49\,{\pm}\,0.77$ & $77.91\,{\pm}\,0.75$ & $79.41\,{\pm}\,1.18$ & $70.59\,{\pm}\,0.80$ & $82.83\,{\pm}\,0.34$ & $77.70\,{\pm}\,1.22$ & $89.66\,{\pm}\,0.66$ & $80.89\,{\pm}\,0.65$ \\
 & +\name{} & $65.19\,{\pm}\,1.12$ & $59.86\,{\pm}\,1.29$ & $76.94\,{\pm}\,0.45$ & $72.09\,{\pm}\,0.65$ & $86.13\,{\pm}\,0.71$ & $78.17\,{\pm}\,0.92$ & $79.13\,{\pm}\,0.32$ & $70.92\,{\pm}\,0.62$ & $83.58\,{\pm}\,0.74$ & $77.72\,{\pm}\,0.98$ & $90.03\,{\pm}\,0.50$ & $80.93\,{\pm}\,0.80$ \\
 & gain & $\mathbf{-0.64\%}$ & \textcolor{gainred}{$\mathbf{+0.11\%}$} & \textcolor{gainred}{$\mathbf{+1.41\%}^{*}$} & \textcolor{gainred}{$\mathbf{+0.65\%}$} & \textcolor{gainred}{$\mathbf{+0.74\%}^{\dagger}$} & \textcolor{gainred}{$\mathbf{+0.34\%}$} & $\mathbf{-0.35\%}$ & \textcolor{gainred}{$\mathbf{+0.46\%}$} & \textcolor{gainred}{$\mathbf{+0.90\%}^{\dagger}$} & \textcolor{gainred}{$\mathbf{+0.03\%}$} & \textcolor{gainred}{$\mathbf{+0.41\%}$} & \textcolor{gainred}{$\mathbf{+0.05\%}$} \\
\addlinespace[2pt]
MLS & Baseline & $66.29\,{\pm}\,1.43$ & $58.80\,{\pm}\,0.96$ & $77.72\,{\pm}\,0.22$ & $71.01\,{\pm}\,1.24$ & $85.47\,{\pm}\,0.50$ & $77.72\,{\pm}\,0.72$ & $81.03\,{\pm}\,1.35$ & $69.85\,{\pm}\,0.59$ & $85.71\,{\pm}\,0.34$ & $77.56\,{\pm}\,1.50$ & $89.84\,{\pm}\,0.32$ & $80.88\,{\pm}\,0.62$ \\
 & +\name{} & $65.94\,{\pm}\,1.07$ & $58.91\,{\pm}\,0.94$ & $78.60\,{\pm}\,0.16$ & $71.48\,{\pm}\,0.77$ & $86.09\,{\pm}\,0.64$ & $78.01\,{\pm}\,1.01$ & $80.92\,{\pm}\,0.34$ & $70.29\,{\pm}\,0.30$ & $86.09\,{\pm}\,0.07$ & $77.52\,{\pm}\,1.12$ & $90.18\,{\pm}\,0.56$ & $80.93\,{\pm}\,0.87$ \\
 & gain & $\mathbf{-0.52\%}$ & \textcolor{gainred}{$\mathbf{+0.19\%}$} & \textcolor{gainred}{$\mathbf{+1.14\%}^{*}$} & \textcolor{gainred}{$\mathbf{+0.66\%}$} & \textcolor{gainred}{$\mathbf{+0.73\%}^{\dagger}$} & \textcolor{gainred}{$\mathbf{+0.37\%}$} & $\mathbf{-0.14\%}$ & \textcolor{gainred}{$\mathbf{+0.63\%}$} & \textcolor{gainred}{$\mathbf{+0.44\%}$} & $\mathbf{-0.05\%}$ & \textcolor{gainred}{$\mathbf{+0.38\%}$} & \textcolor{gainred}{$\mathbf{+0.06\%}$} \\
\addlinespace[2pt]
NNGuide & Baseline & $58.38\,{\pm}\,3.15$ & $51.32\,{\pm}\,2.30$ & $76.64\,{\pm}\,0.99$ & $70.65\,{\pm}\,1.50$ & $82.39\,{\pm}\,0.70$ & $76.27\,{\pm}\,0.34$ & $73.10\,{\pm}\,3.08$ & $63.02\,{\pm}\,1.82$ & $84.80\,{\pm}\,0.80$ & $77.31\,{\pm}\,1.77$ & $86.68\,{\pm}\,1.04$ & $79.68\,{\pm}\,0.36$ \\
 & +\name{} & $57.90\,{\pm}\,1.82$ & $51.19\,{\pm}\,1.58$ & $77.70\,{\pm}\,1.06$ & $71.01\,{\pm}\,0.60$ & $83.38\,{\pm}\,0.96$ & $76.84\,{\pm}\,0.70$ & $72.89\,{\pm}\,1.50$ & $63.30\,{\pm}\,0.78$ & $85.42\,{\pm}\,1.64$ & $77.19\,{\pm}\,0.92$ & $87.37\,{\pm}\,1.37$ & $79.96\,{\pm}\,0.51$ \\
 & gain & $\mathbf{-0.83\%}$ & $\mathbf{-0.25\%}$ & \textcolor{gainred}{$\mathbf{+1.38\%}^{*}$} & \textcolor{gainred}{$\mathbf{+0.50\%}$} & \textcolor{gainred}{$\mathbf{+1.20\%}$} & \textcolor{gainred}{$\mathbf{+0.74\%}$} & $\mathbf{-0.29\%}$ & \textcolor{gainred}{$\mathbf{+0.44\%}$} & \textcolor{gainred}{$\mathbf{+0.74\%}$} & $\mathbf{-0.15\%}$ & \textcolor{gainred}{$\mathbf{+0.80\%}$} & \textcolor{gainred}{$\mathbf{+0.35\%}$} \\
\addlinespace[2pt]
PostMax & Baseline & $66.58\,{\pm}\,1.43$ & $58.82\,{\pm}\,0.77$ & $78.66\,{\pm}\,0.13$ & $69.59\,{\pm}\,2.01$ & $85.59\,{\pm}\,0.18$ & $78.07\,{\pm}\,0.55$ & $81.37\,{\pm}\,1.37$ & $69.88\,{\pm}\,0.44$ & $86.94\,{\pm}\,0.70$ & $75.97\,{\pm}\,2.23$ & $89.96\,{\pm}\,0.08$ & $81.22\,{\pm}\,0.44$ \\
 & +\name{} & $66.25\,{\pm}\,1.14$ & $58.95\,{\pm}\,0.63$ & $79.31\,{\pm}\,0.74$ & $70.61\,{\pm}\,1.40$ & $86.20\,{\pm}\,0.40$ & $78.46\,{\pm}\,0.79$ & $81.25\,{\pm}\,0.46$ & $70.29\,{\pm}\,0.21$ & $86.99\,{\pm}\,0.84$ & $76.53\,{\pm}\,1.72$ & $90.28\,{\pm}\,0.63$ & $81.37\,{\pm}\,0.58$ \\
 & gain & $\mathbf{-0.49\%}$ & \textcolor{gainred}{$\mathbf{+0.22\%}$} & \textcolor{gainred}{$\mathbf{+0.82\%}$} & \textcolor{gainred}{$\mathbf{+1.46\%}$} & \textcolor{gainred}{$\mathbf{+0.71\%}^{\dagger}$} & \textcolor{gainred}{$\mathbf{+0.50\%}$} & $\mathbf{-0.15\%}$ & \textcolor{gainred}{$\mathbf{+0.59\%}$} & \textcolor{gainred}{$\mathbf{+0.05\%}$} & \textcolor{gainred}{$\mathbf{+0.74\%}$} & \textcolor{gainred}{$\mathbf{+0.36\%}$} & \textcolor{gainred}{$\mathbf{+0.18\%}$} \\
\addlinespace[2pt]
\midrule
\multicolumn{14}{l}{\textbf{\sphor}\quad+\name{}: CUB~$-8$, Aircraft~$-1.5$, Cars~$-0.5$} \\
MSP & Baseline & $63.99\,{\pm}\,1.35$ & $59.18\,{\pm}\,0.73$ & $69.10\,{\pm}\,1.94$ & $67.91\,{\pm}\,0.81$ & $84.27\,{\pm}\,0.22$ & $75.85\,{\pm}\,0.58$ & $77.96\,{\pm}\,1.08$ & $70.13\,{\pm}\,0.31$ & $75.50\,{\pm}\,1.98$ & $74.05\,{\pm}\,0.61$ & $88.20\,{\pm}\,0.20$ & $78.46\,{\pm}\,0.55$ \\
 & +\name{} & $65.24\,{\pm}\,0.98$ & $59.90\,{\pm}\,0.60$ & $69.36\,{\pm}\,1.25$ & $67.80\,{\pm}\,0.89$ & $84.47\,{\pm}\,0.18$ & $75.86\,{\pm}\,0.86$ & $79.25\,{\pm}\,0.57$ & $70.62\,{\pm}\,0.64$ & $75.76\,{\pm}\,1.55$ & $73.84\,{\pm}\,1.20$ & $88.35\,{\pm}\,0.21$ & $78.41\,{\pm}\,0.97$ \\
 & gain & \textcolor{gainred}{$\mathbf{+1.96\%}^{*}$} & \textcolor{gainred}{$\mathbf{+1.21\%}^{*}$} & \textcolor{gainred}{$\mathbf{+0.38\%}$} & $\mathbf{-0.16\%}$ & \textcolor{gainred}{$\mathbf{+0.23\%}$} & \textcolor{gainred}{$\mathbf{+0.01\%}$} & \textcolor{gainred}{$\mathbf{+1.65\%}^{\dagger}$} & \textcolor{gainred}{$\mathbf{+0.70\%}$} & \textcolor{gainred}{$\mathbf{+0.34\%}$} & $\mathbf{-0.28\%}$ & \textcolor{gainred}{$\mathbf{+0.16\%}$} & $\mathbf{-0.06\%}$ \\
\addlinespace[2pt]
MLS & Baseline & $64.70\,{\pm}\,1.24$ & $58.12\,{\pm}\,0.67$ & $70.28\,{\pm}\,1.82$ & $67.63\,{\pm}\,0.88$ & $84.31\,{\pm}\,0.24$ & $75.60\,{\pm}\,0.54$ & $79.88\,{\pm}\,0.92$ & $69.40\,{\pm}\,0.55$ & $77.42\,{\pm}\,1.83$ & $74.03\,{\pm}\,0.71$ & $88.47\,{\pm}\,0.14$ & $78.34\,{\pm}\,0.51$ \\
 & +\name{} & $65.72\,{\pm}\,0.97$ & $59.08\,{\pm}\,0.74$ & $70.50\,{\pm}\,1.26$ & $67.64\,{\pm}\,1.01$ & $84.55\,{\pm}\,0.25$ & $75.72\,{\pm}\,0.82$ & $80.49\,{\pm}\,0.57$ & $70.00\,{\pm}\,0.83$ & $77.64\,{\pm}\,1.55$ & $73.95\,{\pm}\,1.35$ & $88.59\,{\pm}\,0.21$ & $78.36\,{\pm}\,0.95$ \\
 & gain & \textcolor{gainred}{$\mathbf{+1.58\%}^{*}$} & \textcolor{gainred}{$\mathbf{+1.65\%}^{*}$} & \textcolor{gainred}{$\mathbf{+0.32\%}$} & \textcolor{gainred}{$\mathbf{+0.00\%}$} & \textcolor{gainred}{$\mathbf{+0.28\%}$} & \textcolor{gainred}{$\mathbf{+0.15\%}$} & \textcolor{gainred}{$\mathbf{+0.76\%}$} & \textcolor{gainred}{$\mathbf{+0.86\%}^{\dagger}$} & \textcolor{gainred}{$\mathbf{+0.28\%}$} & $\mathbf{-0.11\%}$ & \textcolor{gainred}{$\mathbf{+0.14\%}$} & \textcolor{gainred}{$\mathbf{+0.03\%}$} \\
\addlinespace[2pt]
NNGuide & Baseline & $61.16\,{\pm}\,1.31$ & $55.68\,{\pm}\,0.83$ & $66.05\,{\pm}\,2.32$ & $66.52\,{\pm}\,1.36$ & $83.20\,{\pm}\,0.23$ & $75.50\,{\pm}\,0.27$ & $75.83\,{\pm}\,1.40$ & $67.08\,{\pm}\,0.62$ & $71.81\,{\pm}\,2.47$ & $72.46\,{\pm}\,1.20$ & $87.82\,{\pm}\,0.30$ & $78.74\,{\pm}\,0.28$ \\
 & +\name{} & $61.91\,{\pm}\,0.98$ & $56.23\,{\pm}\,1.46$ & $66.32\,{\pm}\,2.01$ & $66.76\,{\pm}\,1.53$ & $83.30\,{\pm}\,0.18$ & $75.62\,{\pm}\,0.63$ & $76.03\,{\pm}\,0.93$ & $67.18\,{\pm}\,1.44$ & $72.10\,{\pm}\,2.45$ & $72.66\,{\pm}\,1.86$ & $87.69\,{\pm}\,0.42$ & $78.70\,{\pm}\,0.68$ \\
 & gain & \textcolor{gainred}{$\mathbf{+1.23\%}^{\dagger}$} & \textcolor{gainred}{$\mathbf{+1.00\%}$} & \textcolor{gainred}{$\mathbf{+0.41\%}$} & \textcolor{gainred}{$\mathbf{+0.36\%}$} & \textcolor{gainred}{$\mathbf{+0.12\%}$} & \textcolor{gainred}{$\mathbf{+0.16\%}$} & \textcolor{gainred}{$\mathbf{+0.26\%}$} & \textcolor{gainred}{$\mathbf{+0.15\%}$} & \textcolor{gainred}{$\mathbf{+0.40\%}$} & \textcolor{gainred}{$\mathbf{+0.29\%}$} & $\mathbf{-0.15\%}$ & $\mathbf{-0.06\%}$ \\
\addlinespace[2pt]
PostMax & Baseline & $65.02\,{\pm}\,1.13$ & $57.89\,{\pm}\,0.73$ & $73.11\,{\pm}\,1.39$ & $67.13\,{\pm}\,0.53$ & $84.26\,{\pm}\,0.47$ & $75.71\,{\pm}\,0.64$ & $80.58\,{\pm}\,0.76$ & $69.40\,{\pm}\,0.31$ & $80.98\,{\pm}\,1.39$ & $73.64\,{\pm}\,0.28$ & $88.44\,{\pm}\,0.15$ & $78.48\,{\pm}\,0.52$ \\
 & +\name{} & $65.91\,{\pm}\,0.86$ & $58.87\,{\pm}\,0.42$ & $73.14\,{\pm}\,1.40$ & $66.95\,{\pm}\,0.58$ & $84.45\,{\pm}\,0.49$ & $75.74\,{\pm}\,0.76$ & $80.88\,{\pm}\,0.47$ & $69.90\,{\pm}\,0.30$ & $80.95\,{\pm}\,1.63$ & $73.36\,{\pm}\,0.88$ & $88.52\,{\pm}\,0.27$ & $78.42\,{\pm}\,0.87$ \\
 & gain & \textcolor{gainred}{$\mathbf{+1.37\%}^{*}$} & \textcolor{gainred}{$\mathbf{+1.69\%}^{*}$} & \textcolor{gainred}{$\mathbf{+0.04\%}$} & $\mathbf{-0.27\%}^{*}$ & \textcolor{gainred}{$\mathbf{+0.22\%}$} & \textcolor{gainred}{$\mathbf{+0.04\%}$} & \textcolor{gainred}{$\mathbf{+0.37\%}$} & \textcolor{gainred}{$\mathbf{+0.72\%}^{\dagger}$} & $\mathbf{-0.04\%}$ & $\mathbf{-0.39\%}$ & \textcolor{gainred}{$\mathbf{+0.09\%}$} & $\mathbf{-0.08\%}$ \\
\addlinespace[2pt]
\midrule
\multicolumn{14}{l}{\textbf{\sphormix}\quad+\name{}: CUB~$-8$, Aircraft~$-2$, Cars~$-3$} \\
MSP & Baseline & $67.35\,{\pm}\,1.17$ & $61.70\,{\pm}\,0.58$ & $77.38\,{\pm}\,0.57$ & $72.12\,{\pm}\,0.30$ & $86.29\,{\pm}\,0.76$ & $78.67\,{\pm}\,1.26$ & $80.76\,{\pm}\,0.92$ & $72.24\,{\pm}\,0.24$ & $84.43\,{\pm}\,0.50$ & $78.02\,{\pm}\,0.36$ & $90.25\,{\pm}\,0.43$ & $81.39\,{\pm}\,1.03$ \\
 & +\name{} & $68.26\,{\pm}\,0.93$ & $62.20\,{\pm}\,0.50$ & $77.55\,{\pm}\,0.32$ & $72.04\,{\pm}\,0.42$ & $87.13\,{\pm}\,0.39$ & $79.05\,{\pm}\,0.53$ & $81.83\,{\pm}\,0.57$ & $72.62\,{\pm}\,0.36$ & $84.62\,{\pm}\,0.40$ & $77.90\,{\pm}\,0.50$ & $90.89\,{\pm}\,0.05$ & $81.57\,{\pm}\,0.36$ \\
 & gain & \textcolor{gainred}{$\mathbf{+1.35\%}^{*}$} & \textcolor{gainred}{$\mathbf{+0.81\%}^{*}$} & \textcolor{gainred}{$\mathbf{+0.22\%}$} & $\mathbf{-0.12\%}$ & \textcolor{gainred}{$\mathbf{+0.98\%}^{\dagger}$} & \textcolor{gainred}{$\mathbf{+0.49\%}$} & \textcolor{gainred}{$\mathbf{+1.32\%}^{\dagger}$} & \textcolor{gainred}{$\mathbf{+0.53\%}^{*}$} & \textcolor{gainred}{$\mathbf{+0.23\%}$} & $\mathbf{-0.15\%}$ & \textcolor{gainred}{$\mathbf{+0.71\%}^{\dagger}$} & \textcolor{gainred}{$\mathbf{+0.23\%}$} \\
\addlinespace[2pt]
MLS & Baseline & $67.90\,{\pm}\,1.12$ & $60.26\,{\pm}\,0.59$ & $78.34\,{\pm}\,0.53$ & $70.97\,{\pm}\,0.22$ & $86.20\,{\pm}\,0.84$ & $78.28\,{\pm}\,1.21$ & $82.18\,{\pm}\,0.88$ & $70.90\,{\pm}\,0.41$ & $85.87\,{\pm}\,0.51$ & $76.94\,{\pm}\,0.19$ & $90.36\,{\pm}\,0.55$ & $81.17\,{\pm}\,0.94$ \\
 & +\name{} & $68.67\,{\pm}\,0.92$ & $60.98\,{\pm}\,0.51$ & $78.54\,{\pm}\,0.23$ & $71.12\,{\pm}\,0.75$ & $87.04\,{\pm}\,0.46$ & $78.66\,{\pm}\,0.55$ & $82.96\,{\pm}\,0.59$ & $71.52\,{\pm}\,0.50$ & $86.08\,{\pm}\,0.26$ & $77.11\,{\pm}\,0.85$ & $90.95\,{\pm}\,0.16$ & $81.31\,{\pm}\,0.33$ \\
 & gain & \textcolor{gainred}{$\mathbf{+1.12\%}^{\dagger}$} & \textcolor{gainred}{$\mathbf{+1.19\%}^{*}$} & \textcolor{gainred}{$\mathbf{+0.26\%}$} & \textcolor{gainred}{$\mathbf{+0.22\%}$} & \textcolor{gainred}{$\mathbf{+0.97\%}^{\dagger}$} & \textcolor{gainred}{$\mathbf{+0.48\%}$} & \textcolor{gainred}{$\mathbf{+0.95\%}$} & \textcolor{gainred}{$\mathbf{+0.88\%}^{*}$} & \textcolor{gainred}{$\mathbf{+0.25\%}$} & \textcolor{gainred}{$\mathbf{+0.22\%}$} & \textcolor{gainred}{$\mathbf{+0.65\%}$} & \textcolor{gainred}{$\mathbf{+0.16\%}$} \\
\addlinespace[2pt]
NNGuide & Baseline & $63.82\,{\pm}\,0.85$ & $55.98\,{\pm}\,0.95$ & $76.20\,{\pm}\,0.23$ & $70.37\,{\pm}\,0.36$ & $84.05\,{\pm}\,1.42$ & $76.94\,{\pm}\,0.82$ & $77.80\,{\pm}\,0.55$ & $66.54\,{\pm}\,1.01$ & $82.99\,{\pm}\,0.40$ & $76.00\,{\pm}\,0.48$ & $88.17\,{\pm}\,1.28$ & $79.99\,{\pm}\,0.59$ \\
 & +\name{} & $64.50\,{\pm}\,0.80$ & $57.31\,{\pm}\,1.10$ & $76.33\,{\pm}\,0.51$ & $70.72\,{\pm}\,1.01$ & $85.05\,{\pm}\,0.65$ & $77.60\,{\pm}\,0.55$ & $77.99\,{\pm}\,0.27$ & $67.59\,{\pm}\,0.96$ & $83.11\,{\pm}\,0.34$ & $76.34\,{\pm}\,1.21$ & $88.88\,{\pm}\,0.47$ & $80.36\,{\pm}\,0.51$ \\
 & gain & \textcolor{gainred}{$\mathbf{+1.07\%}$} & \textcolor{gainred}{$\mathbf{+2.38\%}^{*}$} & \textcolor{gainred}{$\mathbf{+0.17\%}$} & \textcolor{gainred}{$\mathbf{+0.49\%}$} & \textcolor{gainred}{$\mathbf{+1.19\%}$} & \textcolor{gainred}{$\mathbf{+0.85\%}$} & \textcolor{gainred}{$\mathbf{+0.24\%}$} & \textcolor{gainred}{$\mathbf{+1.57\%}^{*}$} & \textcolor{gainred}{$\mathbf{+0.15\%}$} & \textcolor{gainred}{$\mathbf{+0.46\%}$} & \textcolor{gainred}{$\mathbf{+0.81\%}$} & \textcolor{gainred}{$\mathbf{+0.47\%}$} \\
\addlinespace[2pt]
PostMax & Baseline & $68.19\,{\pm}\,1.13$ & $60.41\,{\pm}\,0.73$ & $78.78\,{\pm}\,0.46$ & $69.88\,{\pm}\,0.41$ & $86.15\,{\pm}\,0.75$ & $78.22\,{\pm}\,0.70$ & $82.52\,{\pm}\,0.91$ & $71.03\,{\pm}\,0.48$ & $86.34\,{\pm}\,0.43$ & $75.65\,{\pm}\,0.43$ & $90.32\,{\pm}\,0.48$ & $81.12\,{\pm}\,0.43$ \\
 & +\name{} & $68.87\,{\pm}\,1.02$ & $61.22\,{\pm}\,0.66$ & $79.14\,{\pm}\,0.14$ & $70.09\,{\pm}\,0.80$ & $86.90\,{\pm}\,0.43$ & $78.55\,{\pm}\,0.31$ & $83.19\,{\pm}\,0.75$ & $71.73\,{\pm}\,0.53$ & $86.73\,{\pm}\,0.31$ & $75.88\,{\pm}\,0.80$ & $90.82\,{\pm}\,0.17$ & $81.21\,{\pm}\,0.20$ \\
 & gain & \textcolor{gainred}{$\mathbf{+1.01\%}$} & \textcolor{gainred}{$\mathbf{+1.33\%}^{*}$} & \textcolor{gainred}{$\mathbf{+0.46\%}$} & \textcolor{gainred}{$\mathbf{+0.31\%}$} & \textcolor{gainred}{$\mathbf{+0.87\%}^{\dagger}$} & \textcolor{gainred}{$\mathbf{+0.42\%}$} & \textcolor{gainred}{$\mathbf{+0.81\%}$} & \textcolor{gainred}{$\mathbf{+0.99\%}^{*}$} & \textcolor{gainred}{$\mathbf{+0.46\%}^{*}$} & \textcolor{gainred}{$\mathbf{+0.31\%}$} & \textcolor{gainred}{$\mathbf{+0.55\%}$} & \textcolor{gainred}{$\mathbf{+0.12\%}$} \\
\addlinespace[2pt]
\bottomrule
\end{tabular}

}
\end{table*}

\begin{table*}[!t]
\centering
\scriptsize
\caption{\textbf{Fine-grained OSR performance on the Semantic Shift Benchmark (SCARS, FGVC-Aircraft, and CUB): Closed-set Accuracy}  Gains are in blue. The results are given as the mean $\pm$ sd over three training runs with fixed seeds.} 
\label{tab:osr_main_accuracy}
\resizebox{\textwidth}{!}{
\begin{tabular}{l rrr rrr rrr}
\toprule
& \multicolumn{3}{c}{CUB} & \multicolumn{3}{c}{Aircraft} & \multicolumn{3}{c}{Cars} \\
\cmidrule(lr){2-4}\cmidrule(lr){5-7}\cmidrule(lr){8-10}
Method &  & +\name{} & $\Delta\%$
& & +\name{} & $\Delta\%$
&  & +\name{} & $\Delta\%$ \\
\midrule
%CE & $70.53\,{\pm}\,0.86$ & --- & --- & $82.51\,{\pm}\,0.24$ & --- & --- & $91.38\,{\pm}\,0.23$ & --- & --- \\
%\supcon{} (w/o LS) & $71.16\,{\pm}\,1.23$ & $72.15\,{\pm}\,0.29$ & \textcolor{gainred}{$\mathbf{+1.38\%}$} & $83.11\,{\pm}\,0.43$ & $82.65\,{\pm}\,0.48$ & $\mathbf{-0.55\%}^{*}$ & $91.12\,{\pm}\,0.58$ & $91.41\,{\pm}\,0.34$ & \textcolor{gainred}{$\mathbf{+0.32\%}$} \\

\supcon{} & $71.74\,{\pm}\,1.16$ & $72.29\,{\pm}\,0.62$ & \textcolor{gainred}{$\mathbf{+0.77\%}$} & $81.21\,{\pm}\,1.62$ & $82.99\,{\pm}\,0.52$ & \textcolor{gainred}{$\mathbf{+2.19\%}$} & $90.87\,{\pm}\,0.87$ & $91.40\,{\pm}\,0.32$ & \textcolor{gainred}{$\mathbf{+0.58\%}$} \\
\supconmix{}  & $75.89\,{\pm}\,0.53$ & $75.42\,{\pm}\,1.23$ & $\mathbf{-0.62\%}$ & $85.73\,{\pm}\,0.67$ & $86.19\,{\pm}\,0.73$ & \textcolor{gainred}{$\mathbf{+0.54\%}$} & $92.67\,{\pm}\,0.18$ & $93.26\,{\pm}\,0.53$ & \textcolor{gainred}{$\mathbf{+0.64\%}$} \\
\sphor{} & $74.20\,{\pm}\,0.88$ & $74.58\,{\pm}\,0.75$ & \textcolor{gainred}{$\mathbf{+0.51\%}^{\dagger}$} & $83.53\,{\pm}\,0.76$ & $83.67\,{\pm}\,0.66$ & \textcolor{gainred}{$\mathbf{+0.17\%}$} & $91.80\,{\pm}\,0.62$ & $91.86\,{\pm}\,0.45$ & \textcolor{gainred}{$\mathbf{+0.07\%}$} \\
\sphormix & $77.05\,{\pm}\,0.84$ & $77.29\,{\pm}\,0.94$ & \textcolor{gainred}{$\mathbf{+0.31\%}$} & $85.83\,{\pm}\,0.46$ & $85.91\,{\pm}\,0.24$ & \textcolor{gainred}{$\mathbf{+0.09\%}$} & $92.93\,{\pm}\,0.69$ & $93.32\,{\pm}\,0.51$ & \textcolor{gainred}{$\mathbf{+0.42\%}^{\dagger}$} \\
\bottomrule
\end{tabular}}
\end{table*}

\section{Results and Discussion} \label{resultssection}

\noindent \textbf{Superiority of Augmented Variants in SSB Benchmark:}
Table~\ref{tab:osr_main} reports OSR results on the SSB Benchmark. Our proposed \name{} boosts performance in open-set recognition along both axis: it improves separability from unknowns as shown via AUROC, and it also improves closed-set performance (As shown in Table~\ref{tab:osr_main_accuracy}). In summation, this net result reflects in improved OSCR in Table~\ref{tab:osr_main} across all training-testing regimes.  While minor fluctuations occur in specific sub-metrics, likely due to the absence of dataset-specific hyperparameter tuning, the overall consistency across three distinct datasets underscores the generalizability of our approach. We further show it is consistent across the Operating Curves in Appendix~\ref{app:operatingcurve} and that it comes at an insignificant performance tax quantified via FLOPS in Appendix~\ref{app:flopsanalysis}.

\textbf{Empirical Validation:} While \name{} intervenes via the loss in the projection space, the inference happens in the encoder representations. Thus, to probe the geometric properties of the representations induced by \name{} and relate them to OSR, we use two metrics defined in \citep{li2022targeted}.
\begin{definition}[Intra-class compactness (Comp), Inter-class dispersion (Disp)]
\label{def:comp-disp}
Let $\mathcal{F}_i=\{\mathrm{Enc}(\mathbf{x}) :
(\mathbf{x},y)\in\mathcal{D}_K,\ y=i\}$ and
$\bar{\mathbf{v}}_i = \sum_{\mathbf{v}\in\mathcal{F}_i}\mathbf{v}\,/\,
\lVert\sum_{\mathbf{v}\in\mathcal{F}_i}\mathbf{v}\rVert_2$. Then:
\begin{align}
\mathrm{Comp} &\triangleq \frac{1}{C}\sum_{i=1}^{C}
  \frac{1}{\lvert\mathcal{F}_i\rvert^{2}}
  \sum_{\mathbf{v}_j,\mathbf{v}_k\in\mathcal{F}_i}
  \lVert \mathbf{v}_j-\mathbf{v}_k \rVert_2 \quad(\downarrow), \quad \mathrm{Disp} &\triangleq \frac{1}{C(C-1)}\sum_{i\neq j}
  \lVert \bar{\mathbf{v}}_i-\bar{\mathbf{v}}_j \rVert_2 \quad(\uparrow).
\label{eq:disp}
\end{align}
\end{definition}

We observe that, in Table~\ref{tab:geom_full}, it produces compact, well-separated geometry that Section~\ref{sec:method} links to reduced
open-space risk. \name{} reduces Comp in 11 of 12 settings and increases Disp in all 12, so the tighter classes do not come at the cost of inter-class
separation. The geometry is also consistent with open-set performance. The single setting where Comp does not improve, \supconmix{} on CUB, also regresses in the easy split under every post-processor (Table~\ref{tab:osr_main}). Conversely, \sphormix{},
which shows the largest and most consistently significant geometric
gains, also shows the most consistent open-set improvements. The feature geometry is analyzed further using the lens of Angular Separability and Norm Separability in Appendix~\ref{app:asandns}~\citep{sphor}. Extended Ablations are presented in Appendix~\ref{app:extended_ablations}.

\begin{table}[t]
\centering
\scriptsize
\caption{\textbf{Feature geometry in terms of the gains in Intra-class compactness (Comp) and Inter-class dispersion (Disp).}
Each $\Delta$ reflects the gain Comp/Disp when the baseline method was augmented with \name{}. Colored in \gaincolor{} means the intervention improved the feature geometry in that aspect. Results over 3 seeds. }
{\footnotesize
\setlength{\tabcolsep}{3pt}\renewcommand{\arraystretch}{1.0}
\begin{tabular}{@{}l rr rr rr rr@{}}
\toprule
 & \multicolumn{2}{c}{\supcon{}} & \multicolumn{2}{c}{\supconmix{}}
 & \multicolumn{2}{c}{\sphor{}} & \multicolumn{2}{c}{\sphormix{}} \\
\cmidrule(lr){2-3}\cmidrule(lr){4-5}\cmidrule(lr){6-7}\cmidrule(lr){8-9}
 & Comp.\,$\downarrow$ & Disp.\,$\uparrow$
 & Comp.\,$\downarrow$ & Disp.\,$\uparrow$
 & Comp.\,$\downarrow$ & Disp.\,$\uparrow$
 & Comp.\,$\downarrow$ & Disp.\,$\uparrow$ \\
\midrule
CUB      & \textcolor{gainred}{$-0.0052$} & \textcolor{gainred}{$+0.0111$}
         & $+0.0005$                      & \textcolor{gainred}{$+0.0035$}
         & \textcolor{gainred}{$-0.0135$} & \textcolor{gainred}{$+0.0004$}
         & \textcolor{gainred}{$-0.0152$} & \textcolor{gainred}{$+0.0256$} \\
Aircraft & \textcolor{gainred}{$-0.0167$} & \textcolor{gainred}{$+0.0166$}
         & \textcolor{gainred}{$-0.0088$} & \textcolor{gainred}{$+0.0169$}
         & \textcolor{gainred}{$-0.0059$} & \textcolor{gainred}{$+0.0126$}
         & \textcolor{gainred}{$-0.0148$} & \textcolor{gainred}{$+0.0360$} \\
Cars     & \textcolor{gainred}{$-0.0124$} & \textcolor{gainred}{$+0.0155$}
         & \textcolor{gainred}{$-0.0068$} & \textcolor{gainred}{$+0.0250$}
         & \textcolor{gainred}{$-0.0052$} & \textcolor{gainred}{$+0.0058$}
         & \textcolor{gainred}{$-0.0201$} & \textcolor{gainred}{$+0.0457$} \\
\bottomrule
\end{tabular}}

\label{tab:geom_full}
\end{table}

\section{Limitations and Future Work}
\label{conlusion}

\name{} redistributes label smoothing around a base coefficient $\epsilon$ rather
than adding smoothing of its own, so its effect scales with how much
smoothing the base objective uses. objectives tuned to little or no
smoothing leave it little to redistribute. Thus, the next natural step is to make the base coefficient $\epsilon$ adaptive.

\subsection*{AI use statement}
Generative AI tools were only used in the preparation of this manuscript for purposes of grammar checking. No AI systems were used for data analysis, model development, experimental design, or result interpretation. All research decisions and scientific contributions remain the responsibility of the authors.

\subsection*{Ethics statement}
The research conforms with the ICLR Code of Ethics. This work involves standard image classification benchmarks with no personal data, sensitive content, or harmful applications.

\subsection*{Reproducibility statement}
We use only the public Semantic Shift Benchmark with its published splits. All hyperparameters, backbone initialisation and hardware are listed in Appendix~\ref{app:details}, and every number is the mean over three fixed seeds (1, 2, 3). Code and configuration files are also included in the supplementary materiels.

\bibliography{egbib}
\bibliographystyle{iclr2027_conference}

\newpage
\appendix

\section{Evaluation Metrics}
\label{app:metrics}

To begin, we define a supervised training set for a classification task as 
$\mathcal{D}_{\text{train}} = \{(\mathbf{x}_i, y_i)\}_{i=1}^{N} \subset \mathcal{X} \times \mathcal{C}$, where $\mathcal{X}$ is the image space and $\mathcal{C}$ is the set of `known' classes.

In the closed-set scenario, the model is evaluated on a test set where the labels are strictly drawn from the training categories:
$\mathcal{D}_{\text{M}} := \mathcal{D}_{k} = \{(\mathbf{x}_j, y_j)\}_{j=1}^{M} \subset \mathcal{X} \times \mathcal{C}$. In this setting, the model returns a distribution over known classes as $p(y \mid \mathbf{x})$.

Conversely, in Open-Set Recognition (OSR), test images may also originate from unseen classes $\mathcal{U}$, where $\mathcal{U}$ denotes the set of unknown classes and $\mathcal{C} \cap \mathcal{U} = \emptyset$.  The OSR-evaluation test set is a union of known and unknown samples: $\mathcal{D}_{\text{Open}} = \mathcal{D}_{k} \cup \mathcal{D}_{u}$. 
The unknown component is defined as:
$\mathcal{D}_{u} = \{(\mathbf{x}_j, y_j)\}_{j=1}^{K} \subset \mathcal{X} \times \mathcal{U}$.

In the open-set setting, the model provides both a class distribution $p(y \mid \mathbf{x}, y \in \mathcal{C})$ and a score function $\mathcal{S}(\mathbf{x})$ to indicate whether or not the test sample belongs to any of the known classes. The final prediction $\hat{y}$ is determined by an operational threshold $\theta$:

\begin{equation}
\hat{y} = 
\begin{cases} 
\operatorname*{argmax}_{y \in \mathcal{C}} \, p(y \mid \mathbf{x}) & \text{if } \mathcal{S}(\mathbf{x}) \geq \theta \\ 
\text{Unknown} & \text{if } \mathcal{S}(\mathbf{x}) < \theta 
\end{cases}
\end{equation}

We evaluate models with three metrics: 

Standard \textbf{Closed-set Accuracy} is:
\begin{equation}
\text{Acc} = \frac{1}{|\mathcal{D}_{k}|} \sum_{i=1}^{|\mathcal{D}_{k}|} \mathbb{I}(\hat{y}_i = y_i)
\end{equation}

The \textbf{Area Under the Receiver Operating Characteristic (AUROC)} is a threshold-independent metric utilized to evaluate the model's capability to discriminate between known classes $\mathcal{C}$ and unknown classes $\mathcal{U}$ based on the scoring function $\mathcal{S}(\mathbf{x})$. However, this metric is flawed as it doesn't account for the classification accuracy.  Thus, we need to establish a link between AUROC (unknown rejection) and known-class classification across varying thresholds. For a given operational threshold $\theta$, we define the \textbf{Correct Classification Rate (CCR)\citep{OSCR}} and the \textbf{False Positive Rate (FPR)\citep{OSCR}} as follows:
\begin{equation}
\text{CCR}(\theta) = \frac{1}{|\mathcal{D}_{k}|} \sum_{\mathbf{x}_i \in \mathcal{D}_{k}} \mathbb{I}(y_{i} = \hat{y}_i ) \cdot \mathbb{I}(\mathcal{S}(\mathbf{x}_i) > \theta ), \quad \text{FPR}(\theta) = \frac{1}{|\mathcal{D}_{u}|} \sum_{\mathbf{x}_i \in \mathcal{D}_{u}} \mathbb{I}(\mathcal{S}(\mathbf{x}_i) > \theta )
\end{equation}

Thus, CCR contextualizes Acc for OSR, taking into account if the known class is mistakenly rejected as unknown or if the unknown class is mistakenly classified as known. However, it requires a threshold $\theta$. Based on CCR and FPR, we define two metrics to measure OSR performance: a threshold-independent metric, the Open-set Classification Rate (OSCR), and a threshold-dependent CCR5 (where $FPR=5\%$). 

The \textbf{Open-set Classification Rate (OSCR)\citep{OSCR}} summarizes the performance of a model across all possible operational thresholds $\theta$. It is defined as the area under the curve formed by plotting the Correct Classification Rate (CCR) against the False Positive Rate (FPR):
\begin{equation}
\text{OSCR} = \int_{0}^{1} \text{CCR}(\text{FPR}) d\text{FPR}
\end{equation}

For many safety-critical applications, this is a standard threshold. While recent OSR literature has introduced metrics such as Open-Set Accuracy (OOSA), we do not employ it in our evaluation. OOSA requires a surrogate set, often from a different dataset, to estimate the operational threshold $\theta$. We argue that this approach does not accurately measure the \textit{Semantic Shift}, as the threshold estimation is decoupled from the actual semantic boundary between known and unknown classes in the target domain, and it requires a validation dataset.

\section{Experimental Details}
\label{app:details}

\subsection{Hyperparameter and Data Augmentation}
\label{app:Hyperparameterdetails}

We experimented with different loss functions and hyperparameters tailored to each dataset. A crucial component of our training was the use of \textbf{RandAugment}, a data augmentation technique that automatically finds the optimal combination of augmentation policies. The specific hyperparameters used for each loss function and dataset combination are detailed in Table \ref{tab:hyperparameters}. The RandAugment policy is controlled by two main parameters: $N$, the number of augmentation transformations to apply, and $M$, the magnitude of the transformations.

\begin{table*}[h!]
\centering
\scriptsize
\caption{Hyperparameters for Fine-Grained Datasets. We take the exact parameters from MLS\citep{vaze2022openset}}.
\label{tab:hyperparameters}
\begin{tabular}{|c|c|c|c|c|c|c|}
\hline
\textbf{Loss} & \textbf{Dataset} & \textbf{Image Size} & \textbf{RandAug N} & \textbf{RandAug M} & \textbf{Label Smoothing} & \textbf{Batch Size} \\
\hline
SpHOR/Sphor-lite & CUB & 224 & 2 & 30 & 0.3 & 128 \\
SpHOR/Sphor-lite & FGV-Aircraft & 224 & 2 & 15 & 0.2 & 128 \\
SpHOR/Sphor-lite & SCARS & 224 & 2 & 15 & 0.2 & 128 \\
ConOSR/SupCON & CUB & 224 & 2 & 30 & 0.3 & 128 \\
ConOSR/SupCON & FGV-Aircraft & 224 & 2 & 15 & 0.2 & 128 \\
ConOSR/SupCON & SCARS & 224 & 2 & 15 & 0.2 & 128 \\
\hline
\end{tabular}
\end{table*}

\subsection{Stage 02 Training}
\label{app:stage02} 
We freeze the stage-1 backbone and retrain only the linear
classifier $\mathrm{Linear}(2048{\to}C)$ on its features, starting from the stage-1
head. We pass each training image through the frozen network once, with the same
RandAugment setting as stage~1, and cache the 2048-d features. The head then
trains on this cache for 20 epochs with batch size 128, using standard
cross-entropy and no label smoothing. We optimise with Adam at learning rate
$10^{-4}$, with no weight decay and no learning-rate schedule. The head is
retrained once per stage-1 checkpoint, so each seed gets its own head, and every
open-set score is computed from that head.

\subsection{Dataset}
\label{app:dataset_Details}

The fundamental challenge in OSR is distinguishing between Covariate Shift and Semantic Shift. While covariate shift refers to changes in the marginal input distribution $P_{\text{test}}(\mathbf{x}) \neq P_{\text{train}}(\mathbf{x})$ without changing label semantics $P_{\text{test}}(y \mid \mathbf{x}) = P_{\text{train}}(y \mid \mathbf{x})$, OSR induces a semantic shift. Here, the conditional label distribution changes due to novel classes $P_{\text{test}}(y \mid \mathbf{x}) \neq P_{\text{train}}(y \mid \mathbf{x})$, while the input domain ideally remains static $P_{\text{test}}(\mathbf{x}) \approx P_{\text{train}}(\mathbf{x})$.

We conducted experiments on the Semantic Shift Benchmark (SSB)~\citep{vaze2022openset}, which uses three fine-grained classification datasets: CUB~\citep{Wah2011CaltechUCSD}, FGVC-Aircraft~\citep{Krause20133D}, and Stanford Cars~\citep{Maji2013FineGrained}.
For each dataset, SSB partitions the original categories into 
\emph{known} classes used for training and 
\emph{unknown} classes reserved for evaluation.

\textbf{CUB-200-2011.} 
CUB-200-2011 is a fine-grained bird classification dataset 
containing 11,788 images from 200 bird species. 
In the SSB protocol, 100 species are treated as known classes, while the remaining species are divided into 
multiple groups of unknown classes according to semantic similarity. 
The dataset is challenging due to subtle inter-class differences 
and limited training samples per class.

\textbf{Stanford Cars.} 
The Stanford Cars dataset contains 16,185 images 
across 196 car model categories. 
Following the SSB setup, 98 classes are used as known categories, and the remaining classes serve as unknown categories 
under different semantic shift levels. 
The dataset includes large viewpoint variations and fine-grained 
distinctions at the level of production year and model.

\textbf{FGVC-Aircraft.} 
FGVC-Aircraft consists of 10,000 images over 100 aircraft variants. 
The dataset provides a semantic hierarchy from manufacturer to model.  50 aircraft categories are used as known classes, 
while the remaining categories are treated as unknown classes. 
Semantic partitions are constructed based on hierarchical similarity.

\begin{table}[h!]
\centering
\caption{Statistics of datasets used in the Semantic Shift Benchmark (SSB). 
Known classes are used for training, while unknown classes are reserved for open-set evaluation.}
\begin{tabular}{lcccc}
\hline
Dataset & Total Images & Total Classes & Known Classes & Unknown Classes \\
\hline
CUB & 11,788 & 200 & 100 & 100 \\
Stanford Cars & 16,185 & 196 & 98 & 98 \\
FGVC-Aircraft & 10,000 & 100 & 50 & 50 \\
\hline
\end{tabular}
\label{tab:ssb_stats}
\end{table}

\subsection{Scoring Functions} \label{appendix:scoringfunction}
Let $f(\mathbf{x}) \in \mathbb{R}^{C}$ be the classifier logits over $C$ classes, and let $\mathbf{z} = \phi(\mathbf{x})$ be the feature vector taken from the layer just before the classifier.

\paragraph{MSP.} The largest softmax probability over all classes:
\begin{equation}
S_{\mathrm{MSP}}(\mathbf{x}) = \max_{c} \frac{\exp f_c(\mathbf{x})}{\sum_{j} \exp f_j(\mathbf{x})}.
\end{equation}

\paragraph{MLS.} The largest logit over all classes:
\begin{equation}
S_{\mathrm{MLS}}(\mathbf{x}) = \max_{c} f_c(\mathbf{x}).
\end{equation}

\paragraph{PostMax.} PostMax normalizes the maximum logit by the feature magnitude and maps it through the cumulative distribution of a Generalized Pareto Distribution (GPD) fit on correctly classified training samples:
\begin{equation}
S_{\mathrm{PostMax}}(\mathbf{x}) = \Psi_{\mu, \sigma, \xi}\!\left( \frac{\max_c f_c(\mathbf{x})}{\|\mathbf{z}\|} \right),
\end{equation}
where $\Psi_{\tilde{\mu}, \tilde{\sigma}, \tilde{\xi}}$ is the GPD CDF with location $\tilde{\mu}$, scale $\tilde{\sigma}$, and shape $\tilde{\xi}$ estimated by maximum likelihood on the normalized max-logits of correctly classified training samples. Dividing by $\|\mathbf{z}\|$ counteracts the observation that unknown samples on large-scale models tend to have larger feature norms, and the GPD turns the score into a calibrated probability.

% \paragraph{KNN.} Given a bank of ID features $\{\mathbf{z}_1, \dots, \mathbf{z}_n\}$, sort them by cosine similarity to the test feature $\mathbf{z}$ in descending order, and take the similarity to the $k$-th neighbor:
% \begin{equation}
% S_{\mathrm{KNN}}(\mathbf{x}) = \mathrm{sim}(\mathbf{z}_{(k)}, \mathbf{z}).
% \end{equation}

\paragraph{NNGuide.} Multiply a base classifier score $S_{\mathrm{base}}$ (we use negative energy, $S_{\mathrm{base}}(\mathbf{x}) = \log \sum_{c} \exp f_c(\mathbf{x})$) by the average of the top-$k$ confidence-weighted similarities:
\begin{equation}
S_{\mathrm{NNGuide}}(\mathbf{x}) = S_{\mathrm{base}}(\mathbf{x}) \cdot \frac{1}{k} \sum_{i=1}^{k} s_{(i)} \, \mathrm{sim}(\mathbf{z}_{(i)}, \mathbf{z}),
\end{equation}
where $s_i = S_{\mathrm{base}}(\mathbf{x}_i)$ is the base score of bank sample $i$, and the indices $(i)$ are sorted by descending $s_i \cdot \mathrm{sim}(\mathbf{z}_i, \mathbf{z})$. Weighting neighbors by their confidence pulls the search toward high-confidence ID regions, which lowers scores on far-OOD inputs while still separating near-OOD inputs around class boundaries.

\section{Extended Results}

\subsection{Operating Curve Analysis}
\label{app:operatingcurve}

\providecommand{\oscrpanelw}{0.47\textwidth}

\begin{figure}[p]
\centering

\begin{minipage}[t]{\oscrpanelw}\centering
{\scriptsize\textbf{\supcon{}} --- Easy Split}\par\vspace{1pt}
\includegraphics[width=\textwidth]{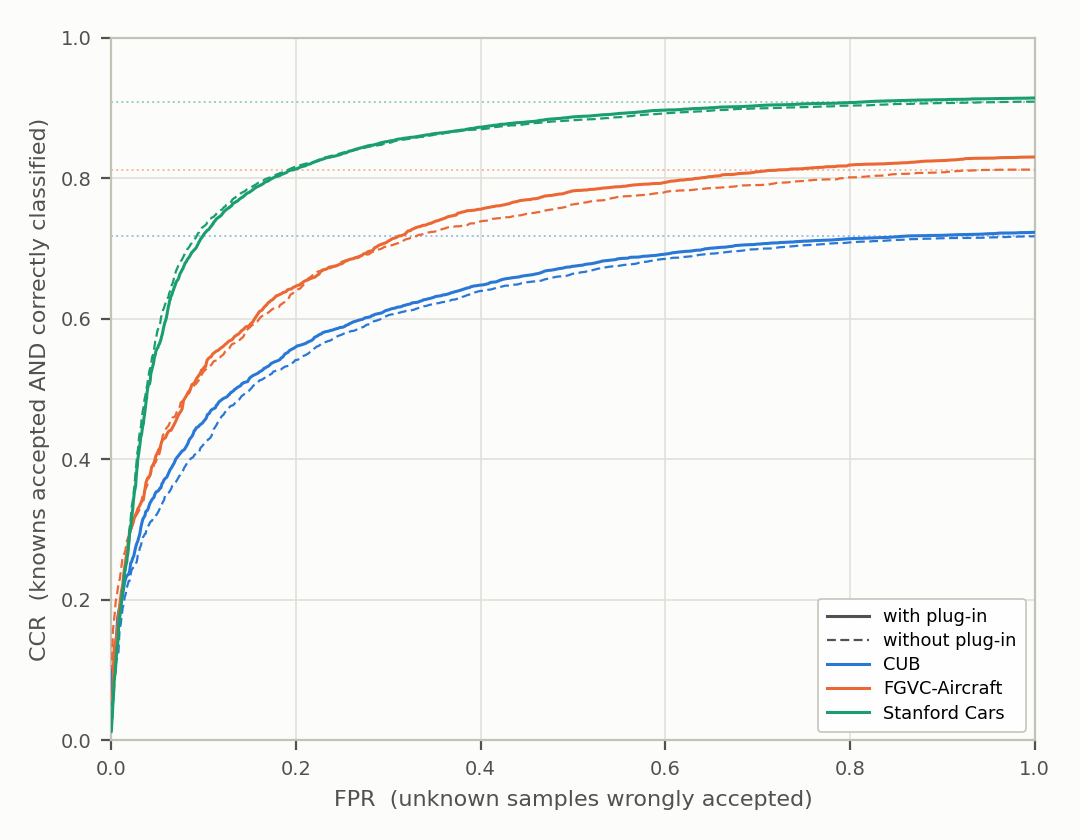}
\end{minipage}\hfill
\begin{minipage}[t]{\oscrpanelw}\centering
{\scriptsize\textbf{\supcon{}} --- Hard Split}\par\vspace{1pt}
\includegraphics[width=\textwidth]{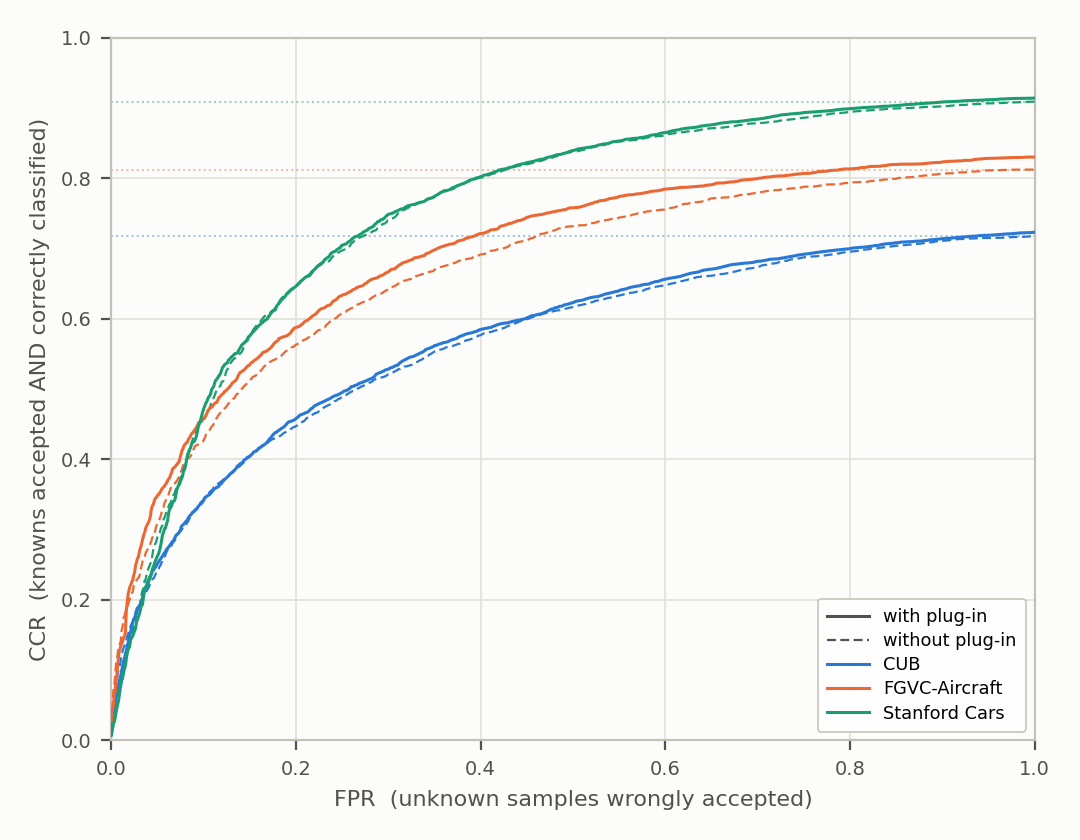}
\end{minipage}
\par\smallskip

\caption{\textbf{OSCR operating curves for \supcon{} against SSB with the MSP post-processor.}   \textbf{Solid} indicates with \name{} ; \textbf{dashed} is the baseline without the plug-in. The curves are vertically averaged over 3 seeds for a fixed FPR.  The results are shown for both the Easy and Hard Splits.  \name{} pushes out baselines almost consistently across all operating points.}
\label{fig:oscr_grid_a1}
\end{figure}

\begin{figure}[p]
\centering

\begin{minipage}[t]{\oscrpanelw}\centering
{\scriptsize\textbf{\supconmix{}} --- Easy Split}\par\vspace{1pt}
\includegraphics[width=\textwidth]{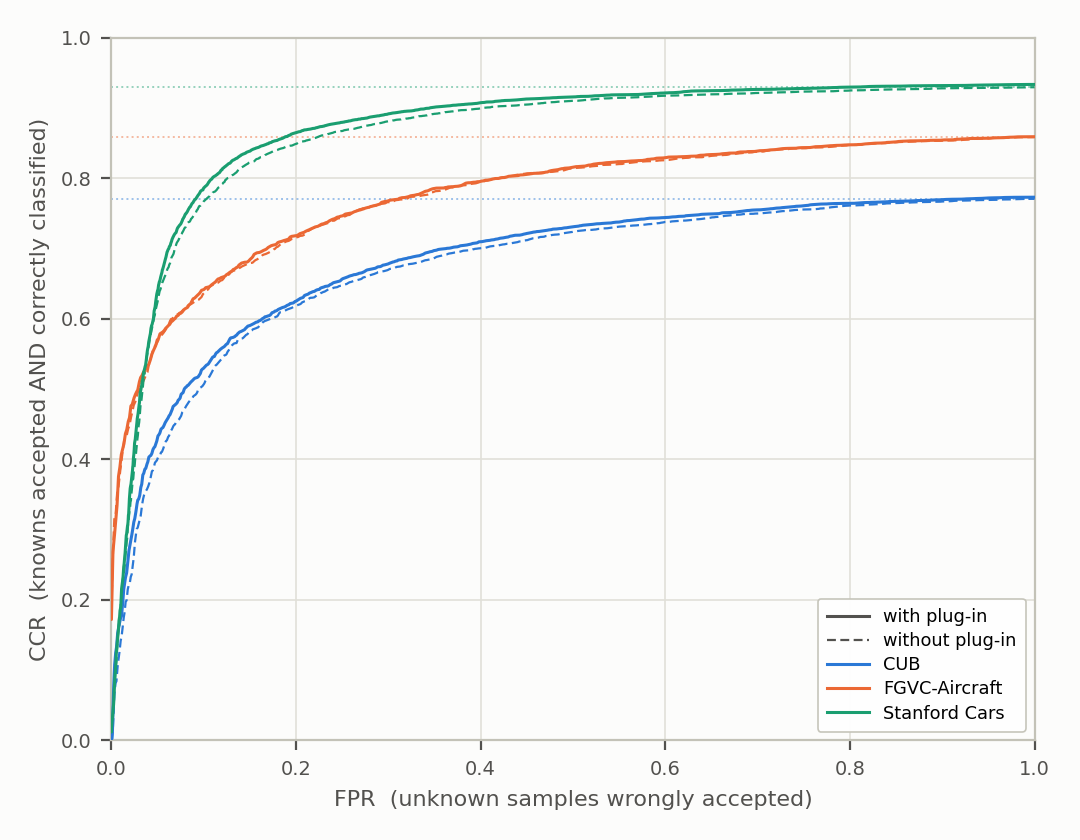}
\end{minipage}\hfill
\begin{minipage}[t]{\oscrpanelw}\centering
{\scriptsize\textbf{\supconmix{}} --- Hard Split}\par\vspace{1pt}
\includegraphics[width=\textwidth]{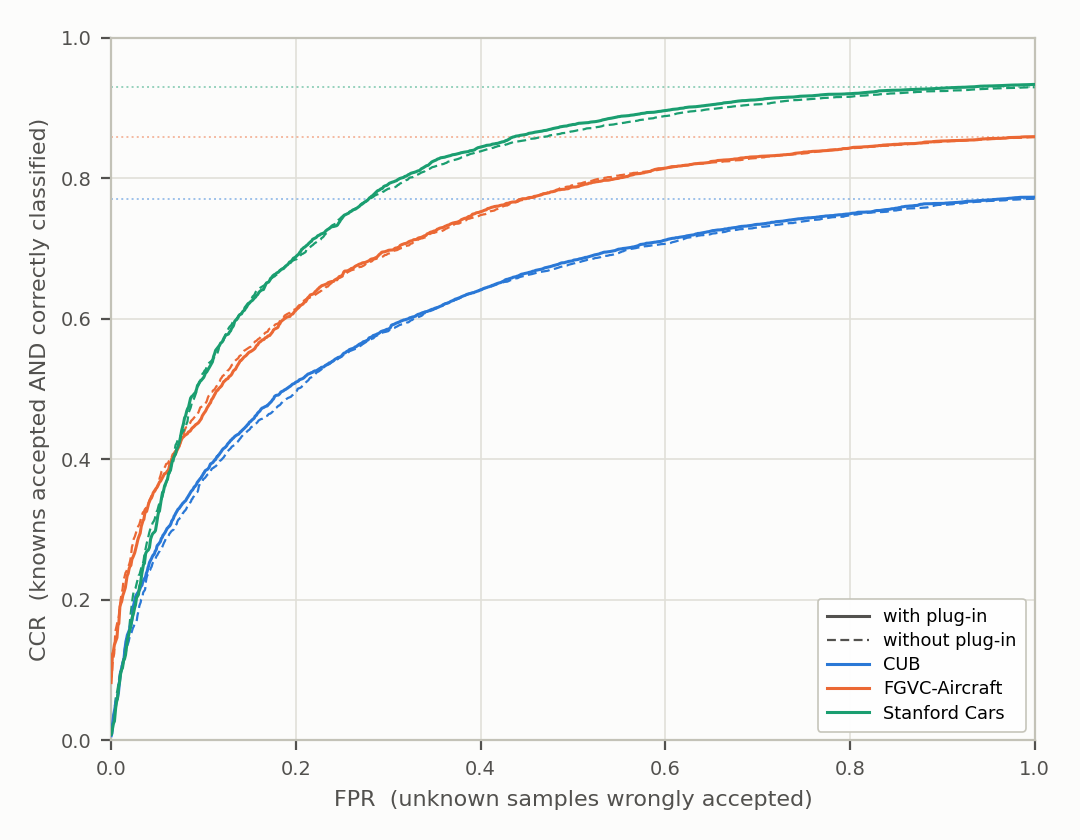}
\end{minipage}
\par\smallskip

\caption{\textbf{OSCR operating curves for \supconmix{} against SSB with the MSP post-processor.}   \textbf{Solid} indicates with \name{} ; \textbf{dashed} is the baseline without the plug-in. The curves are vertically averaged over 3 seeds for a fixed FPR.  The results are shown for both the Easy and Hard Splits.  \name{} pushes out baselines almost consistently across all operating points.}
\label{fig:oscr_grid_a2}
\end{figure}

%OSCR curves, SupCon, SupCon $+$ mixup.} Correct-classification rate of accepted knowns (CCR) against the false-acceptance rate of unknowns (FPR), swept over the MSP score threshold. \textbf{Solid} (`with plug-in' in each key) is the selected $\alpha$; \textbf{dashed} (`without plug-in') is $\alpha{=}0$, the same backbone with the plug-in switched off. Curves are means over 3 seeds, averaged vertically at fixed FPR, so the area under each is exactly the seed-mean OSCR of the corresponding MSP row of Table~\ref{tab:head_oscr_x}. Columns are the two unknown splits (Hard $=$ Hard $+$ Medium). The cross-dataset selected $\alpha^{*}_{\times}$, by arm: \emph{SupCon} CUB~$-0.5$, FGVC-Aircraft~$-4$, Stanford Cars~$-3$; \emph{SupCon $+$ mixup} CUB~$-3$, FGVC-Aircraft~$-1.5$, Stanford Cars~$-4$. Dotted horizontals are each dataset's closed-set accuracy ceiling: a curve flattening below $1$ is that ceiling, not saturation. The remaining arms are in the companion figure.

\begin{figure}[p]
\centering

\begin{minipage}[t]{\oscrpanelw}\centering
{\scriptsize\textbf{\sphor{}} --- Easy Split}\par\vspace{1pt}
\includegraphics[width=\textwidth]{fig_oscr_SupCon_easy.png}
\end{minipage}\hfill
\begin{minipage}[t]{\oscrpanelw}\centering
{\scriptsize\textbf{ \sphor{} } --- Hard Split}\par\vspace{1pt}
\includegraphics[width=\textwidth]{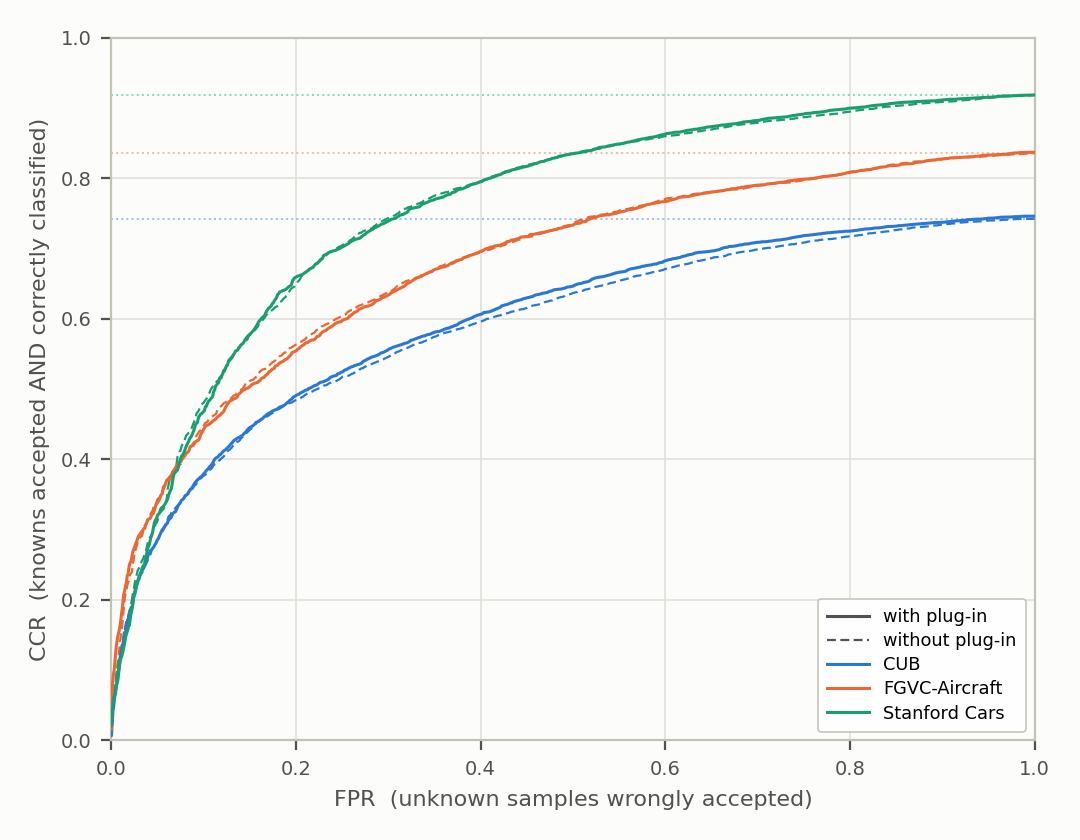}
\end{minipage}
\par\smallskip
\caption{\textbf{OSCR operating curves for \sphor{} against SSB with the MSP post-processor.}   \textbf{Solid} indicates with \name{} ; \textbf{dashed} is the baseline without the plug-in. The curves are vertically averaged over 3 seeds for a fixed FPR.  The results are shown for both the Easy and Hard Splits.  \name{} pushes out baselines almost consistently across all operating points.}
\label{fig:oscr_grid_b1}
\end{figure}

\begin{figure}[p]
\centering
\begin{minipage}[t]{\oscrpanelw}\centering
{\scriptsize\textbf{\sphormix{}} --- Easy Split}\par\vspace{1pt}
\includegraphics[width=\textwidth]{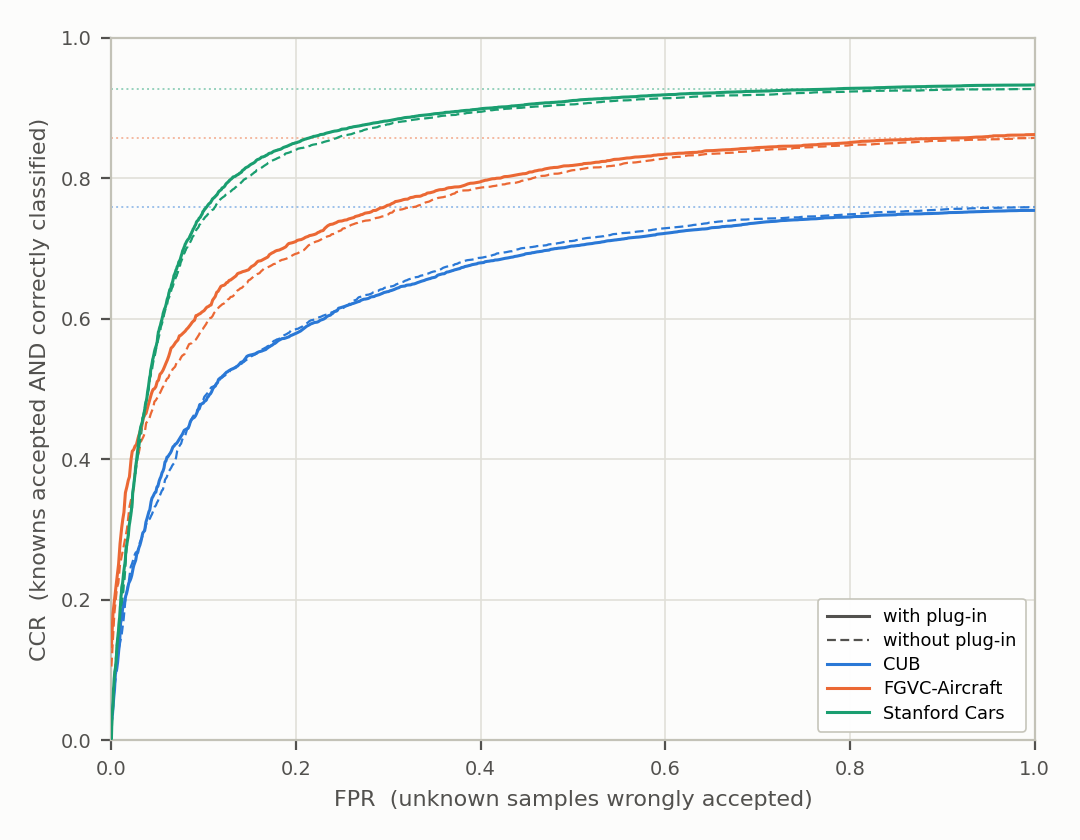}
\end{minipage}\hfill
\begin{minipage}[t]{\oscrpanelw}\centering
{\scriptsize\textbf{\sphormix{}} --- Hard Split}\par\vspace{1pt}
\includegraphics[width=\textwidth]{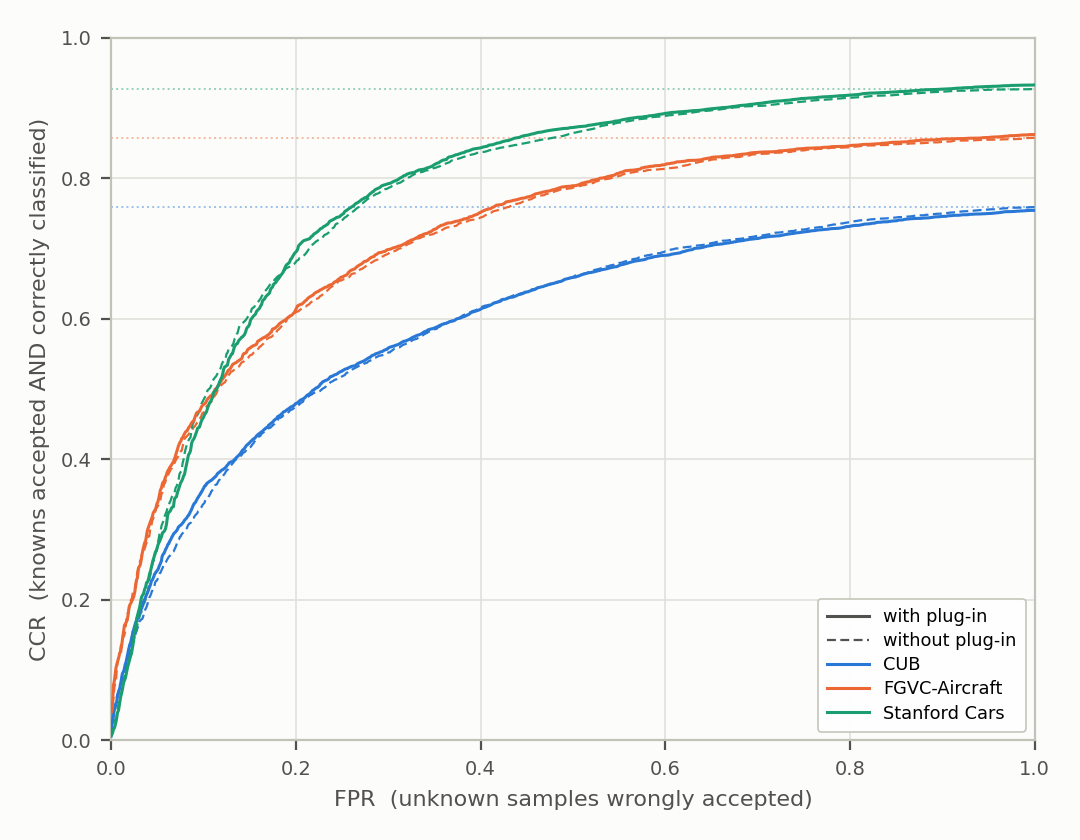}
\end{minipage}
\par\smallskip

\caption{\textbf{OSCR operating curves for \sphormix{} against SSB with the MSP post-processor.}   \textbf{Solid} indicates with \name{} ; \textbf{dashed} is the baseline without the plug-in. The curves are vertically averaged over 3 seeds for a fixed FPR.  The results are shown for both the Easy and Hard Splits.  \name{} pushes out baselines almost consistently across all operating points.}
\label{fig:oscr_grid_b2}
\end{figure}

\noindent \textbf{Robustness Across a Wide Range of Operating Thresholds: }The Open-Set Classification Rate curve characterizes the trade-off between accurately classifying known-class samples ($CCR$) and correctly rejecting unknowns ($FPR$) across varying operational thresholds $\theta$. As illustrated in Fig.~\ref{fig:oscr_grid_a1}, ~\ref{fig:oscr_grid_a2}, ~\ref{fig:oscr_grid_b1},~\ref{fig:oscr_grid_b2}, our ``Enhanced" variants reliably shift the OSCR curves upward for the hard sample splits. This indicates that by focusing on hard positives during training, the model maintains a higher $CCR$ at lower $FPR$ levels. This consistent performance gain across thresholds confirms that our strategy effectively improves open-set recognition. This further validates our hypothesis.

\subsection{FLOPs for \observerclassifier{}} \label{app:flopsanalysis}

\begin{table}[!t]
\centering
\scriptsize
\caption{Performance cost of the \observerclassifier{} on the ResNET-50 backbone. For heavier backbones, the cost will become even smaller. } 
\begin{tabular}{lrrrr}
\toprule
Dataset & $C$ & Head params & \% of backbone params & \% of train step \\
\midrule
CUB & 100 & 204\,900 & 0.80\% & 0.010\% \\
FGVC-Aircraft & 50 & 102\,450 & 0.40\% & 0.005\% \\
Stanford Cars & 98 & 200\,802 & 0.79\% & 0.010\% \\
\bottomrule
\end{tabular}
\label{tab:FLOPS}
\end{table}

It's important to quantify the performance tax when augmenting the conventional training with \name{}. Given that the only difference is that an additional \observerclassifier{} which supplies the $u_i$ is training in parallel with the main-backbone using the cross-entropy loss. Note that this is independent of the any hyper-parameters. Table~\ref{tab:FLOPS} reports the training cost regardless in terms of FLOPs. As observed, the cost is almost negligible.

\section{Extended Ablations and Discussion}
\label{app:extended_ablations}

\subsection{Ablation: Inter-batch and Intra-batch variation decomposed.}

\name{} varies the amount of smoothing along two axes: (1)~Intra-batch,
across the samples in a batch at a given step; and (2)~Inter-batch,
across batches over the course of training. The inter-batch component
is simply the average smoothing per batch, which acts as a label
smoothing schedule. The question for this ablation is whether the
intra-batch component matters, or whether the schedule alone accounts
for the gains. Recall the per-sample coefficient from
Section~\ref{sec: engineering}:

\begin{equation}
\varepsilon_i \;=\; \varepsilon \cdot 2\operatorname{sigmoid}(-|\alpha| u_i)
\tag{\ref{eq:lssubeq}}
\end{equation}

Here $\varepsilon$ and $\alpha$ are shared across the batch, so $u_i$ is
the only source of per-sample variation in $\varepsilon_i$. The
intra-batch effect therefore comes down to which sample receives which
coefficient. To test it, we compare the standard method (treatment)
against two controls that take the computed $\varepsilon_i$ and
redistribute them within the batch before the loss is computed.

\begin{description}
    \item[Treatment] This is the standard \name{} algorithm. We denote $P^{treat}$ as the test performance (OSCR) of a model trained under this regime. 
    \item[\meanfield{}] Instead of the per-sample $\varepsilon_i$, every sample in the batch receives the batch mean: $\varepsilon_i^{\text{bm}} = \frac{1}{B}\sum_j \varepsilon_j$.  We denote $P^{batch-mean}$ as the test performance (OSCR) of a model trained under this regime. We also define $\Delta^{batch-mean} = P^{treat} - P^{batch-mean}$, which measures how much of the gain is from the intra-batch variation and assignment. 

     \item[\reverse{}] \meanfield{} removes the intra-batch variation
    entirely, so it cannot tell us whether the variation helps because the
    right samples get the right coefficients. To test this, we keep each
    batch's coefficients but reverse their order:
    $\varepsilon_i^{\text{rev}} = \varepsilon_{(n+1-r_i)}$, where $r_i$ is
    the rank of $\varepsilon_i$ and $\varepsilon_{(k)}$ is the $k$-th
    smallest coefficient. This arrangement preserves the batch-mean effect and the intra-batch variation, but inverts the assignment. We denote $P^{reverse}$ as the test performance (OSCR) of a model trained under this regime. We also define $\Delta^{reverse} = P^{treat} - P^{reverse}$, which measures how much the correct assignment matters compared to the inverted one.

\end{description}

In all three regimes, all other hyperparameters, base coefficients, and
seeds are kept the same as in standard training.

If the per-sample effect exists and is not a confound of the batch-mean
schedule, we should see positive gains $\Delta^{\text{rev}}$ and
$\Delta^{\text{bm}}$. Ideally, we expect that $\Delta^{\text{rev}} >
\Delta^{\text{bm}} > 0$. Removing the assignment should hurt, but
inverting it should hurt more.

As observed in Table~\ref{tab:mech_compact_avg4}, for method--dataset
combinations, the gains averaged across post-processors are positive,
which indicates that the effect exists by and large.
We observe that our prediction $\Delta^{\text{rev}} > \Delta^{\text{bm}}
> 0$ holds on average ($+0.25 > +0.18$ on easy, $+0.34 > +0.05$ on
hard). Inverting the assignment is more harmful to the method than
removing it.  However, both \reverse{} and \meanfield{} have
the same mean statistics. It must be noted that the negative gains are
not uniformly distributed but concentrated in some methods. For example,
for \sphor{} (without Mixup), 5 out of 6 blocks for $\Delta^{\text{bm}}$
are negative and can be attributed purely to the smoothing schedule;
however, in practice \sphormix{} is used over \sphor{}, since the latter
is an ablation.

\begin{table}[t]
\centering
\small
\setlength{\tabcolsep}{4pt}
\begin{tabular}{l l rrrr rr}
\toprule
 & & \multicolumn{4}{c}{OSCR, mean of $3$ seeds} & \multicolumn{2}{c}{treatment $-$ control} \\
\cmidrule(lr){3-6}\cmidrule(lr){7-8}
arm & data & Baseline & +\name{} & +Batch-mean & +Rev.As. & $\Delta^{Batch-mean}$ & $\Delta^{reverse}$ \\
\midrule
\multicolumn{8}{l}{\textbf{easy}} \\
\supcon & CUB & $59.67$ & $60.83$ & $60.46$ & $60.58$ & \textcolor{gainred}{$+0.37$} & \textcolor{gainred}{$+0.25$} \\
 & Aircraft & $70.87$ & $72.12$ & $71.91$ & $70.88$ & \textcolor{gainred}{$+0.22$} & \textcolor{gainred}{$+1.25$} \\
 & Cars & $82.40$ & $82.92$ & $82.97$ & $82.69$ & $-0.05$ & \textcolor{gainred}{$+0.23$} \\
\supconmix & CUB & $64.05$ & $63.67$ & $63.81$ & $64.20$ & $-0.14$ & $-0.52$ \\
 & Aircraft & $77.17$ & $78.11$ & $76.64$ & $77.57$ & \textcolor{gainred}{$+1.47$} & \textcolor{gainred}{$+0.54$} \\
 & Cars & $84.69$ & $85.42$ & $84.96$ & $84.83$ & \textcolor{gainred}{$+0.45$} & \textcolor{gainred}{$+0.59$} \\
\sphor & CUB & $63.67$ & $64.64$ & $65.01$ & $64.77$ & $-0.37$ & $-0.13$ \\
 & Aircraft & $69.62$ & $69.81$ & $70.06$ & $69.27$ & $-0.25$ & \textcolor{gainred}{$+0.55$} \\
 & Cars & $84.00$ & $84.17$ & $84.09$ & $83.96$ & \textcolor{gainred}{$+0.07$} & \textcolor{gainred}{$+0.21$} \\
\sphormix & CUB & $66.76$ & $67.56$ & $67.30$ & $67.79$ & \textcolor{gainred}{$+0.26$} & $-0.23$ \\
 & Aircraft & $77.67$ & $77.89$ & $78.08$ & $78.08$ & $-0.19$ & $-0.19$ \\
 & Cars & $85.64$ & $86.51$ & $86.16$ & $86.07$ & \textcolor{gainred}{$+0.36$} & \textcolor{gainred}{$+0.44$} \\
\addlinespace[2pt]
\multicolumn{6}{l}{\emph{mean over 12 blocks}} & \textcolor{gainred}{$+0.18$} & \textcolor{gainred}{$+0.25$} \\

\midrule
\multicolumn{8}{l}{\textbf{hard}} \\
\supcon & CUB & $54.37$ & $55.01$ & $54.93$ & $54.38$ & \textcolor{gainred}{$+0.08$} & \textcolor{gainred}{$+0.63$} \\
 & Aircraft & $66.16$ & $68.60$ & $67.81$ & $67.68$ & \textcolor{gainred}{$+0.79$} & \textcolor{gainred}{$+0.92$} \\
 & Cars & $75.13$ & $75.17$ & $76.05$ & $75.25$ & $-0.88$ & $-0.08$ \\
\supconmix & CUB & $57.06$ & $57.11$ & $57.05$ & $57.32$ & \textcolor{gainred}{$+0.06$} & $-0.21$ \\
 & Aircraft & $70.72$ & $71.29$ & $70.35$ & $70.34$ & \textcolor{gainred}{$+0.94$} & \textcolor{gainred}{$+0.95$} \\
 & Cars & $77.44$ & $77.84$ & $77.68$ & $77.26$ & \textcolor{gainred}{$+0.17$} & \textcolor{gainred}{$+0.59$} \\
\sphor & CUB & $57.62$ & $58.43$ & $58.57$ & $57.95$ & $-0.14$ & \textcolor{gainred}{$+0.48$} \\
 & Aircraft & $67.28$ & $67.27$ & $67.49$ & $66.93$ & $-0.22$ & \textcolor{gainred}{$+0.34$} \\
 & Cars & $75.64$ & $75.71$ & $76.17$ & $75.76$ & $-0.46$ & $-0.05$ \\
\sphormix & CUB & $59.47$ & $60.34$ & $60.19$ & $60.23$ & \textcolor{gainred}{$+0.15$} & \textcolor{gainred}{$+0.11$} \\
 & Aircraft & $70.85$ & $71.00$ & $71.05$ & $70.98$ & $-0.04$ & \textcolor{gainred}{$+0.03$} \\
 & Cars & $77.99$ & $78.44$ & $78.25$ & $78.05$ & \textcolor{gainred}{$+0.20$} & \textcolor{gainred}{$+0.40$} \\
\addlinespace[2pt]
\multicolumn{6}{l}{\emph{mean over 12 blocks}} & \textcolor{gainred}{$+0.05$} & \textcolor{gainred}{$+0.34$} \\
\bottomrule
\end{tabular}
\caption{ OSCR
averaged over MSP, MLS, NNGuide-5 and PostMax across $3$ seeds. The three regimes are \meanfield{} (+Batch-mean), \reverse{} (+Rev.As.) and Treatment (+\name{}). To isolate the effectiveness of the per-sample assignment, we calculate two gains by subtracting the \reverse{}/\meanfield{} regime from the \name{} regime, which highlights the gain in terms of the per-sample smoothing. If the gain is positive, it is marked in blue and is supportive of our hypothesis. }
\label{tab:mech_compact_avg4}
\end{table}

\subsection{Temporal Behavior of the Smoothing Method}

\label{sec:dose}
In this section, we are interested in tracking the behavior of $\varepsilon_i$, and specifically how it behaves in a temporal sense. Note that $\varepsilon_i$ is not stationary, as it is a function of 
the model's own \quantity{} $u_i$. As the model trains, it becomes more confident, and $u$ drifts toward $-1$, and $\varepsilon$ is boosted. To observe how $\varepsilon$ behaves, we plot the two information in a temporal chart: (a) the batch-mean value of $\varepsilon$; (b) inter-decile ($p_{10}$--$p_{90}$) range across the batch for $\varepsilon$. We use this instead of a standard deviation because the distribution is skewed towards the upper bound ($\varepsilon$ is bounded). This is illustrated for all method-dataset combinations in Fig.~\ref{fig:epstraj}.

\begin{figure}[t]
\centering
\includegraphics[width=\textwidth]{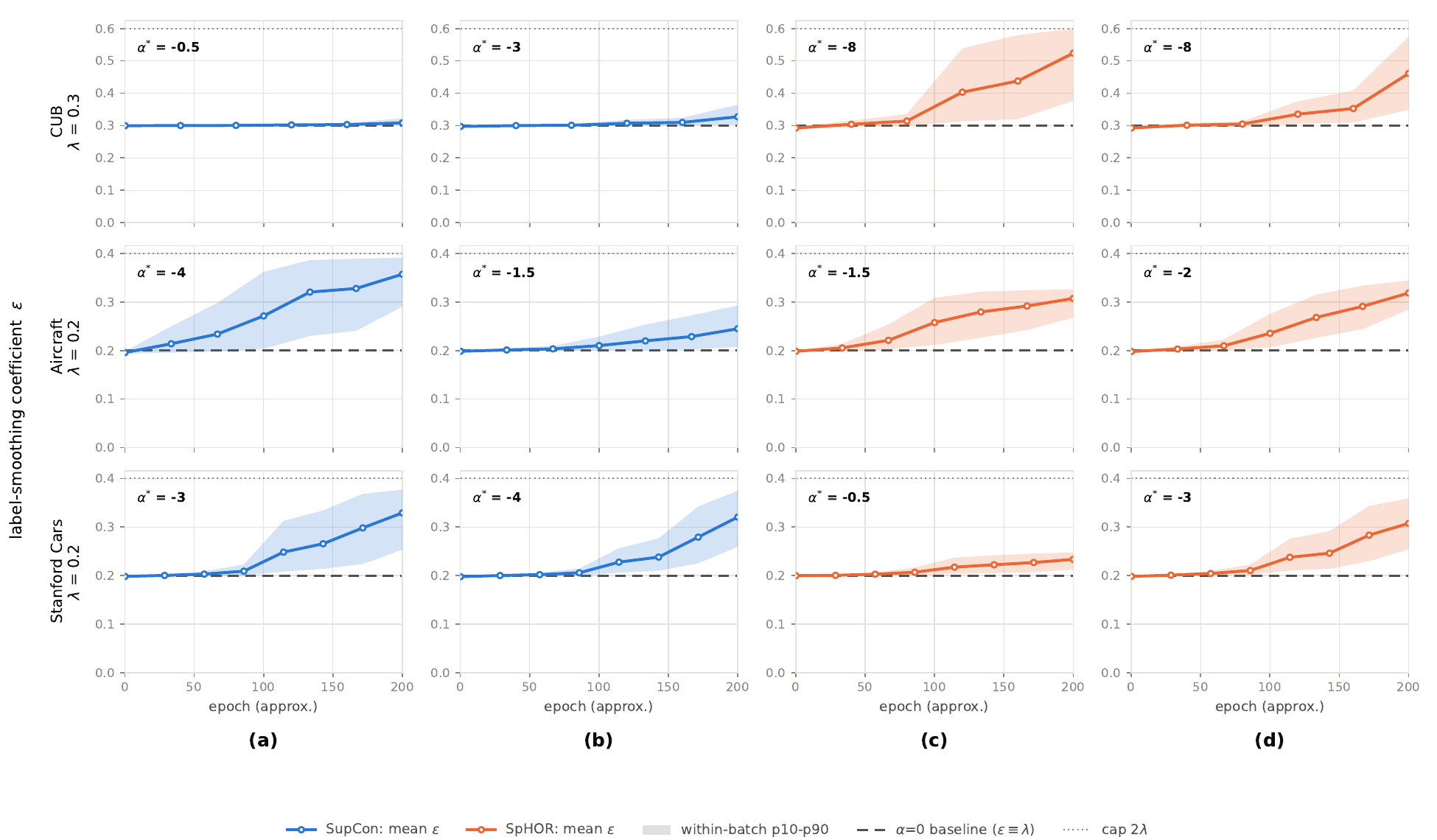}
\caption{Evolution of the Label Smoothing coefficient $\varepsilon$ across the training epochs across all dataset-method combinations, \textbf{(a)} \supcon{}, \textbf{(b)} \supconmix{}, \textbf{(c)} \sphor{}, \textbf{(d)} \sphormix{} , and rows are the three datasets, each annotated with its inherited $\lambda$. Results averaged over 3 seeds. The line is the batch-mean of $\varepsilon$ and the band is \textbf{inter-decile ($p_{10}$--$p_{90}$) range across the batch}. The dashed line is the $\alpha=0$ baseline. The X-axis is the time mapped to the training epochs}.
\label{fig:epstraj}
\end{figure}

Based on this chart, we can come with the following conclusions.  The \name{} does not give a constant intervention during the training. It is time-varying and gradually increases. For example, for \sphormix{}/CUB it goes from $\lambda=0.3$ to $0.524$. This is an expected behavior as the model becomes more confident. From another point of view, this can also be seen as a form of regularization as the model becomes more confident. In addition to an adaptive per-sample smoothing method, the method also functions as a time-varying smoothing schedule.

% ---------------------------------------------------------------------------
\subsection{Ablation: Mixing smoothing coefficients vs mixing margins.} \label{sec:epsblend}
In Section~\ref{section:epsblender}, we use the following scheme, which we refer to as \textbf{Margin Blending}, for Mixup.
The constituent margins are read from the forward pass and retrieved through
the mixing indices:
\begin{equation}
\text{u}^{\lambda}_{i,j} = \lambda \cdot \text{u}_{i} + (1 - \lambda) \cdot \text{u}_{j}
\end{equation}
This composite margin induces a single smoothing coefficient, applied once to the mixed
target:
\begin{equation}
\varepsilon^{\lambda}_{i,j} \;=\; 2\varepsilon \cdot
   \operatorname{sigmoid}\!\big(-|\alpha|\,\text{u}^{\lambda}_{i,j}\big),
\qquad \varepsilon^{\lambda}_{i,j} \in (0,\, 2\varepsilon)
\label{eq:epsmix0}
\end{equation}
\begin{equation}
\hat{\mathbf{y}}_{k}^{'} =
  (1 - \varepsilon^{\lambda}_{i,j})\big(\lambda \mathbf{y}_{i} + (1-\lambda)\mathbf{y}_{j}\big)
  + \frac{\varepsilon^{\lambda}_{i,j}}{C} \cdot \mathbf{1},
\label{eq:lsmix2}
\end{equation}
where $\mathbf{y}_{i}, \mathbf{y}_{j}$ are the \emph{one-hot} targets of the constituent
samples. 

However, it can be argued that the coefficients can be blended as well. Thus, we ablate it against an alternative scheme known as \textbf{coefficient blending:} 
\begin{equation}
\text{u}^{\lambda}_{i,j} = \lambda \cdot \text{u}_{i} + (1 - \lambda) \cdot \text{u}_{j}
\end{equation}
This composite margin induces a single smoothing coefficient, applied once to the mixed
target:
\begin{equation}
\varepsilon_{i} \;=\; 2\varepsilon \cdot
   \operatorname{sigmoid}\!\big(-|\alpha|\,\text{u}_{i}\big),
\qquad \varepsilon_{i} \in (0,\, 2\varepsilon)
\label{eq:epsmix1}
\end{equation}
\begin{equation}
\varepsilon_{j} \;=\; 2\varepsilon \cdot
   \operatorname{sigmoid}\!\big(-|\alpha|\,\text{u}_{j}\big),
\qquad \varepsilon_{j} \in (0,\, 2\varepsilon)
\label{eq:epsmix2}
\end{equation}
\begin{equation}
\varepsilon^{\lambda}_{i,j} \;=\; \lambda \cdot \varepsilon_{i} + (1 - \lambda) \cdot \varepsilon_{j},
\qquad \varepsilon^{\lambda}_{i,j} \in (0,\, 2\varepsilon)
\label{eq:epsmix3}
\end{equation}
\begin{equation}
\hat{\mathbf{y}}_{k}^{'} =
  (1 - \varepsilon^{\lambda}_{i,j})\big(\lambda \mathbf{y}_{i} + (1-\lambda)\mathbf{y}_{j}\big)
  + \frac{\varepsilon^{\lambda}_{i,j}}{C} \cdot \mathbf{1},
\label{eq:lsmix3}
\end{equation}
where $\mathbf{y}_{i}, \mathbf{y}_{j}$ are the \emph{one-hot} targets of the constituent
samples. 

As observed in Table~\ref{tab:epsblend}, \textbf{Margin blending wins.}  It is ahead on $11$ of the $12$ cells in all dataset-method combinations in OSCR Hard, and by $+0.27$ OSCR points on both splits. The formula we used for blending margins turns out to be better in practice, but the success of our method doesn't depend on it.

\begin{table}[h]
\centering
\scriptsize
\begin{tabular}{ll rrr rrr}
\toprule
& & \multicolumn{3}{c}{OSCR-Easy} & \multicolumn{3}{c}{OSCR-Hard} \\
\cmidrule(lr){3-5}\cmidrule(lr){6-8}
Objective & Dataset & coeff. & margin & $\Delta$ & coeff. & margin & $\Delta$ \\
\midrule
\supconmix & CUB & $65.41$ & $65.19$ & $-0.23$ & $59.72$ & $59.86$ & \textcolor{gainred}{$+0.14$} \\
\supconmix & Aircraft & $76.56$ & $76.94$ & \textcolor{gainred}{$+0.38$} & $71.70$ & $72.09$ & \textcolor{gainred}{$+0.38$} \\
\supconmix & Cars & $86.05$ & $86.13$ & \textcolor{gainred}{$+0.08$} & $78.05$ & $78.17$ & \textcolor{gainred}{$+0.12$} \\
\sphormix & CUB & $67.80$ & $68.26$ & \textcolor{gainred}{$+0.45$} & $61.98$ & $62.20$ & \textcolor{gainred}{$+0.21$} \\
\sphormix & Aircraft & $76.85$ & $77.55$ & \textcolor{gainred}{$+0.70$} & $71.50$ & $72.04$ & \textcolor{gainred}{$+0.53$} \\
\sphormix & Cars & $86.92$ & $87.13$ & \textcolor{gainred}{$+0.21$} & $78.80$ & $79.05$ & \textcolor{gainred}{$+0.25$} \\
\midrule
\multicolumn{2}{l}{Mean over the six arms} & $76.60$ & $76.86$ & \textcolor{gainred}{$+0.27$} & $70.29$ & $70.57$ & \textcolor{gainred}{$+0.27$} \\
\bottomrule
\end{tabular}
\caption{\textbf{Margin blending beats coefficient blending on $11$ of $12$ cells.} OSCR
under MSP at shared hyperparameters, mean over $3$ seeds; $\Delta=\text{margin blending}-\text{coeff. blending}$, red where margin blending is ahead.}
\label{tab:epsblend}
\end{table}

\subsection{How does \name{} optimize representation geometry for separability?}
\label{app:asandns}

While intra-class alignment and inter-class uniformity assess representations within known classes, they fail to evaluate how effectively feature representations distinguish known classes from unknown ones. To address this, \citet{sphor} proposed Angular Separability (AS) and Norm Separability (NS):

\begin{align}
\mathrm{AS} &\;=\; \frac{1}{|D_{u}|} \sum_{\mathbf{u} \in D_{u}} \max_{\mathbf{v} \in D_{k}} \left( \frac{\mathbf{u} \cdot \mathbf{v}}{\|\mathbf{u}\| \|\mathbf{v}\|}\right) \qquad (\downarrow), \label{eq:as} \\[4pt]
\mathrm{NS} &\;=\; \mathrm{AUROC}\bigl( \{\|\mathbf{v}\| : \mathbf{v} \in D_k\}, \{\|\mathbf{u}\| : \mathbf{u} \in D_u\} \bigr) \qquad (\uparrow). \label{eq:ns}
\end{align}

\textbf{Angular Separability (AS)} (Eq.~\ref{eq:as}) measures the average cosine similarity between each unknown sample ($\mathbf{u} \in D_{u}$) and its nearest known-class counterpart ($\mathbf{v} \in D_{k}$) in the test set \citep{sphor}. It quantifies the degree to which unknown samples fall into the "familiarity trap," where lower values indicate better angular separation between known and unknown classes.

\textbf{Norm Separability (NS)} (Eq.~\ref{eq:ns}) evaluates the AUROC obtained by using feature vector norms to distinguish known-class test samples from unknown-class samples \citep{sphor}. Higher values signify that unknown classes can be easily detected via a simple, class-agnostic threshold on feature magnitudes.

\begin{table}[t]
\centering
\small
\setlength{\tabcolsep}{4pt}
\begin{tabular}{l l rrr rrr}
\toprule
 & & \multicolumn{3}{c}{AS $\downarrow$} & \multicolumn{3}{c}{NS $\uparrow$} \\
\cmidrule(lr){3-5}\cmidrule(lr){6-8}
arm & data & w/o & $+$\name{} & gain & w/o & $+$\name{} & gain \\
\midrule
\multicolumn{8}{l}{\textbf{easy}} \\
\supcon{} & CUB & $\mathbf{0.8793}$ & $0.8796$ & $-0.0003$ & $69.22$ & $\mathbf{72.21}$ & \textcolor{gainred}{$+2.99$} \\
 & Aircraft & $\mathbf{0.9130}$ & $0.9159$ & $-0.0030$ & $77.59$ & $\mathbf{78.07}$ & \textcolor{gainred}{$+0.48$} \\
 & Cars & $\mathbf{0.8959}$ & $0.8987$ & $-0.0027$ & $\mathbf{84.98}$ & $84.14$ & $-0.84$ \\
\supconmix{} & CUB & $0.8771$ & $\mathbf{0.8766}$ & \textcolor{gainred}{$+0.0005$} & $\mathbf{68.94}$ & $68.52$ & $-0.42$ \\
 & Aircraft & $\mathbf{0.8939}$ & $0.8948$ & $-0.0010$ & $68.05$ & $\mathbf{72.62}$ & \textcolor{gainred}{$+4.57$} \\
 & Cars & $0.8804$ & $0.8804$ & $0.0000$ & $79.95$ & $\mathbf{81.98}$ & \textcolor{gainred}{$+2.03$} \\
\sphor{} & CUB & $\mathbf{0.8589}$ & $0.8695$ & $-0.0106$ & $68.24$ & $\mathbf{72.16}$ & \textcolor{gainred}{$+3.92$} \\
 & Aircraft & $\mathbf{0.9002}$ & $0.9061$ & $-0.0060$ & $63.49$ & $\mathbf{65.38}$ & \textcolor{gainred}{$+1.89$} \\
 & Cars & $\mathbf{0.8837}$ & $0.8865$ & $-0.0028$ & $83.25$ & $\mathbf{83.99}$ & \textcolor{gainred}{$+0.74$} \\
\sphormix{} & CUB & $\mathbf{0.8356}$ & $0.8425$ & $-0.0069$ & $56.60$ & $\mathbf{65.35}$ & \textcolor{gainred}{$+8.75$} \\
 & Aircraft & $\mathbf{0.8618}$ & $0.8668$ & $-0.0050$ & $60.29$ & $\mathbf{64.07}$ & \textcolor{gainred}{$+3.78$} \\
 & Cars & $\mathbf{0.8543}$ & $0.8551$ & $-0.0008$ & $63.13$ & $\mathbf{70.54}$ & \textcolor{gainred}{$+7.41$} \\
\addlinespace[2pt]
\multicolumn{4}{l}{\emph{mean over 12 blocks ($+$\name{} ahead)}} & $-0.0032$ (1) & \multicolumn{2}{c}{} & \textcolor{gainred}{$+2.94$} (10) \\
\midrule
\multicolumn{8}{l}{\textbf{hard}} \\
\supcon{} & CUB & $\mathbf{0.8968}$ & $0.9005$ & $-0.0038$ & $65.08$ & $\mathbf{65.72}$ & \textcolor{gainred}{$+0.64$} \\
 & Aircraft & $\mathbf{0.9298}$ & $0.9307$ & $-0.0009$ & $73.71$ & $\mathbf{74.83}$ & \textcolor{gainred}{$+1.12$} \\
 & Cars & $\mathbf{0.9212}$ & $0.9221$ & $-0.0009$ & $75.24$ & $\mathbf{75.90}$ & \textcolor{gainred}{$+0.66$} \\
\supconmix{} & CUB & $0.8952$ & $\mathbf{0.8947}$ & \textcolor{gainred}{$+0.0005$} & $65.66$ & $\mathbf{66.06}$ & \textcolor{gainred}{$+0.40$} \\
 & Aircraft & $\mathbf{0.9135}$ & $0.9177$ & $-0.0042$ & $72.14$ & $\mathbf{72.46}$ & \textcolor{gainred}{$+0.33$} \\
 & Cars & $\mathbf{0.9006}$ & $0.9038$ & $-0.0032$ & $72.58$ & $\mathbf{73.84}$ & \textcolor{gainred}{$+1.26$} \\
\sphor{} & CUB & $\mathbf{0.8736}$ & $0.8838$ & $-0.0102$ & $64.82$ & $\mathbf{66.74}$ & \textcolor{gainred}{$+1.92$} \\
 & Aircraft & $\mathbf{0.9042}$ & $0.9090$ & $-0.0049$ & $69.61$ & $\mathbf{70.37}$ & \textcolor{gainred}{$+0.77$} \\
 & Cars & $\mathbf{0.9031}$ & $0.9063$ & $-0.0032$ & $74.08$ & $\mathbf{74.54}$ & \textcolor{gainred}{$+0.46$} \\
\sphormix{} & CUB & $\mathbf{0.8566}$ & $0.8641$ & $-0.0075$ & $58.39$ & $\mathbf{62.59}$ & \textcolor{gainred}{$+4.20$} \\
 & Aircraft & $\mathbf{0.8800}$ & $0.8847$ & $-0.0047$ & $65.80$ & $\mathbf{68.00}$ & \textcolor{gainred}{$+2.20$} \\
 & Cars & $\mathbf{0.8761}$ & $0.8806$ & $-0.0044$ & $61.49$ & $\mathbf{66.42}$ & \textcolor{gainred}{$+4.93$} \\
\addlinespace[2pt]
\multicolumn{4}{l}{\emph{mean over 12 blocks ($+$\name{} ahead)}} & $-0.0039$ (1) & \multicolumn{2}{c}{} & \textcolor{gainred}{$+1.57$} (12) \\
\bottomrule
\end{tabular}

\caption{Angular Separability (AS, Eq.~\ref{eq:as}, lower is better) and Norm Separability
(NS, Eq.~\ref{eq:ns}, AUROC in $\%$, higher is better) on the test set, mean of $3$ seeds,
without and with $+$\name{}. Within
each pair the better level is bold. Gain is the improvement in the metric's own direction,
w/o $-$ $+$\name{} for AS and $+$\name{} $-$ w/o for NS, paired by seed; a positive gain is
red.}

\label{tab:geom_spread}
\end{table}

Table~\ref{tab:geom_spread} shows a clear trade-off. Adding \name{}
improves norm separability (NS) by $+2.94$ AUROC points on Easy and
$+1.57$ on Hard on average. This comes at a small cost in angular separability (AS), which drops by
less than $0.5\%$ of its value on both splits ($0.0032$ on Easy and
$0.0039$ on Hard, against levels of $0.84$--$0.93$) The NS gain is several times larger than the AS loss, and the net
effect is better unknown rejection.

\subsection{Hyperparameter Analysis}
\label{app:integration}

\textbf{Validation Rule} We design a "validation rule" to find the optimal hyperparameter $\alpha^{*}$ in the absence of a test dataset. We take the training dataset split into 4-fold to get the validation known class set and the training dataset. We train the model across a range of $\alpha^{*} \in \{8,4,3,2,1.5,1,0.5\}$. Afterwards, we use a different fine-grained dataset's images as the unknown class validation set  (CUB$\rightarrow$ Cars $\rightarrow$ Aircraft $\rightarrow$ CUB). After training, for each of the split, using the MSP post-processor, we take the argmax over the OSCR performance with the known and unknown set. 

We pick the optimal $\alpha$ based on the best performance on these curves as shown in Figure~\ref{fig:xval-conosr-MAIJOR-peek} and Figure~\ref{fig:xval-sphor-MAIJOR-peek}.

\begin{figure}[t]
\centering
\includegraphics[width=\textwidth]{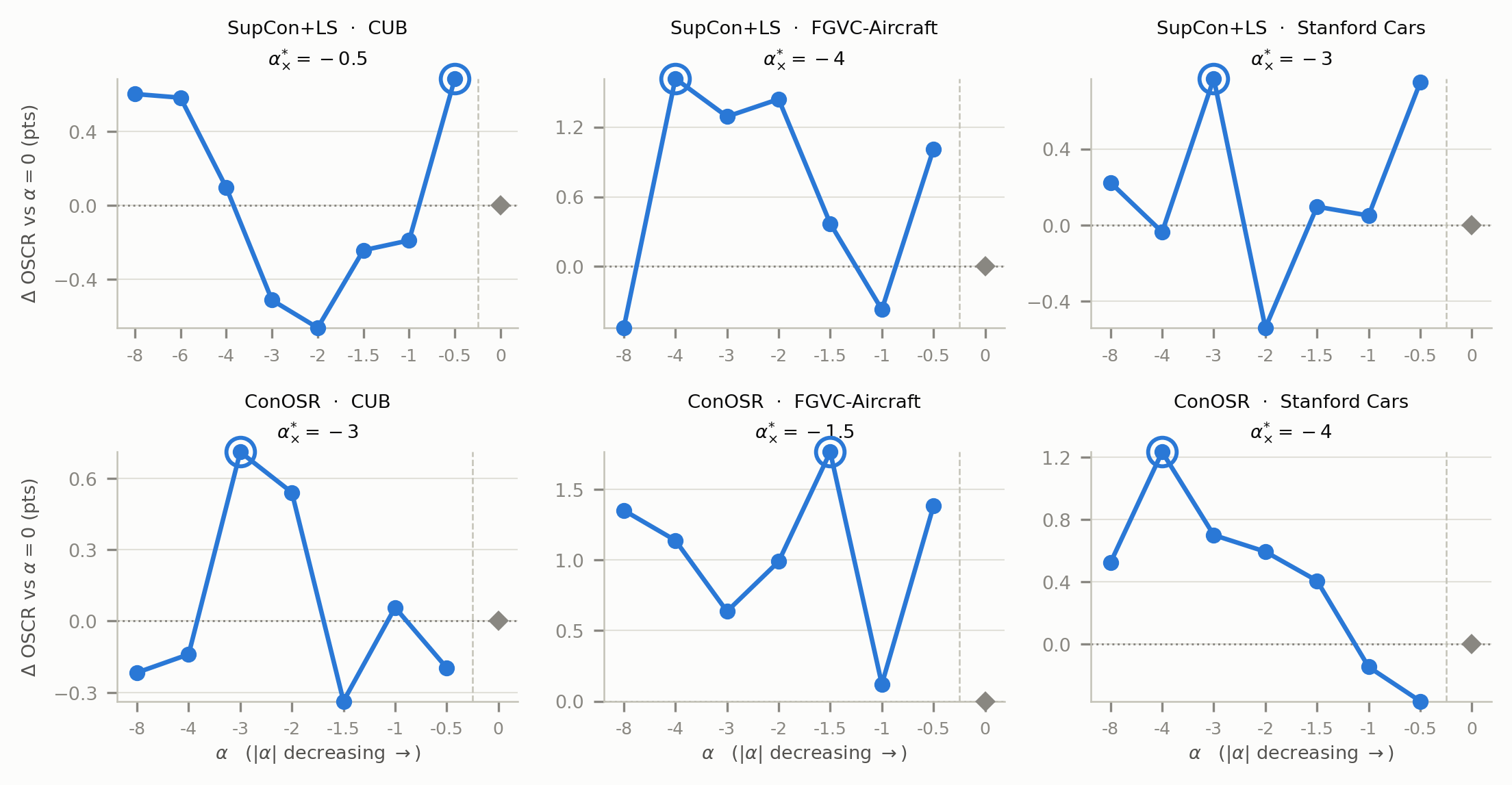}
\caption{\supcon{} and \supconmix{}: Hyperparameter Sweep for $\alpha$
}
\label{fig:xval-conosr-MAIJOR-peek}
\end{figure}

\begin{figure}[t]
\centering
\includegraphics[width=\textwidth]{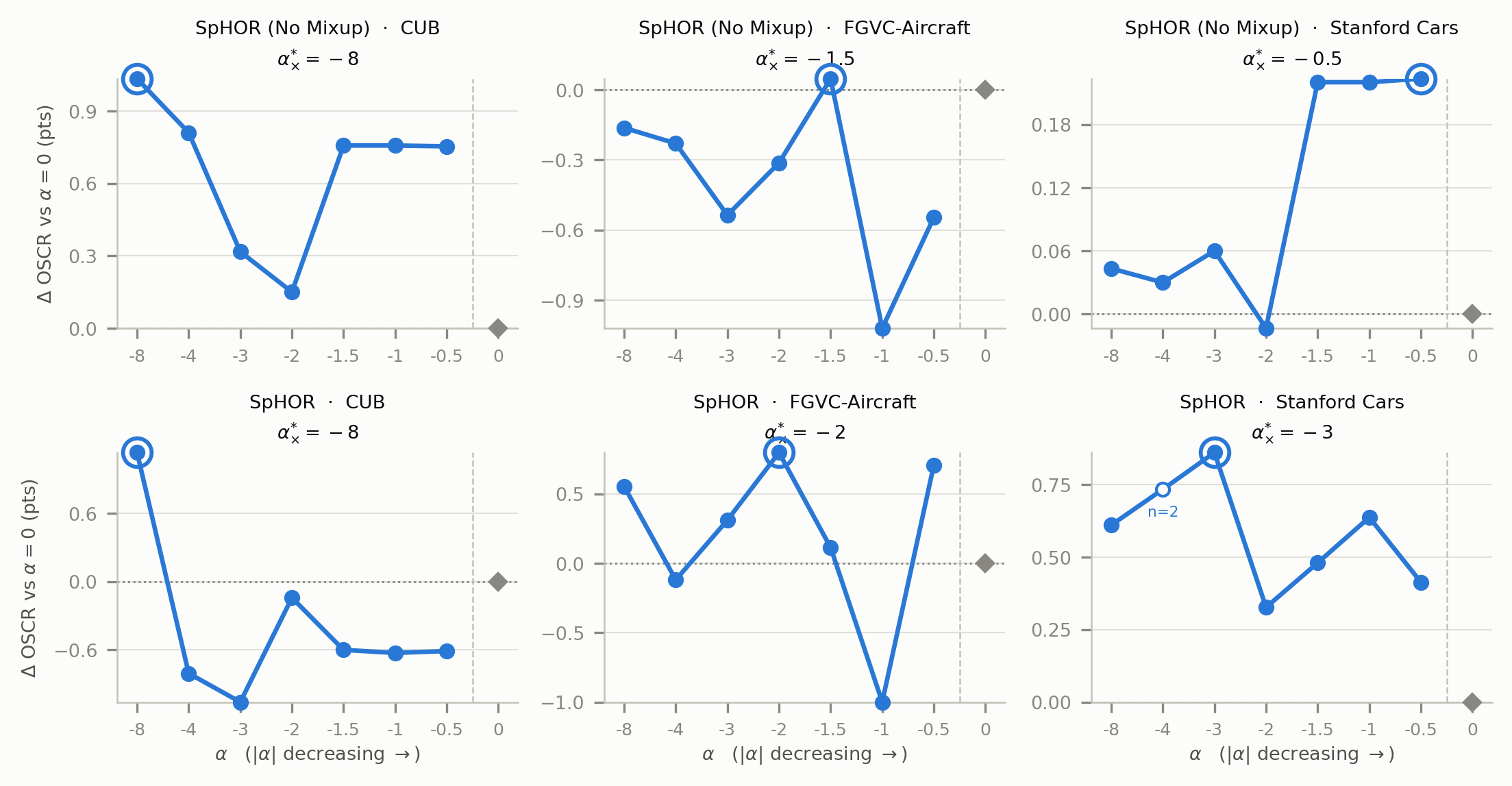}
\caption{\sphor{} and \sphormix{}: Hyperparameter Sweep for $\alpha$}
\label{fig:xval-sphor-MAIJOR-peek}
\end{figure}

% =====================================================================
\section{Methodology: Integration}
\label{app:repl_integration}

We first state \name{} as a mechanism on the label space, then the four base methods in their published form (Section~\ref{sec:base}), and finally the effect of the smoothing factor $\varepsilon_i$ on each of them.   Every base method is recovered exactly at $\varepsilon_i \equiv 0$.

\name{} is a transformation on the label space $\Delta^{C-1}$, applied independently of the loss function. For a sample $i$ with smoothing factor $\varepsilon_i \in [0,1]$:
\begin{equation}
\mathbf{y}_i' := \mathrm{\name{}}(\mathbf{y}_i) = (1 - \varepsilon_i)\,\mathbf{y}_i + \frac{\varepsilon_i}{C} \cdot \mathbf{1}, \qquad \varepsilon_i \in [0,1],
\label{eq:ls2}
\end{equation}
where $\mathbf{1} \in \mathbbm{R}^C$ is a vector of ones. 

This transformation has two key properties:

\textbf{Simplex closure:} If $\mathbf{y}_i \in \Delta^{C-1}$, then $\mathbf{y}_i' \in \Delta^{C-1}$ always sums to $1$:
\begin{equation}
\sum_{k=1}^C y_{ik}' = 1
\label{eq:simplex}
\end{equation}

\textbf{Endpoints:} The transformation reduces to the original one-hot vector or the uniform distribution at its limits:
\begin{equation}
\mathbf{y}_i'\big|_{\varepsilon_i = 0} = \mathbf{y}_i, \qquad \mathbf{y}_i'\big|_{\varepsilon_i = 1} = \frac{1}{C}\mathbf{1}
\label{eq:endpoints}
\end{equation}

\subsection{Base Methods}
\label{sec:base}

\subsubsection{\sphormix{}}
\label{intsphor}

The \sphormix{} framework employs the Von Mises-Fisher Alignment Loss ($\mathcal{L}_{\mathrm{vMFAL}}$)
to map features onto a hyperspherical embedding space. For a dataset of $N$ samples
$(x_i, y_i)$, the loss is defined as:
\begin{equation}
\mathcal{L}_{\mathrm{vMFAL}} := - \frac{1}{N} \sum_{i=1}^N \underbrace{\sum_{k=1}^C S_{ik} \log P_{ik}}_{=: \mathcal{L}_{\mathrm{vMFAL}}^{(i)}} %+ R(\{\boldsymbol{\mu}_k\}_{k=0}^{C})
\label{eq:vmfal}
\end{equation}
where the predicted probability $P_{ik}$ for sample $i$ and class $k$ is given by the centered
cosine similarity:
\begin{equation}
P_{ik} = \frac{\exp(z_i^\top \boldsymbol{\mu}_k / \tau)}{\sum_{j=1}^C \exp(z_i^\top \boldsymbol{\mu}_j / \tau)}
\label{eq:Pik}
\end{equation}
In this formulation, $\boldsymbol{\mu}_k$ denotes the label
embedding for class $k$, and $\tau$ is a temperature scaling parameter. Both the latent feature
vector $z_i$ and the label embeddings $\boldsymbol{\mu}_k$ are $L_2$-normalized. The term $S_{ik}$
represents the soft label similarity, defined as:
\begin{equation}
S_{ik} = \frac{y_{ik}}{\sum_{j=1}^C y_{ij}} = y_{ik}
\label{eq:sik-base}
\end{equation}

To simulate the open space, \sphormix{} additionally trains on Mixup-generated virtual samples. For a pair $(a,b)$ and $\lambda \sim \mathrm{Beta}(2,2)$,
\begin{equation}
\tilde{\mathbf{x}}_j = \lambda \mathbf{x}_a + (1-\lambda)\mathbf{x}_b,
\qquad
\tilde{\mathbf{y}}_j = \lambda \mathbf{y}_a + (1-\lambda)\mathbf{y}_b \;\in\; \Delta^{C-1},
\label{eq:mixup}
\end{equation}
and $\mathcal{L}_{\mathrm{vMFAL}}^{\text{Mixup}(j)}$ is the per-sample term of \eqref{eq:vmfal} 
evaluated at $(\tilde{z}_j, \tilde{S}_{jk} = \tilde{y}_{jk})$:
\begin{equation}
\mathcal{L}_{\sphormix} :=  -\frac{1}{N} \sum_{i=1}^N \mathcal{L}_{\mathrm{vMFAL}}^{(i)}
\;-\; \frac{1}{N} \sum_{j=1}^N \mathcal{L}_{\mathrm{vMFAL}}^{\text{Mixup}(j)}
\;+\; R(\{\boldsymbol{\mu}_k\}_{k=1}^{C})
\label{eq:sphor}
\end{equation}

$R$ here is a regularization loss based on the label embeddings. 

% ...............................................................
\subsubsection{Ablation without Mixup: \sphor{}}
\label{sphormix}

To verify that our results do not hinge on Mixup, we also report
\sphor{}, which drops the Mixup term from Eq.~\eqref{eq:sphor}:
\begin{equation}
\mathcal{L}_{\sphor} :=  -\frac{1}{N} \sum_{i=1}^N \mathcal{L}_{\mathrm{vMFAL}}^{(i)}+\; R(\{\boldsymbol{\mu}_k\}_{k=1}^{C})
\end{equation}

\subsubsection{Supervised Contrastive Learning}
\label{conosr}

The \supcon{} framework adopts Supervised Contrastive Learning with Soft Targets (SupCon-ST)
as its representation learning objective. For a batch of $N$ samples, given the
$L_2$-normalized projection features $\{z_i\}_{i=1}^N$ obtained after RandAugment, the loss is
defined as:
\begin{equation}
\mathcal{L}_{\mathrm{SupCon\text{-}ST}} := - \frac{1}{N} \sum_{i=1}^N \underbrace{\sum_{j \neq i} S_{ij} \log P_{ij}}_{=: \mathcal{L}_{\mathrm{SupCon\text{-}ST}}^{(i)}}
\label{eq:supconst}
\end{equation}
where the pairwise prediction $P_{ij}$ is the softmax-normalized cosine similarity over all
other samples in the batch:
\begin{equation}
P_{ij} = \frac{\exp(z_i^\top z_j / \tau)}{\sum_{k \neq i} \exp(z_i^\top z_k / \tau)}
\label{eq:Pij}
\end{equation}
and $S_{ij}$ denotes the soft pairwise target derived from the label vectors:
\begin{equation}
S_{ij} = \frac{s(\mathbf{y}_i, \mathbf{y}_j)}{\sum_{k \neq i} s(\mathbf{y}_i, \mathbf{y}_k)},
\qquad
s(\mathbf{y}_i, \mathbf{y}_j) = \frac{\mathbf{y}_i^\top \mathbf{y}_j}{\|\mathbf{y}_i\| \|\mathbf{y}_j\|}
\label{eq:sij-base}
\end{equation}

\begin{equation}
\mathcal{L}_{\supcon} := - \frac{1}{N}\sum_{i=1}^N \mathcal{L}_{\mathrm{SupCon\text{-}ST}}^{(i)}
\end{equation}

% ...............................................................
\subsubsection{\supconmix}
\label{conosrmix}

\supconmix{} additionally enforces contrast against Mixup-generated virtual samples
$\{\tilde{z}_j\}_{j=1}^N$ that simulate the open space, constructed as in \eqref{eq:mixup}.
The weights are applied to both the augmented batch and the Mixup batch:
\begin{equation}
\mathcal{L}_{\supconmix} := - \frac{1}{N}\sum_{i=1}^N  \mathcal{L}_{\mathrm{SupCon\text{-}ST}}^{(i)}
- \frac{1}{N}\sum_{j=1}^N \mathcal{L}_{\mathrm{SupCon\text{-}ST}}^{\mathrm{Mixup}(j)}
\end{equation}

Note that the two terms are calculated independently: clean anchors are 
contrasted exclusively with clean samples, while Mixup anchors are contrasted 
only with Mixup samples. Unlike the original ConOSR implementation, which 
computes a single loss over the combined union of both batches, we separate 
them. This separation ensures that contrast labels within each term share 
a uniform norm, $1$ for clean samples and $\sqrt{\lambda^2 + (1-\lambda)^2}$ 
for Mixup samples. Consequently, Lemma~\ref{lem:norm} holds for both terms, 
allowing the target to simplify into the inner-product expression in 
\eqref{eq:sij-inner}. Concatenating the batches jointly would introduce 
varying contrast norms across groups, breaking this reduction.

\subsubsection{Integration with \supcon{} and \supconmix{}}
\label{integrationsupcon}
The label smoothing is only applied to the anchor labels, and not to the contrast labels. All the contrast labels $\mathbf{y}_j$ remains one-hot for each anchor label. In this section we will derive the relationship between the smoothing coefficient $\epsilon_i$ and the soft pairwise target $S_{ij}$. We show that the relationship is different depending on if the contrast sample $i$ is a positive (same class label) or negative (different class label)

We first show that this normalization has no effect in Equation~\ref{eq:sij-base}. 
\begin{lemma}[Reduction to inner products]
\label{lem:norm}
Let $\mathbf{y}_i' \in \Delta^{C-1}$, and let all contrast labels have equal norm,
$\|\mathbf{y}_k\| = c > 0$ for all $k \neq i$. If
$\sum_{k \neq i} (\mathbf{y}_i')^\top \mathbf{y}_k > 0$, then
\begin{equation}
S_{ij} = \frac{(\mathbf{y}_i')^\top \mathbf{y}_j}{\sum_{k \neq i} (\mathbf{y}_i')^\top \mathbf{y}_k}.
\label{eq:sij-inner}
\end{equation}
\end{lemma}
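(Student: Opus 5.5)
The plan is to substitute the definition of cosine similarity from Eq.~\eqref{eq:sij-base}, with the smoothed anchor label $\mathbf{y}_i'$ in place of $\mathbf{y}_i$, and show that every normalizing factor cancels between numerator and denominator. Concretely, for each contrast index $k \neq i$ we write
\begin{equation}
s(\mathbf{y}_i', \mathbf{y}_k) = \frac{(\mathbf{y}_i')^\top \mathbf{y}_k}{\|\mathbf{y}_i'\|\,\|\mathbf{y}_k\|} = \frac{(\mathbf{y}_i')^\top \mathbf{y}_k}{c\,\|\mathbf{y}_i'\|},
\end{equation}
where the second equality uses the hypothesis that all contrast labels share the norm $c$. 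The key point is that the factor $1/(c\,\|\mathbf{y}_i'\|)$ does not depend on $k$.

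Before dividing, we must check that this factor is well defined. Since $\mathbf{y}_i' \in \Delta^{C-1}$, its entries are nonnegative and sum to one, so $\mathbf{y}_i' \neq \mathbf{0}$ and $\|\mathbf{y}_i'\| > 0$. Together with $c > 0$, this makes every cosine similarity finite.

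Next we plug these expressions into both the numerator $s(\mathbf{y}_i',\mathbf{y}_j)$ and each summand of the denominator $\sum_{k\neq i} s(\mathbf{y}_i',\mathbf{y}_k)$. We then pull the common constant $1/(c\,\|\mathbf{y}_i'\|)$ out of the sum and cancel it against the same constant in the numerator, which yields Eq.~\eqref{eq:sij-inner}.

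The only step that needs care is that this ratio is legitimate, i.e.\ that the denominator is nonzero. This is exactly where the assumption $\sum_{k\neq i}(\mathbf{y}_i')^\top \mathbf{y}_k > 0$ enters. Multiplying it by the positive constant shows that the original cosine-similarity denominator is also strictly positive, so both expressions for $S_{ij}$ are defined and equal.

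We also intend to remark on when the hypotheses hold. For clean contrast labels they are one-hot, so $c = 1$. For the separated Mixup term they all have norm $\sqrt{\lambda^2+(1-\lambda)^2}$, because $\lambda$ is shared across the batch. The positivity condition holds automatically when $\varepsilon_i > 0$, since every entry of $\mathbf{y}_i'$ is then positive and every contrast label is a nonzero nonnegative vector. None of these steps is a real obstacle; the argument is a direct cancellation.
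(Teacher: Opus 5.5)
Your proof is correct and is the same cancellation argument the paper uses: the factor $1/(c\,\|\mathbf{y}_i'\|)$ is common to the numerator and every term of the denominator. Your extra checks, that $\|\mathbf{y}_i'\|>0$ and that the positivity hypothesis keeps the denominator nonzero, make the argument more explicit.

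One caveat concerns your side remark, which the paper also makes. A Mixup contrast label built from two samples of the same class, or from $\pi(i)=i$, is one-hot with norm $1$ rather than $\sqrt{\lambda^2+(1-\lambda)^2}$. So the equal-norm hypothesis does not hold automatically for the Mixup term, although this does not affect the lemma itself.
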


\begin{proof}
For every $k \neq i$,
$s(\mathbf{y}_i', \mathbf{y}_k) = (\mathbf{y}_i')^\top \mathbf{y}_k \,/\, (c\,\|\mathbf{y}_i'\|)$.
The factor $1/(c\,\|\mathbf{y}_i'\|)$ is common to the numerator and to every term of the
denominator of \eqref{eq:sij-base}, and cancels.
\end{proof}

\begin{proposition}[Smoothed \supcon{} target]
\label{prop:supcon}
If $\varepsilon_i \in (0,1]$ or $n_i^{+} \geq 1$, then
\begin{equation}
S_{ij} = \frac{(1-\varepsilon_i)\,\mathbbm{1}\{y_j = y_i\} + \varepsilon_i / C}
              {(1-\varepsilon_i)\, n_i^{+} + \varepsilon_i K / C}.
\label{eq:sij-ls}
\end{equation}
\end{proposition}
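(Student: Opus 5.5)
The plan is to reduce the cosine-similarity target of \eqref{eq:sij-base} to a ratio of inner products using Lemma~\ref{lem:norm}, and then evaluate those inner products explicitly. Here $n_i^{+}$ is the number of contrast samples $k\neq i$ with $y_k=y_i$ (the positives of anchor $i$), and $K$ is the number of contrast samples $k\neq i$.

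In the \supcon{} setting every contrast label $\mathbf{y}_k$ is one-hot, so $\|\mathbf{y}_k\|=1$ for all $k\neq i$. This is exactly the equal-norm hypothesis of Lemma~\ref{lem:norm} with $c=1$. In the Mixup branch of \supconmix{} the contrast norms all equal $\sqrt{\lambda^2+(1-\lambda)^2}$, which is why the two batches are kept separate. For that branch I would treat the statement as concerning the one-hot case stated in Prop.~\ref{prop:dial}.

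Next I compute the numerators. With $\mathbf{y}_i'=(1-\varepsilon_i)\mathbf{y}_i+\frac{\varepsilon_i}{C}\mathbf{1}$ and $\mathbf{y}_j$ one-hot, the inner product picks out the $y_j$-th coordinate of $\mathbf{y}_i'$:
\begin{equation*}
(\mathbf{y}_i')^\top\mathbf{y}_j=(1-\varepsilon_i)\,\mathbbm{1}\{y_j=y_i\}+\frac{\varepsilon_i}{C}.
\end{equation*}
Summing over $k\neq i$ gives the denominator. The indicator terms contribute $(1-\varepsilon_i)n_i^{+}$ and the constant terms contribute $K\varepsilon_i/C$, so
\begin{equation*}
\sum_{k\neq i}(\mathbf{y}_i')^\top\mathbf{y}_k=(1-\varepsilon_i)\,n_i^{+}+\varepsilon_i K/C.
\end{equation*}

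The only step needing care is checking the positivity hypothesis of Lemma~\ref{lem:norm}, namely that this denominator is strictly positive; the stated disjunction is tailored to guarantee it. If $\varepsilon_i\in(0,1]$, then $\varepsilon_i K/C>0$ because $K\geq 1$, and the first term is nonnegative, so the sum is positive. If instead $\varepsilon_i=0$, the sum reduces to $n_i^{+}$, which is positive precisely when $n_i^{+}\geq1$. This uses the implicit assumptions $K\geq1$ and $\varepsilon_i\in[0,1]$, which I would state explicitly.

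With positivity established, Lemma~\ref{lem:norm} applies and \eqref{eq:sij-inner} becomes \eqref{eq:sij-ls} directly.
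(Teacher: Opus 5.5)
Your proposal is correct and follows the paper's proof essentially step for step: compute $(\mathbf{y}_i')^\top\mathbf{y}_k=(1-\varepsilon_i)\mathbbm{1}\{y_k=y_i\}+\varepsilon_i/C$, sum over $k\neq i$ to get $(1-\varepsilon_i)n_i^{+}+\varepsilon_i K/C$, check that this is positive, and invoke Lemma~\ref{lem:norm} with $c=1$. Your explicit case split for positivity, and your statement of the implicit assumptions $K\geq 1$ and $\varepsilon_i\in[0,1]$, are a slightly more careful version of the paper's one-line remark that the sum is positive under the stated condition.
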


\begin{proof}
Taking the inner product of \eqref{eq:ls1} with a one-hot contrast label $\mathbf{y}_k$,
and using $\mathbf{y}_i^\top \mathbf{y}_k = \mathbbm{1}\{y_k = y_i\}$ and
$\mathbf{1}^\top \mathbf{y}_k = 1$,
\begin{equation}
(\mathbf{y}_i')^\top \mathbf{y}_k = (1-\varepsilon_i)\,\mathbbm{1}\{y_k = y_i\} + \frac{\varepsilon_i}{C}.
\label{eq:inner}
\end{equation}
Summing over $k \neq i$ gives $(1-\varepsilon_i)\, n_i^{+} + \varepsilon_i K / C$, which is
positive under the stated condition. Lemma~\ref{lem:norm} with $c = 1$ yields \eqref{eq:sij-ls}.
\end{proof}

At $\varepsilon_i = 0$, \eqref{eq:sij-ls} recovers the published target
$S_{ij} = \mathbbm{1}\{y_j = y_i\} / n_i^{+}$; at $\varepsilon_i = 1$ it is uniform,
$S_{ij} = 1/K$; and $\sum_{j \neq i} S_{ij} = 1$ for all $\varepsilon_i$.

\begin{proposition}[Monotone mass transfer]
\label{prop:monotone}
Let $1 \leq n_i^{+} < K$. Increasing $\varepsilon_i$ moves target mass from
positives to negatives: every positive's weight decreases, every negative's
weight increases, and
\begin{equation}
\frac{S_{i,\mathrm{pos}}}{S_{i,\mathrm{neg}}} = 1 + \frac{C\,(1-\varepsilon_i)}{\varepsilon_i}
\label{eq:ratio}
\end{equation}
decreases strictly in $\varepsilon_i \in (0,1]$.
\end{proposition}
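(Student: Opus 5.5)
We would work directly from the closed form in Prop.~\ref{prop:supcon}. Write $D(\varepsilon_i) := (1-\varepsilon_i)\,n_i^{+} + \varepsilon_i K/C$ for the common denominator, which is positive under the stated hypothesis $n_i^{+}\ge 1$. The weights then read
\[
S_{i,\mathrm{pos}} = \frac{(1-\varepsilon_i) + \varepsilon_i/C}{D(\varepsilon_i)}, \qquad S_{i,\mathrm{neg}} = \frac{\varepsilon_i/C}{D(\varepsilon_i)}.
\]
The ratio identity \eqref{eq:ratio} is immediate, since $D$ cancels. This gives
\[
\frac{S_{i,\mathrm{pos}}}{S_{i,\mathrm{neg}}} = \frac{C(1-\varepsilon_i)+\varepsilon_i}{\varepsilon_i} = 1 + \frac{C(1-\varepsilon_i)}{\varepsilon_i}.
\]
Its strict decrease on $(0,1]$ follows because $(1-\varepsilon_i)/\varepsilon_i = 1/\varepsilon_i - 1$ is strictly decreasing and $C>0$.

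For the individual weights, we would differentiate each quotient in $\varepsilon_i$; a Möbius-map monotonicity argument would do equally well. The numerator $N_{\mathrm{neg}}=\varepsilon_i/C$ and $D$ are affine with $D'(\varepsilon_i) = K/C - n_i^{+}$. The sign of the derivative of $N_{\mathrm{neg}}/D$ is that of $N_{\mathrm{neg}}'D - N_{\mathrm{neg}}D'$:
\[
\tfrac1C\big[(1-\varepsilon_i)n_i^{+} + \varepsilon_i K/C\big] - \tfrac{\varepsilon_i}{C}\big(K/C - n_i^{+}\big) = n_i^{+}/C > 0.
\]
Hence each negative's weight increases strictly. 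Similarly, for $N_{\mathrm{pos}} = 1 - \varepsilon_i(1-1/C)$, the sign of the derivative is $N_{\mathrm{pos}}'D - N_{\mathrm{pos}}D'$. The $\varepsilon_i$ terms should cancel, since the cross terms of two affine functions leave only a constant determinant. We expect this to give $-(K-n_i^{+})/C$, which is negative precisely because $n_i^{+} < K$, i.e.\ at least one negative exists.

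As a consistency check, and as an alternative route for the positives, the normalization $\sum_{j\neq i} S_{ij}=1$ gives $n_i^{+} S_{i,\mathrm{pos}} + (K-n_i^{+}) S_{i,\mathrm{neg}} = 1$. So if each negative weight strictly increases and $K-n_i^{+}\ge 1$, the positive weight must strictly decrease. This bypasses the second determinant computation, and we would present it that way. The only delicate point is the bookkeeping of the hypotheses. Here $K$ denotes the contrast-set size, so $K-n_i^{+}$ is the number of negatives, and $n_i^{+}\ge1$ keeps $D>0$ on the closed interval, including at the endpoint $\varepsilon_i\to 0$. Once this is noted, the argument is purely algebraic, and the ``mass transfer'' phrasing follows from the conserved total mass.
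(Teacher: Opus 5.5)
Your proof is correct and follows essentially the paper's route: the ratio identity comes from the shared denominator in Prop.~\ref{prop:supcon}, and the row-sum constraint $\sum_{j\neq i}S_{ij}=1$ turns the behaviour of the ratio into ``positives lose, negatives gain.'' The only difference is in how the individual weights are handled. You certify the negatives' increase by an explicit quotient-rule computation (sign $n_i^{+}/C>0$) before invoking normalization, whereas the paper reads both directions off the decreasing ratio in one line. Your version makes visible where $n_i^{+}\ge 1$ and $n_i^{+}<K$ are actually needed.
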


\begin{proof}
Both terms in \eqref{eq:sij-ls} share a denominator, so the ratio
\eqref{eq:ratio} is immediate and clearly decreasing. Since each row of $S$
sums to 1, a smaller positive-to-negative ratio means positives lose mass
and negatives gain it.
\end{proof}

In summation, for each anchor, increasing smoothing
therefore steadily shifts target weight $S_{ij}$ from positives to negatives, and decreasing it sharpens the target back toward the positives.

\section{Alignment and Uniformity: Formal Statements and Proofs}
\label{supp:au_decomposition}

\subsection{Decomposition of \sphor{} and \sphormix{}}
The \sphor{} objective \citep[Eqs.~7,~10]{sphor} is
\begin{equation}
\mathcal{L}_{\mathrm{SpHOR}} = \mathcal{L}_{\mathrm{vMF}} + \mathcal{R}_{\mathrm{Ortho}},
\end{equation}
where
\begin{equation}
\mathcal{L}_{\mathrm{vMF}}
= -\frac{1}{N}\sum_{i=1}^{N}\sum_{k=1}^{C} S_{ik}\log P_{ik},
\qquad
P_{ik} = \frac{\exp(\mathbf z_i^\top\boldsymbol\mu_k/\tau)}
{\sum_{j=1}^{C}\exp(\mathbf z_i^\top\boldsymbol\mu_j/\tau)},
\qquad
S_{ik} = \frac{\tilde y_{ik}}{\sum_{j=1}^{C}\tilde y_{ij}},
\end{equation}
\begin{equation}
\mathcal{R}_{\mathrm{Ortho}}
= \log \frac{1}{C(C-1)} \sum_{k\neq l}
\exp\!\Big(\frac{(\boldsymbol\mu_k^\top\boldsymbol\mu_l)^2}{\tau}\Big).
\end{equation}
$\mathcal{R}_{\mathrm{Ortho}}$ depends only on the prototypes
$\{\boldsymbol\mu_k\}_{k=1}^{C}$; it contains neither the targets
$\tilde{\mathbf y}_i$ (hence no $S_{ik}$) nor the projections $\mathbf z_i$.
The targets therefore enter $\mathcal{L}_{\mathrm{SpHOR}}$ only through
$\mathcal{L}_{\mathrm{vMF}}$, which is the target-dependent loss decomposed
in Table~\ref{tab:align_unif}. Expanding the log in $\mathcal{L}_{\mathrm{vMF}}$,
\begin{align*}
\mathcal{L}_{\mathrm{vMF}}
&= -\frac{1}{N}\sum_{i=1}^{N}\left(
\sum_{k=1}^{C} S_{ik}\,\frac{\mathbf z_i^\top\boldsymbol\mu_k}{\tau}
-\Big(\sum_{k=1}^{C} S_{ik}\Big)
\log\sum_{j=1}^{C}\exp\!\Big(\frac{\mathbf z_i^\top\boldsymbol\mu_j}{\tau}\Big)
\right) \\
&= \underbrace{-\frac{1}{N}\sum_{i=1}^{N}\sum_{k=1}^{C}
S_{ik}\,\frac{\mathbf z_i^\top\boldsymbol\mu_k}{\tau}}_{\text{Alignment}}
\;+\;
\underbrace{\frac{1}{N}\sum_{i=1}^{N}
\log\sum_{j=1}^{C}\exp\!\Big(\frac{\mathbf z_i^\top\boldsymbol\mu_j}{\tau}\Big)}_{\text{Uniformity}},
\end{align*}
using $\sum_{k=1}^{C} S_{ik}=1$, which holds by construction of $S_{ik}$;
this matches \citet[Eq.~9]{sphor}. Since
$\mathbf z_i,\boldsymbol\mu_k\in\mathbb S^{p-1}$,
$\mathbf z_i^\top\boldsymbol\mu_k = 1-\tfrac12\|\mathbf z_i-\boldsymbol\mu_k\|_2^2$.
Substituting, the constants $-1/\tau$ and $+1/\tau$ cancel:
\begin{equation}
\mathcal{L}_{\mathrm{vMF}}
= \frac{1}{N}\sum_{i=1}^{N}\left(
\frac{1}{2\tau}\sum_{k=1}^{C} S_{ik}\,\|\mathbf z_i-\boldsymbol\mu_k\|_2^2
+\log\sum_{j=1}^{C}\exp\!\Big(-\frac{\|\mathbf z_i-\boldsymbol\mu_j\|_2^2}{2\tau}\Big)
\right),
\end{equation}
which is the prototype row of Table~\ref{tab:align_unif}. Hence
\begin{equation}
\mathcal{L}_{\mathrm{SpHOR}}
= \frac{1}{N}\sum_{i=1}^{N}\Big(\mathcal{L}^{(i)}_{\mathrm{align}}
+ \mathcal{L}^{(i)}_{\mathrm{unif}}\Big) + \mathcal{R}_{\mathrm{Ortho}},
\end{equation}
where only $\mathcal{L}^{(i)}_{\mathrm{align}}$ depends on the targets.
For \sphormix{}, the batch additionally contains the mixup samples
\citep[Eq.~3]{sphor}; the derivation is unchanged.

\subsection{Decomposition of \supcon{} and \supconmix{}}
\begin{equation}
\mathcal{L}_{\mathrm{scst}}
= -\frac{1}{N}\sum_{i=1}^{N}\sum_{j\neq i} S_{ij}\log P_{ij},
\qquad
P_{ij} = \frac{\exp(\mathbf z_i^\top\mathbf z_j/\tau)}
{\sum_{n\neq i}\exp(\mathbf z_i^\top\mathbf z_n/\tau)},
\qquad
S_{ij} = \frac{s(\tilde{\mathbf y}_i,\tilde{\mathbf y}_j)}
{\sum_{n\neq i} s(\tilde{\mathbf y}_i,\tilde{\mathbf y}_n)} .
\end{equation}
Expanding the log,
\begin{align*}
\mathcal{L}_{\mathrm{scst}}
&= -\frac{1}{N}\sum_{i=1}^{N}\left(
\sum_{j\neq i} S_{ij}\,\frac{\mathbf z_i^\top\mathbf z_j}{\tau}
-\Big(\sum_{j\neq i} S_{ij}\Big)
\log\sum_{n\neq i}\exp\!\Big(\frac{\mathbf z_i^\top\mathbf z_n}{\tau}\Big)
\right) \\
&= \underbrace{-\frac{1}{N}\sum_{i=1}^{N}\sum_{j\neq i}
S_{ij}\,\frac{\mathbf z_i^\top\mathbf z_j}{\tau}}_{\text{Alignment}}
\;+\;
\underbrace{\frac{1}{N}\sum_{i=1}^{N}
\log\sum_{n\neq i}\exp\!\Big(\frac{\mathbf z_i^\top\mathbf z_n}{\tau}\Big)}_{\text{Uniformity}},
\end{align*}
using $\sum_{j\neq i} S_{ij}=1$, which holds by construction of $S_{ij}$;
this matches \citet[Eq.~17]{conosr}. Since
$\mathbf z_i\in\mathbb S^{p-1}$,
$\mathbf z_i^\top\mathbf z_j = 1-\tfrac12\|\mathbf z_i-\mathbf z_j\|_2^2$.
Substituting, the constants $-1/\tau$ and $+1/\tau$ cancel:
\begin{equation}
\mathcal{L}_{\mathrm{scst}}
= \frac{1}{N}\sum_{i=1}^{N}\left(
\frac{1}{2\tau}\sum_{j\neq i} S_{ij}\,\|\mathbf z_i-\mathbf z_j\|_2^2
+\log\sum_{n\neq i}\exp\!\Big(-\frac{\|\mathbf z_i-\mathbf z_n\|_2^2}{2\tau}\Big)
\right),
\end{equation}
which is the batch row of Table~\ref{tab:align_unif}. For \supconmix{},
the batch additionally contains the mixup samples, whose targets are
the mixed labels \citep[Eq.~3]{conosr}; the derivation is unchanged.

\end{document}